\newcommand{\vasilii}[1]{\textcolor{blue}{#1}}
\renewcommand{\emph}[1]{\textit{#1}}
\begin{document}

\title{Multi-class Probabilistic Bounds for Self-learning}

\author{Vasilii Feofanov, Emilie Devijver, Massih-Reza Amini\\
\{Firstname.LastName\}@univ-grenoble-alpes.fr  \\
Univ. Grenoble Alpes, CNRS, Grenoble INP, LIG\\
Grenoble, France}

\maketitle

\begin{abstract}
Self-learning is a classical approach for learning with both labeled and unlabeled observations which consists in giving pseudo-labels to unlabeled training instances with a confidence score over a predetermined threshold. At the same time, the pseudo-labeling technique is prone to error and runs the risk of adding noisy labels into unlabeled training data. In this paper, we present a probabilistic framework for analyzing self-learning in the multi-class classification scenario with partially labeled data. First, we derive a transductive bound over the risk of the multi-class majority vote classifier. Based on this result, we propose to automatically choose the threshold for pseudo-labeling that minimizes the transductive bound.
Then, we introduce a mislabeling error model to analyze the error of the majority vote classifier in the case of the pseudo-labeled data. We derive a probabilistic C-bound over the majority vote error when an imperfect label is given. Empirical results on different data sets show the effectiveness of our framework compared to several state-of-the-art semi-supervised approaches.
\end{abstract}

\section{Introduction}

We consider classification problems where the scarce labeled training set comes along with a huge number of unlabeled training examples.
This is for example the case in web-oriented applications where a huge number of unlabeled observations arrive sequentially, and there is not enough time to manually label them all.

In this context, the use of traditional supervised approaches trained on available labeled data usually leads to poor learning performance.
In semi-supervised learning \citep{Chapelle:2010}, it is generally assumed that unlabeled training examples contain valuable information about the prediction problem, so the aim  is to exploit \textit{both} available labeled and unlabeled training observations in order to provide an improved solution.
The self-learning\footnote{It is also known as self-training or self-labeling.} \citep{Tur:2005,Amini:15} is a classical approach to classify partially labeled data in a supervised fashion, where the training set is augmented by iteratively assigning pseudo-labels to unlabeled examples with the confidence score above a certain threshold. However, fixing this threshold is a bottleneck of this approach. In reality, at every iteration, the self-learning algorithm injects some noise in labeling, so the question would be how to optimally choose the threshold to minimize the mislabeling probability.

In this paper, we tackle this problem from a theoretical point of view for the multi-class classification case and analyze the behavior of majority vote classifiers (also known as Bayes classifiers, including Random Forest \citep{Lorenzen:2019}, AdaBoost \citep{Germain:2015}, SVM \citep{Fakeri-Tabrizi:2015} and neural networks \citep{Letarte:2019}) for semi-supervised learning. The majority vote classifier is well studied in the binary case, where a classical approach is to bound the majority vote risk indirectly by twice the risk of related stochastic Gibbs classifier \citep{Langford:2003,Begin:2014}. However, the voters may compensate the errors of each other, so the majority vote risk will be much smaller than the Gibbs risk.

In the transductive setting \citep[p. 339]{Vapnik:1998}, where the aim is to correctly classify unlabeled training examples, \citet{Feofanov:2019} derived a bound for the multi-class majority vote classifier by analyzing distribution of the class vote, focusing on the class confusion matrix as an error indicator as proposed by \citet{Morvant:2012:ICML}. This bound is obtained by analytically solving a linear program and it comes out that in the case when the majority vote classifier makes most of its errors on examples with low class vote, the obtained bound is tight. This result is proposed to develop a new multi-class self-learning algorithm where the threshold is automatically found based on the proposed transductive bound.
%
%
Our paper extends this work by deriving the transductive bounds in the probabilistic framework. In this case, the transductive bound is estimated by assigning soft labels for the unlabeled set, which is more effective in practice as pointed out by \cite{Feofanov:2019}, so it bridges the gap between the theoretical analyzes and the application.
Subsequently, we theoretically analyze the behavior of the majority vote classifier after the inclusion of pseudo-labeled training examples by self-learning. Even when the threshold is optimally chosen, the pseudo-labels may still be erroneous, so the question is how to evaluate the risk in this noisy case. For this,
we take explicitly into account possible mislabeling by considering a mislabeling model of \cite{Chittineni:1980}. At first, we show the connection between the classification error of the true and the imperfect label. Then, we derive a new probabilistic C-bound over the error of the multi-class majority vote classifier in the presence of imperfect labels. This bound is based on the mean and the variance of the prediction margin \citep{Lacasse:2007}, so it reflects both the individual strength of voters and their correlation in prediction.

The rest of this paper is organized as follows. Section \ref{sec:rel-work} provides an overview of the related work. In Section \ref{sec:framework} we introduce the problem statement and the proposed framework. In Section \ref{sec:tr-study} we present a probabilistic bound over the transductive risk of the multi-class majority vote classifier and describe the extended self-learning algorithm that learns the threshold using the proposed bound. Section \ref{sec:c-bound} shows how to derive the C-bound in the probabilistic framework taking into account mislabeling errors. In Section \ref{sec:num-exper}, we present empirical evidence showing that the proposed self-learning strategy is effective compared to several state-of-the-art approaches, and we illustrate the behavior of the new C-bound on real data sets. Finally, in Section \ref{sec:concl} we summarize the outcome of this study and discuss the future work.

\section{\vasilii{Related Work}}
\label{sec:rel-work}
Generalization guarantees of majority vote classifiers are well studied in the binary supervised setting.
A common approach is to bound the majority vote risk by twice the Gibbs risk \citep{Langford:2003}. Many works are focused on deriving tight PAC guarantees for the Gibbs classifier in the inductive case \citep{McAllester:2003,Maurer:2004,Catoni:2007} as well as in the transductive one \citep{Derbeko:2004,Begin:2014}, and applying these results for optimization \citep{Thiemann:2017}, linear classifiers \citep{Germain:2009}, random forests \citep{Lorenzen:2019}, neural networks \citep{Letarte:2019}. While this bound can be tight, it reflects only the individual strength of voters, so using it as a minimization criterion often leads to an increase in the test error
\citep{Masegosa:2020}. This motivates to opt for bounds that directly upper bound the majority vote error. \cite{Amini:2008} derives a transductive bound based on how voters agree on every unlabeled example, while \cite{Lacasse:2007} upper bounds the generalization error by taking additionally into account the error correlation between voters.

Only few results exist for the multi-class majority vote classifier. In the supervised setting, \citet{Morvant:2012:ICML} derives generalization guarantees on the confusion matrix' norm, whereas \citet{Laviolette:2017} extends the C-bound of \cite{Lacasse:2007} to the multi-class case. \cite{Masegosa:2020} studies tight estimations from data by deriving a relaxed version of \cite{Laviolette:2017}. In the transductive setting, \citet{Feofanov:2019} extends the bound of \cite{Amini:2008} to the multi-class case. In this paper, we show how the bounds of \citet{Feofanov:2019} and \citet{Laviolette:2017} are generalized to the probabilistic framework.

However, the aforementioned studies are limited by assuming that all training examples are
perfectly labeled. Learning with an imperfect supervisor, in which training data contains an unknown portion of imperfect labels, has been considered in both supervised \citep{Natarajan:2013,Scott:2015,Xia:2019} and semi-supervised settings \citep{Amini:2003}. 
In most cases, a focus is on the estimation of the mislabeling errors to train a classifier, and 
theoretical studies are limited by the binary case \citep{Natarajan:2013,Scott:2015}.  \cite{Chittineni:1980} analyzes the connection between the true and the imperfect label in the multi-class case but only for the maximum a posteriori classifier. We extend the latter result to an arbitrary classifier and use it to derive a new C-bound with imperfect labels. To the best of our knowledge, the majority vote classifier has not been yet studied in the presence of imperfect labels.

In this paper, our theoretical development has a particular focus on semi-supervised learning. While there exists theoretical analysis of graph-based \citep{El:2009} and clustering \citep{Rigollet:2007,Maximov:2018} approaches, little attention is given to a self-learning algorithm \citep{Tur:2005}. A common approach is to perform self-learning with a fixed threshold; another method is to control the number of pseudo-labeled examples by curriculum learning \citep{Cascantebonilla:2020}. We show that this threshold can be effectively found at every iteration as a trade-off between the number of pseudo-labeled examples and the bounded transductive error evaluated on them.  


\section{Framework and Definitions}
\label{sec:framework}
We consider multi-class classification problems with an input space $\mathcal{X}\subset \R^d$ and an output space $\mathcal{Y}=\{1,\dots,K\}$, $K\geq 2$. We denote by $\mbf{X}=(X_1,\ldots,X_d)\in\mathcal{X}$ (resp. $Y\in\mathcal{Y}$)  an input (resp. output) random variable. Considering the semi-supervised framework, we assume an available set of labeled training examples $\mathrm{Z}_{\mathcal{L}}=\{(\mathbf{x}_i,y_i)\}_{i=1}^l\in(\mathcal{X}\times\mathcal{Y})^l$, identically and independently distributed  (i.i.d.) with respect to a fixed yet unknown probability distribution $P(\mbf{X}, Y)$ over $\mathcal{X}\times\mathcal{Y}$, and an available set of unlabeled training examples $\mathrm{X}_{\sss\mathcal{U}} = \{\mathbf{x}_i\}_{i=l+1}^{l+u}\in \mathcal{X}^u$ is supposed to be drawn i.i.d. from the marginal distribution $P(\mathbf{X})$, over the domain $\mathcal{X}$. Further, we denote by $\mathbf{0}_K$ the zero vector of size $K$, $\mathbf{0}_{K,K}$ is the zero matrix of size $K\times K$ and $n:=l+u$.

In this work, a fixed class of classifiers $\mathcal{H}=\{h | h:\mathcal{X} \rightarrow \mathcal{Y}\}$, called the \emph{hypothesis space}, is considered and defined without reference to the training set.
Over $\mathcal{H}$, two probability distributions are introduced: the prior $P$ and the posterior $Q$ that are defined respectively before and after observing the training set.
We focus on two classifiers: the \emph{$Q$-weighted majority vote classifier} (also called the Bayes classifier)\footnote{For the sake of brevity, we will tend to use the latter name, which should not be confused with other learning paradigms based on the Bayesian inference, e.g., the Bayesian statistics.} defined for all $\mathbf{x}\in \mathcal{X}$ as:
\begin{equation}
\label{eq:Bayes-classifier-multi-again}
B_Q(\mathbf{x}):= \argmax_{c\in\{1,\ldots,K\}}\left[\E_{h\sim Q}\I{h(\mathbf{x})=c}\right],
\end{equation}
and, the stochastic \emph{Gibbs classifier} $G_Q$ that for every $\mathbf{x}\in\mathcal{X}$ predicts the label using a randomly chosen classifier $h\in\mathcal{H}$ according to $Q$. The former one represents a class of learning methods, where the predictions of hypotheses are aggregated using the majority vote rule scheme, while the latter one is often used to analyze the behavior of the Bayes classifier.

The goal of learning is formulated as to choose a posterior distribution $Q$ over $\mathcal{H}$ based on the training set $\mathrm{Z}_{\mathcal{L}}\cup \mathrm{X}_{\sss\mathcal{U}}$ such that the classifier $B_Q$ will have the smallest possible error value. As opposed to works of \citet{Derbeko:2004,Begin:2014,Feofanov:2019}, who considered the deterministic case where for each unlabeled example there is one and only one possible label, in this paper we consider the more general \emph{probabilistic} case assuming possibility of multiple outcomes for each example.

To measure confidence of the majority vote classifier in its prediction, the notions of {class votes} and {margin} are further considered.
Given an observation $\mathbf{x}$, we define a vector of \emph{class votes} $\mathbf{v}_\mathbf{x} = (v_Q(\mathbf{x}, c))^K_{c=1}$ where the $c$-th component corresponds to the total vote given to the class $c$:
\begin{equation*}
\label{eq:class-vote}
v_Q(\mathbf{x}, c) := \E_{h\sim Q}\I{h(\mathbf{x})=c} = \sum_{h: h(\mathbf{x})=c} Q(h).    
\end{equation*}
In practice, the vote $v_Q(\mbf{x}, c)$ can be regarded as an estimation of the posterior probability $P(Y\!=\!c|\mbf{X}\!=\!\mbf{x})$; a large value indicates high confidence of the classifier that the true label of $\mathbf{x}$ is $c$.
\\
Given an observation $\mathbf{x}$, its \emph{margin} is defined in the following way:
\begin{align}
M_Q(\mathbf{x}, y) &:= \E_{h\sim Q}\I{h(\mathbf{x})=y} - \max_{\substack{{c\in\mathcal{Y}}\\{c\neq y}}} \E_{h\sim Q}\I{h(\mathbf{x})=c} = v_Q(\mbf{x},y) - \max_{\substack{{c\in\mathcal{Y}}\\{c\neq y}}} v_Q(\mbf{x},c).\label{def:margin}
\end{align}
The margin measures a gap between the vote of the true class and the maximal vote among all other classes. If the value is strictly positive for an example $\mbf{x}$, then $y$ will be the output of the majority vote, so the example will be correctly classified.




\section{Probabilistic Transductive Bounds and Their Application}
\label{sec:tr-study}
In this section, we derive guarantees for the multi-class majority vote classifier in the transductive setting  \citep{Vapnik:1982,Vapnik:1998}, i.e., when the error is evaluated on the unlabeled set $\mathrm{X}_{\sss\mathcal{U}}$ only. The proposed bound assumes that the majority vote classifier makes mistake on low class votes and thereby use votes as indicators of confidence. Then, we propose an application for generic self-learning where the threshold is based on the bound minimization.

\subsection{Transductive conditional risk}
\label{sec:tr-bound-theory}

At first, we show how to upper bound the risk evaluated conditionally to the values of the true and the predicted class.
Given a classifier $h$, for each class pair $(i,j)\in\{1,\dots,K\}^2$ such that $i\not= j,$ the \emph{transductive conditional risk} is defined as follows:
$$
R_\mathcal{U}(h,i,j) := \frac{1}{u_i} \sum_{\mathbf{x}\in \mathrm{X}_{\sss\mathcal{U}}}P(Y=i|X=\mathbf{x}) \I{h(\mathbf{x}) = j},
$$
where $u_i = \sum_{\mathbf{x}\in \mathrm{X}_{\sss\mathcal{U}}} P(Y=i|X=\mathbf{x})$ is the expected number of unlabeled observations from the class $i\in\{1,\ldots,K\}$. The value of $R_\mathcal{U}(h,i,j)$ indicates the expected proportion of unlabeled examples that are classified to the class $j$ being from the class $i$. We call $R_\mathcal{U}(B_Q,i,j)$ as the \emph{transductive Bayes conditional risk}.
In the similar way, the \emph{transductive Gibbs conditional risk} is defined for all $(i,j)\in\{1,\dots,K\}^2,\ i\neq j$ by:
$$
R_\mathcal{U}(G_Q,i,j) := \E_{h\sim Q} R_\mathcal{U}(h,i,j).
$$
Although the Gibbs classifier is stochastic, its error is defined in expectation over $Q$. In other words, the Gibbs conditional risk represents the $Q$-weighted average conditional risk of hypotheses $h\in\mathcal{H}$.

In addition, we define the transductive \textit{joint} Bayes conditional risk for a threshold vector $\bm{\theta}\in[0,1]^K$, for $(i,j)\in\{1,\dots,K\}^2,\ i\neq j$, as follows:
\begin{align}
R_{\mathcal{U}\wedge\bm{\theta}}(B_Q,i,j) &:= \frac{1}{u_i} \sum_{\mathbf{x}\in \mathrm{X}_{\sss\mathcal{U}}} P(Y=i|X=\mathbf{x}) \I{B_Q(\mathbf{x}) = j}\I{v_Q(\mathbf{x},j)\geq \theta_j}. \label{def:joint-cond-risk} \end{align}
If the Bayes classifier makes mistakes, i.e., outputs the class $j$ when the true class is $i$, on the examples with low values of $v_Q(\mbf{x},j)$, then the joint  risk  computes  the  probability  to  make  the conditional error on  confident observations when a large enough $\theta_{j}$ is set with respect to the distribution of $v_Q(\mbf{x},j)$.

The following Lemma \ref{lem:connection-Gibbs-Bayes-multi} connects the conditional Gibbs risk and the joint Bayes conditional risk by considering a conditional Bayes error regarding a certain class vote.

\begin{lem}
\label{lem:connection-Gibbs-Bayes-multi}
For $c\in\{1,\ldots,K\}$, let $\Gamma_{c} = \{\gamma_{c}\in[0,1]|\ \exists\ \mathbf{x}\in\mathrm{X}_{\sss\mathcal{U}}: \gamma_{c} = v_Q(\mathbf{x},c)\}$ be the set of unique votes for the unlabeled examples to the class $c$. Let enumerate its elements such that they form an ascending order:
\[
\gamma^{(1)}_c \leq \gamma^{(2)}_c \leq \dots \leq \gamma^{(N_c)}_c,
\]
where $N_c := |\Gamma_{c}|$. 
Denote $b_{i,j}^{(t)} := \frac{1}{u_i}\sum_{\mathbf{x}\in\mathrm{X}_{\sss\mathcal{U}}} P(Y=i|X=\mathbf{x}) \I{B_Q(\mathbf{x})=j}\I{v_Q(\mathbf{x},j)=\gamma^{(t)}_j}$.
\\Then, for all $(i,j) \in \{1,\ldots,K\}^2$:
\begin{align}
R_\mathcal{U}(G_Q,i,j) &\geq K_{i,j}:=\sum_{t=1}^{N_j} b^{(t)}_{i,j}\gamma^{(t)}_j,
\label{eq:lemma:gibbs:multi}\\
R_\mathcal{U\wedge\bm{\theta}}(B_Q,i,j) &= \sum_{t=k_j+1}^{N_j} b_{i,j}^{(t)},\label{eq:lemma:bayes:multi}
\end{align}
where
$k_j = \max\{t|\gamma^{(t)}_j< \theta_j\}$ with $\max(\emptyset)=0$ by convention.
\end{lem}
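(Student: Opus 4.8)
The plan is to prove both statements by expanding the definitions and then reorganizing the resulting double sums, exploiting the fact that for a fixed class $j$ the vote $v_Q(\mathbf{x},j)$ of any unlabeled example lies in the finite set $\Gamma_j=\{\gamma^{(1)}_j,\ldots,\gamma^{(N_j)}_j\}$. Partitioning $\mathrm{X}_{\sss\mathcal{U}}$ according to the value taken by the vote to class $j$ is the common engine behind both lines. I would dispatch the equality \eqref{eq:lemma:bayes:multi} first, as it is pure bookkeeping: starting from the definition \eqref{def:joint-cond-risk}, the only nontrivial factor is the threshold indicator $\I{v_Q(\mathbf{x},j)\geq\theta_j}$. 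Since every $v_Q(\mathbf{x},j)$ equals exactly one $\gamma^{(t)}_j$ and the $\gamma^{(t)}_j$ are sorted in ascending order, the event $\{v_Q(\mathbf{x},j)\geq\theta_j\}$ coincides with $\{v_Q(\mathbf{x},j)=\gamma^{(t)}_j \text{ for some } t\geq k_j+1\}$ by the definition $k_j=\max\{t\mid\gamma^{(t)}_j<\theta_j\}$. Rewriting the threshold indicator as $\sum_{t=k_j+1}^{N_j}\I{v_Q(\mathbf{x},j)=\gamma^{(t)}_j}$ and swapping the order of summation collapses the definition into $\sum_{t=k_j+1}^{N_j}b_{i,j}^{(t)}$, and the convention $\max(\emptyset)=0$ covers the boundary case where every vote clears the threshold.

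For the inequality \eqref{eq:lemma:gibbs:multi} I would first put the Gibbs conditional risk into closed form. By linearity of expectation, interchanging $\E_{h\sim Q}$ with the finite sum over $\mathbf{x}$ and using $\E_{h\sim Q}\I{h(\mathbf{x})=j}=v_Q(\mathbf{x},j)$ yields $R_\mathcal{U}(G_Q,i,j)=\frac{1}{u_i}\sum_{\mathbf{x}\in\mathrm{X}_{\sss\mathcal{U}}}P(Y=i|X=\mathbf{x})\,v_Q(\mathbf{x},j)$. On the other side I would collapse $K_{i,j}$ the same way: moving the $t$-sum inside and using $\sum_{t=1}^{N_j}\gamma^{(t)}_j\,\I{v_Q(\mathbf{x},j)=\gamma^{(t)}_j}=v_Q(\mathbf{x},j)$ rewrites it as $K_{i,j}=\frac{1}{u_i}\sum_{\mathbf{x}}P(Y=i|X=\mathbf{x})\,v_Q(\mathbf{x},j)\,\I{B_Q(\mathbf{x})=j}$. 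The two expressions are now identical except for the extra indicator $\I{B_Q(\mathbf{x})=j}$ present only in $K_{i,j}$.

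The crux, and the single point where an inequality rather than an equality appears, is precisely this indicator. Because the posterior probabilities $P(Y=i|X=\mathbf{x})$ and the votes $v_Q(\mathbf{x},j)$ are all nonnegative, deleting the factor $\I{B_Q(\mathbf{x})=j}\in\{0,1\}$ can only enlarge each summand, so the Gibbs risk dominates $K_{i,j}$. Conceptually, the Gibbs risk accumulates all the vote mass sent to class $j$ over the examples of true class $i$, whereas $K_{i,j}$ keeps only the examples on which the Bayes classifier actually outputs $j$; nonnegativity of the weights fixes the direction of the bound. I expect this nonnegativity-plus-restriction step to be the conceptual heart of the argument, everything else being routine rearrangement of finite sums.
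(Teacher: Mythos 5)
Your proposal is correct and follows essentially the same route as the paper's proof: both establish the equality \eqref{eq:lemma:bayes:multi} by partitioning $\mathrm{X}_{\sss\mathcal{U}}$ over the unique vote values and using the sorted order together with the definition of $k_j$, and both obtain the inequality \eqref{eq:lemma:gibbs:multi} by writing the Gibbs risk as $\frac{1}{u_i}\sum_{\mathbf{x}}P(Y=i|X=\mathbf{x})\,v_Q(\mathbf{x},j)$ and observing that $K_{i,j}$ is the same nonnegative sum restricted by the indicator $\I{B_Q(\mathbf{x})=j}$. The only cosmetic difference is that the paper inserts the indicator into the Gibbs sum and then partitions by vote values, whereas you collapse $K_{i,j}$ in the opposite direction before comparing; the key step is identical.
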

The proof is provided in Appendix \ref{AppendixProofLemma}. 
Following Lemma \ref{lem:connection-Gibbs-Bayes-multi}, we derive a bound on the Bayes conditional risk using the class vote distribution.
\begin{thm}
\label{thm:tr-bound-bayes-multi}
Let $B_Q$  be the $Q$-weighted majority vote classifier defined by Eq. \eqref{eq:Bayes-classifier-multi-again}.
Then for any $Q$, for all $\boldsymbol{\theta}\in[0,1]^K$, for all $(i,j) \in \{1,\ldots,K\}^2$ we have:

\begin{align}
R_{\mathcal{U}\wedge\bm{\theta}}(B_Q,i,j) &\leq \inf_{\gamma\in[\theta_j,1]}\left\{I^{(\leq,<)}_{i,j}(\theta_j, \gamma) + \frac{1}{\gamma}\floor*{K_{i,j}-M_{i,j}^<(\gamma)+M_{i,j}^<(\theta_j)}_+\right\},\label{eq:tr-bound-joint-bayes-multi}\tag{TB$_{i,j}$}
\end{align}
where  
\begin{itemize}
    \item $K_{i,j} = \frac{1}{u_i} \sum_{\mathbf{x}\in X_\mathcal{U}} P(Y=i|X=\mathbf{x})v_Q(\mathbf{x},j)\I{B_Q(\mathbf{x})=j}$ is the transductive Gibbs conditional risk evaluated on the examples for which the majority vote class is $j$,
    \item $I^{(\triangleleft_1,\triangleleft_2)}_{i,j}(s_1, s_2) = \frac{1}{u_i}\sum_{\mathbf{x}\in\mathrm{X}_{\sss\mathcal{U}}}P(Y=i|X=\mathbf{x})\I{s_1\triangleleft_1 v_Q(\mathbf{x},j) \triangleleft_2 s_2}, (\triangleleft_1,\triangleleft_2)\in\{<,\leq\}^2$ is the expected proportion of unlabeled examples from the class $i$ with $v_Q(\mbf{x}, j)$ in interval $[\theta_j, \gamma)$, 
    \item $M_{i,j}^<(s) = \frac{1}{u_i} \sum_{\mathbf{x}\in\mathrm{X}_{\sss\mathcal{U}}}P(Y=i|X=\mathbf{x})v_Q(\mathbf{x},j)\I{v_Q(\mathbf{x},j)<s}$, is the average of $j$-votes in the class $i$ that less than $s$.
\end{itemize}
\end{thm}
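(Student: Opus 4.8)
The plan is to recognise the statement as an upper bound on the optimal value of a small linear program in the unknown masses $b_{i,j}^{(t)}$ and to produce, for each admissible $\gamma$, an explicit two-region estimate matching the claimed right-hand side. Lemma~\ref{lem:connection-Gibbs-Bayes-multi} already supplies the two facts I need: $R_{\mathcal{U}\wedge\bm{\theta}}(B_Q,i,j)=\sum_{t=k_j+1}^{N_j}b_{i,j}^{(t)}$ and the budget identity $K_{i,j}=\sum_{t=1}^{N_j}b_{i,j}^{(t)}\gamma_j^{(t)}$. Setting $a_t:=\frac{1}{u_i}\sum_{\mathbf{x}\in\mathrm{X}_{\sss\mathcal{U}}}P(Y=i|X=\mathbf{x})\I{v_Q(\mathbf{x},j)=\gamma_j^{(t)}}$ for the total class-$i$ mass sitting at vote level $\gamma_j^{(t)}$, dropping the factor $\I{B_Q(\mathbf{x})=j}$ from $b_{i,j}^{(t)}$ yields the box constraints $0\le b_{i,j}^{(t)}\le a_t$. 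I would also record that $I^{(\leq,<)}_{i,j}(\theta_j,\gamma)=\sum_{t\,:\,\theta_j\le\gamma_j^{(t)}<\gamma}a_t$ and $M_{i,j}^<(\gamma)-M_{i,j}^<(\theta_j)=\sum_{t\,:\,\theta_j\le\gamma_j^{(t)}<\gamma}a_t\gamma_j^{(t)}$, so that every quantity in the bound lives in the same $(a_t,\gamma_j^{(t)})$ coordinates.

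Next, fix $\gamma\in[\theta_j,1]$ and split the confident index set $\{t>k_j\}=\{t:\gamma_j^{(t)}\ge\theta_j\}$ into $A=\{t:\theta_j\le\gamma_j^{(t)}<\gamma\}$ and $B=\{t:\gamma_j^{(t)}\ge\gamma\}$. The $A$-part I bound by its cap, $\sum_{t\in A}b_{i,j}^{(t)}\le\sum_{t\in A}a_t=I^{(\leq,<)}_{i,j}(\theta_j,\gamma)$. For the $B$-part I exploit $\gamma_j^{(t)}\ge\gamma$ to write $\sum_{t\in B}b_{i,j}^{(t)}\le\frac{1}{\gamma}\sum_{t\in B}b_{i,j}^{(t)}\gamma_j^{(t)}$, and then invoke the budget identity, discarding the nonnegative sub-$\theta_j$ contribution, to get $\sum_{t\in B}b_{i,j}^{(t)}\gamma_j^{(t)}\le K_{i,j}-\sum_{t\in A}b_{i,j}^{(t)}\gamma_j^{(t)}$. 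Adding the two parts gives $R_{\mathcal{U}\wedge\bm{\theta}}(B_Q,i,j)\le\frac{K_{i,j}}{\gamma}+\sum_{t\in A}b_{i,j}^{(t)}\bigl(1-\gamma_j^{(t)}/\gamma\bigr)$; since the coefficients $1-\gamma_j^{(t)}/\gamma$ are strictly positive on $A$, this is increasing in the free masses, so replacing each $b_{i,j}^{(t)}$ by its cap $a_t$ can only enlarge it and produces exactly $I^{(\leq,<)}_{i,j}(\theta_j,\gamma)+\frac{1}{\gamma}\bigl(K_{i,j}-M_{i,j}^<(\gamma)+M_{i,j}^<(\theta_j)\bigr)$.

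The step I expect to be the main obstacle is justifying the truncation $\floor*{\cdot}_+$, because the monotonicity argument only returns the unclipped bracket, which is correct when $K_{i,j}-M_{i,j}^<(\gamma)+M_{i,j}^<(\theta_j)\ge0$ but may turn negative once the caps $a_t$ over-charge the budget actually consumed on $A$. To repair this I would treat the regime $K_{i,j}<\sum_{t\in A}a_t\gamma_j^{(t)}$ separately and show directly that $R_{\mathcal{U}\wedge\bm{\theta}}(B_Q,i,j)\le I^{(\leq,<)}_{i,j}(\theta_j,\gamma)$: here the budget is too scarce to force extra mass into $B$, which follows from the $B$-estimate above combined with $b_{i,j}^{(t)}\le a_t$ and $\gamma_j^{(t)}<\gamma$ on $A$. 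Combining the two regimes replaces the bracket by its positive part. Finally, since every inequality above holds for an arbitrary $\gamma\in[\theta_j,1]$, taking the infimum over $\gamma$ yields \eqref{eq:tr-bound-joint-bayes-multi}. Equivalently, one could phrase the whole derivation as reading off the optimum of the relaxed fractional knapsack $\max\{\sum_{t>k_j}p_t:\,0\le p_t\le a_t,\ \sum_t p_t\gamma_j^{(t)}\le K_{i,j}\}$, but the explicit two-region estimate avoids appealing to LP duality and keeps the positive part transparent.
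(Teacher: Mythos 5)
Your proof is correct, but it takes a genuinely different route from the paper's. The paper turns the feasibility facts from Lemma \ref{lem:connection-Gibbs-Bayes-multi} into an explicit linear program \eqref{eq:linear-program-multi}, solves it exactly via the greedy (fractional-knapsack) solution of Lemma \ref{lem:sol-lin-prog} — a nontrivial appendix lemma requiring existence, a lexicographic-order uniqueness argument, and an explicit closed form — rewrites the optimal value as $U_{i,j}(\gamma_j^{(p)})$ at the critical level $p=\max\{t \,|\, K_{i,j}-M_{i,j}^<(\gamma_j^{(t)})+M_{i,j}^<(\theta_j)>0\}$, and only then verifies that $U_{i,j}(\gamma_j^{(p)})\leq U_{i,j}(\gamma)$ for all $\gamma\in[\theta_j,1]$, so that the LP optimum coincides with the infimum. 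You instead bypass the LP solution entirely: for each fixed $\gamma$ you produce a direct two-region certificate (cap the mass below $\gamma$, charge the mass above $\gamma$ against the budget $K_{i,j}$ at rate $1/\gamma$, use monotonicity in the free masses), showing that every value $U_{i,j}(\gamma)$ upper-bounds the risk, and then take the infimum. This is a weak-duality-style argument: shorter, more elementary, and it keeps the roles of $I^{(\leq,<)}_{i,j}$, $M^<_{i,j}$ and $K_{i,j}$ transparent. What it gives up is exactness: the paper's route additionally shows that the infimum \emph{equals} the maximum of the linear program, i.e., that \eqref{eq:tr-bound-joint-bayes-multi} is the tightest bound obtainable from the box and budget constraints alone, a fact that underlies the tightness analysis around Proposition \ref{prop:tight-bayes-multi}; your argument establishes only the inequality, which is all the theorem statement claims.

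One remark on the step you flagged as the main obstacle: the truncation $\floor*{\cdot}_+$ needs no repair at all. Since $\floor*{x}_+\geq x$ for every real $x$, the clipped expression is always \emph{larger} than the unclipped bracket your monotonicity argument produces, so the chain
\begin{equation*}
R_{\mathcal{U}\wedge\bm{\theta}}(B_Q,i,j)\;\leq\; I^{(\leq,<)}_{i,j}(\theta_j,\gamma)+\tfrac{1}{\gamma}\bigl(K_{i,j}-M_{i,j}^<(\gamma)+M_{i,j}^<(\theta_j)\bigr)\;\leq\; I^{(\leq,<)}_{i,j}(\theta_j,\gamma)+\tfrac{1}{\gamma}\floor*{K_{i,j}-M_{i,j}^<(\gamma)+M_{i,j}^<(\theta_j)}_+
\end{equation*}
holds with no case distinction. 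Your separate treatment of the regime $K_{i,j}<M_{i,j}^<(\gamma)-M_{i,j}^<(\theta_j)$ is valid (and in fact recovers the slightly stronger conclusion that the risk is at most $I^{(\leq,<)}_{i,j}(\theta_j,\gamma)$ there), but it is redundant for the stated result.
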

\begin{proof}

We would like to find an upper bound for the joint Bayes conditional risk. Hence, for all $(i,j) \in \{1,\ldots,K\}^2$, for all $\bm{\theta}\in[0,1]^K$, we consider the case when the mistake is maximized. Then, using Lemma \ref{lem:connection-Gibbs-Bayes-multi}:
\begin{equation}
\label{eq:R_U_t_B_i_j}
R_\mathcal{U\wedge\bm{\theta}}(B_Q,i,j) = \sum_{t=k_j+1}^{N_j} b_{i,j}^{(t)} \leq \max_{b^{(1)}_{i,j},\dots, b^{(N_j)}_{i,j}} \sum_{t=k_j+1}^{N_j} b_{i,j}^{(t)},
\end{equation}
with $k_j=\max\{t|\gamma^{(t)}_j< \theta_j\}\I{\{t|\gamma^{(t)}_j< \theta_j\} \not= \emptyset}$. 

Let $B^{(t)}_{i,j}=\sum_{\mathbf{x}\in\mathrm{X}_{\sss\mathcal{U}}}P(Y\!=\!i|X\!=\!\mathbf{x})\I{v_Q(\mathbf{x},j)=\gamma^{(t)}_j}/u_i$. Then, it can be noticed that
$0 \leq b_{i,j}^{(t)} \leq B^{(t)}_{i,j}$.
Remember that $K_{i,j}$ can also be written as $\sum_{t=1}^{N_j} b^{(t)}_{i,j}\gamma^{(t)}_j$.
Hence  the bound defined by Eq. \eqref{eq:R_U_t_B_i_j} should satisfy the following linear program :
\begin{align}
\max_{b^{(1)}_{i,j},\dots, b^{(N_j)}_{i,j}}\ &\sum_{t=k_j+1}^{N_j} b^{(t)}_{i,j}\label{eq:linear-program-multi}\\
\text{s.t. }\ \ \ &\forall t,\ 0 \leq b_{i,j}^{(t)} \leq B^{(t)}_{i,j}
\text{ and }\sum_{t=1}^{N_j} b^{(t)}_{i,j}\gamma^{(t)}_j= K_{i,j}\nonumber.
\end{align}
The solution of \eqref{eq:linear-program-multi} can be solved analytically and it is attained for:
\begin{equation}
\label{eq:th:5}
    b^{(t)}_{i,j}=
        \min\left(B^{(t)}_{i,j},\floor*{\frac{1}{\gamma^{(t)}_j}(K_{i,j} -\sum_{k<w<t} \gamma^{(w)}_j B^{(w)}_{i,j})}_+\right)\I{t\leq k_j}.
\end{equation}
For the sake of a better presentation, the proof of this solution is deferred to the appendix  \ref{AppendixProofLemma}, Lemma~\ref{lem:sol-lin-prog}.
Further, we can notice that, for all $(i,j) \in \{1,\ldots,K\}^2$,
$$\sum_{k_j<w<t} \gamma^{(w)}_j B^{(w)}_{i,j}=M_{i,j}^<(\gamma_{j}^{(t)})-M_{i,j}^<(\theta_j).$$ 
Let $p=\max\{t|K_{i,j}-M_{i,j}^<(\gamma_j^{(t)})+M_{i,j}^<(\theta_j)>0\}$. Then, Eq. \eqref{eq:th:5} can be re-written as follows:
\begin{equation}
\label{eq:th:6}
b^{(t)}_{i,j} = 
\begin{cases}
    0 & t\leq k_j\\
    B^{(t)}_{i,j} & k_j+1\leq t< p\\
    \frac{1}{\gamma_j^{(p)}}(K_{i,j}-M_{i,j}^<(\gamma_j^{(p)})+M_{i,j}^<(\theta_j)) & t=p\\
    0 & t>p.
\end{cases}    
\end{equation}
Notice that $\sum_{t=k_j+1}^{p-1} B_{i,j}^{(t)} = I_{i,j}^{(\leq,<)}(\theta_j, \gamma_j^{(p)})$. Using this fact as well as Eq. \eqref{eq:th:6}, we infer:
\[
 R_\mathcal{U\wedge\bm{\theta}}(B_Q,i,j) \leq I_{i,j}^{(\leq,<)}(\theta_j, \gamma_j^{(p)}) + \frac{1}{\gamma_j^{(p)}}(K_{i,j}-M_{i,j}^<(\gamma_j^{(p)})+M_{i,j}^<(\theta_j)).
\]
Consider the function 
$$\gamma \mapsto U_{i,j}(\gamma) := I_{i,j}^{(\leq,<)}(\theta_j, \gamma) + \frac{1}{\gamma}\floor*{K_{i,j}-M_{i,j}^<(\gamma)+M_{i,j}^<(\theta_j)}_+.$$
To prove the theorem, it remains to verify that, for all $(i,j) \in \{1,\ldots,K\}^2$, for all $\gamma \in[\theta_j,1],\ U_{i,j}(\gamma_j^{(p)})\leq U_{i,j}(\gamma)$. For this, consider $\gamma_j^{(w)}$ with  $w\in\{1,\dots,N_j\}$.

If $w > p$, then $U_{i,j}(\gamma_j^{(p)})\leq I_{i,j}^{(\leq,\leq)}(\theta_j, \gamma)\leq U_{i,j}(\gamma_j^{(w)}).$


If $w < p$, then
\begin{align*}
     U_{i,j}(\gamma_j^{(p)}) - U_{i,j}(\gamma_j^{(w)})=& \sum_{t=w}^{p} b^{(t)}_{i,j} - \frac{1}{\gamma_j^{(w)}}\left(K_{i,j}-M_{i,j}^<(\gamma_j^{(w)})+M_{i,j}^<(\theta_j)\right)\\
     =& \sum_{t=w}^{p} b^{(t)}_{i,j} - \frac{1}{\gamma_j^{(w)}}\left(\sum_{t=k+1}^{p} b^{(t)}_{i,j}\gamma_j^{(t)} - \sum_{t=k+1}^{w-1} \gamma^{(t)}_j b^{(t)}_{i,j}\right) \\
     =& \frac{1}{\gamma_j^{(w)}}\left(\sum_{t=w}^{p} b^{(t)}_{i,j}\gamma_j^{(w)} - \sum_{t=w}^{p} b^{(t)}_{i,j}\gamma_j^{(t)}\right)\leq 0.
\end{align*}
which completes the proof.
\end{proof}
Following this result, a transductive bound for the joint Bayes conditional risk can be found by arranging the class votes in an ascending order and considering  the linear program \eqref{eq:linear-program-multi}, where the connection with the Gibbs classifier is used as a linear constraint. Furthermore, as the bound is the infimum of the function $U_{i,j}$ on the interval $[\theta_j,1]$ it can be  computed in practice without solving the linear program explicitly.

When $\theta_j=0$, a bound over the transductive Bayes conditional risk is directly obtained from \eqref{eq:tr-bound-joint-bayes-multi} by noticing that $M_{i,j}^<(0) = 0$ in this case:
\begin{equation}
\label{eq:tr-bound-bayes-multi}
R_\mathcal{U}(B_Q,i,j)\leq \inf_{\gamma\in[0,1]}\left\{I^{(\leq,<)}_{i,j}(0, \gamma) + \frac{1}{\gamma}\floor*{K_{i,j}-M_{i,j}^<(\gamma)}_+\right\}.
\end{equation}

We note that in the binary case \citep{Amini:2008}, the transductive Gibbs risk used inside the linear program can be bounded either by the PAC-Bayesian bound \citep{Derbeko:2004,Begin:2014} or by 1/2 (the worst possible error of the binary classifier), which allows to compute the transductive bound. 
In the multi-class case, the bound can be evaluated only by approximating the posterior probabilities.   Once we estimate the posterior probability, $K_{i,j}$ and the transductive conditional Gibbs risk are also directly approximated.

\subsection{Transductive confusion matrix and transductive error rate}

In this section, based on Theorem \ref{thm:tr-bound-bayes-multi}, we derive bounds for two other error measures: the \textit{error rate} and the \textit{confusion matrix} \citep{Morvant:2012:ICML}. 
We define the transductive error rate and 
the transductive \emph{joint} error rate of the Bayes classifier $B_Q$  over the unlabeled set $\mathrm{X}_{\sss\mathcal{U}}$ given a vector $\bm{\theta} = (\theta_c)_{c=1}^K\in [0,1]^K$, as:
\begin{align}
R_{\sss\mathcal{U}}(B_Q) &:= \frac{1}{u}\sum_{\mathbf{x}\in \mathrm{X}_{\sss\mathcal{U}}}\sum_{\substack{{c\in\{1,\dots,K\}}\\{c\neq B_Q(\mbf{x})}}}P(Y=c|\mbf{X}=\mbf{x}), \nonumber\\
R_{\mathcal{U}\wedge\bm{\theta}}(B_Q) &:= \frac{1}{u}
\sum_{\mathbf{x}\in \mathrm{X}_{\sss\mathcal{U}}}
\sum_{\substack{{c\in\{1,\dots,K\}}\\{c\neq B_Q(\mbf{x})}}} P(Y=c|X=\mathbf{x})\I{v_Q(\mathbf{x}, B_Q(\mathbf{x}))\geq \theta_{B_Q(\mathbf{x})}}. \label{def:joint-bayes-err}
\end{align}


Then, we define the \emph{transductive joint Bayes confusion matrix} for $\bm{\theta}\in[0,1]^K$, and $(i,j)\in\{1,\dots,K\}^2$, as follows:
\begin{equation*}
\left[\mathbf{C}^\mathcal{U\wedge\bm{\theta}}_{h}\right]_{i,j} = \begin{cases}0 & i=j,\\ R_{\mathcal{U}\wedge\bm{\theta}}(h,i,j) & i\not=j.\end{cases}
\end{equation*}
The following proposition links the error rate with the joint confusion matrix:

\begin{prop}
\label{rmk:joint-err-connection-conf-matrix}
Let $B_Q$ be the majority vote classifier. Given a vector $\bm{\theta}\in [0,1]^K$,  for $\mathbf{p} := \{u_i/u\}_{i=1}^K$, where $u_i = \sum_{\mathbf{x}\in \mathrm{X}_{\sss\mathcal{U}}} P(Y=i|X=\mathbf{x})$, we have:
\begin{align}
R_{\mathcal{U}\wedge\bm{\theta}}(B_Q) = \norm{\T{\left(\mathbf{C}^\mathcal{U\wedge\bm{\theta}}_{B_Q}\right)}\, \mathbf{p}}_1.
\label{errorConfMatrixJoint}
\end{align}
\end{prop}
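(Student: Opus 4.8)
The plan is to expand the right-hand side directly and show that it collapses to the definition of $R_{\mathcal{U}\wedge\bm{\theta}}(B_Q)$ after a reordering of summations. First I would write out the matrix--vector product componentwise: since the transpose satisfies $[\T{\mathbf{C}}]_{j,i} = [\mathbf{C}]_{i,j}$, the $j$-th component of $\T{(\mathbf{C}^\mathcal{U\wedge\bm{\theta}}_{B_Q})}\,\mathbf{p}$ equals $\sum_{i=1}^K R_{\mathcal{U}\wedge\bm{\theta}}(B_Q,i,j)\,u_i/u$, where the diagonal term $i=j$ contributes nothing because $[\mathbf{C}^\mathcal{U\wedge\bm{\theta}}_{B_Q}]_{i,i}=0$ by definition. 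Each such component is a sum of nonnegative weighted counts of examples, hence nonnegative, so the $\ell_1$-norm is simply the sum of the components and the absolute values may be dropped without further comment.

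Next I would substitute the definition of $R_{\mathcal{U}\wedge\bm{\theta}}(B_Q,i,j)$ from Eq.~\eqref{def:joint-cond-risk}. The key simplification is the cancellation $\frac{u_i}{u}\cdot\frac{1}{u_i}=\frac{1}{u}$, which removes the class-dependent normalization $u_i$ and leaves a single global factor $1/u$. After this step the right-hand side reads $\frac{1}{u}\sum_{j}\sum_{i\neq j}\sum_{\mathbf{x}\in\mathrm{X}_{\sss\mathcal{U}}} P(Y=i|X=\mathbf{x})\I{B_Q(\mathbf{x})=j}\I{v_Q(\mathbf{x},j)\geq\theta_j}$.

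Then I would interchange the order of summation so that the sum over $\mathbf{x}$ is outermost. For each fixed $\mathbf{x}$ the indicator $\I{B_Q(\mathbf{x})=j}$ is nonzero for exactly one index, namely $j=B_Q(\mathbf{x})$, so the sum over $j$ collapses to that single term. Substituting $j=B_Q(\mathbf{x})$ turns $\I{v_Q(\mathbf{x},j)\geq\theta_j}$ into $\I{v_Q(\mathbf{x},B_Q(\mathbf{x}))\geq\theta_{B_Q(\mathbf{x})}}$ and the constraint $i\neq j$ into $i\neq B_Q(\mathbf{x})$; renaming the summation index $i$ to $c$ then reproduces Eq.~\eqref{def:joint-bayes-err} verbatim, which completes the argument.

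The proof is essentially bookkeeping, so there is no genuine analytic obstacle; the only point requiring care is keeping the transpose straight together with the roles of the index $i$ (true class, carrying the weight $P(Y=i|X=\mathbf{x})$ and the proportion $p_i=u_i/u$) and the index $j$ (predicted class, carrying $\I{B_Q(\mathbf{x})=j}$), and verifying that the diagonal convention $[\mathbf{C}^\mathcal{U\wedge\bm{\theta}}_{B_Q}]_{i,i}=0$ is exactly what matches the restriction $c\neq B_Q(\mathbf{x})$ in the error rate, so that no spurious $i=j$ term is introduced or lost.
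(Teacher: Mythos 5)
Your proof is correct and is essentially the paper's own argument run in the reverse direction: the paper unfolds the definition of $R_{\mathcal{U}\wedge\bm{\theta}}(B_Q)$ into the double sum over class pairs $(i,j)$, $j\neq i$, and regroups it into the $\ell_1$-norm, while you expand the norm componentwise and collapse it back to the definition, using the same ingredients (the cancellation $\frac{u_i}{u}\cdot\frac{1}{u_i}=\frac{1}{u}$, nonnegativity of the entries to drop absolute values, and the fact that $\I{B_Q(\mathbf{x})=j}$ selects the single index $j=B_Q(\mathbf{x})$). There is no gap; your explicit handling of the transpose and of the diagonal-zero convention only spells out what the paper leaves implicit.
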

\begin{proof}

To prove Eq. \eqref{errorConfMatrixJoint}, combine the definition of transductive joint Bayes conditional risk given in Eq. \eqref{def:joint-cond-risk} and Eq. \eqref{def:joint-bayes-err} as follows: 
\begin{align*}
R_{\mathcal{U}\wedge\bm{\theta}}(B_Q) 
&= \frac{1}{u} \sum_{i=1}^K \sum_{\substack{j=1\\j\not= i}}^K \sum_{\mathbf{x}\in \mathrm{X}_{\sss\mathcal{U}}} P(Y=i|X=\mathbf{x})\I{B_Q(\mathbf{x})=j}\I{v_Q(\mathbf{x},j)\geq \theta_j} \\
&= \sum_{i=1}^K \frac{u_i}{u} \sum_{\substack{j=1\\j\not= i}}^K R_{\mathcal{U}\wedge\bm{\theta}}(B_Q,i,j)
= \norm{\T{\left(\mathbf{C}^\mathcal{U\wedge\bm{\theta}}_{B_Q}\right)}\, \mathbf{p}}_1.
\end{align*}
\end{proof}


From Theorem \ref{thm:tr-bound-bayes-multi}, we derive corresponding transductive bounds for the confusion matrix norm and the error rate of the Bayes classifier.
To simplify notations, we introduce a matrix $\mathbf{U}_{\bm{\theta}}$ of size $K\times K$ with zeros on the main diagonal and the following $ (i,j)$-entries, $i \neq j$:
\[
\left[\mathbf{U}_{\bm{\theta}}\right]_{i,j}:= \inf_{\gamma\in[\theta_j,1]}\left\{I^{(\leq,<)}_{i,j}(\theta_j, \gamma) + \frac{1}{\gamma}\floor*{(K_{i,j}-M_{i,j}^<(\gamma)+M_{i,j}^<(\theta_j))}_+\right\},
\]
which corresponds to the transductive bound proposed in Theorem \ref{thm:tr-bound-bayes-multi}.
\begin{cor}
\label{cor:matrix-bound}
For all $\boldsymbol{\theta}\in[0,1]^K$, we have:
\begin{equation}
\label{eq:cor-conf}
\mnorm{\mathbf{C}^{\mathcal{U}\wedge\bm{\theta}}_{B_Q}} \leq \mnorm{\mathbf{U}_{\bm{\theta}}}.
\end{equation}
Moreover, we have the following bound:
\begin{equation}
\label{eq:cor-err}
R_{\sss\mathcal{U}\wedge\bm{\theta}}(B_Q) \leq \norm{\T{\mathbf{U}_{\bm{\theta}}}\, \mathbf{p}}_1.
\end{equation}
where $\|.\|$ is the spectral norm; and $\mathbf{p} = \{u_i/u\}_{i=1}^K$, with $u_i = \sum_{\mathbf{x}\in \mathrm{X}_{\sss\mathcal{U}}} P(Y=i|X=\mathbf{x})$.
\end{cor}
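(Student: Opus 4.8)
The plan is to reduce both inequalities to a single entrywise comparison between the two matrices and then invoke monotonicity of the relevant norms under entrywise domination of nonnegative matrices. First I would record the pointwise bound supplied by Theorem~\ref{thm:tr-bound-bayes-multi}: for every pair $(i,j)$ with $i\neq j$ we have $R_{\mathcal{U}\wedge\bm{\theta}}(B_Q,i,j)\leq [\mathbf{U}_{\bm{\theta}}]_{i,j}$, while the conditional risk is nonnegative by its definition. Since both $\mathbf{C}^{\mathcal{U}\wedge\bm{\theta}}_{B_Q}$ and $\mathbf{U}_{\bm{\theta}}$ carry zeros on the main diagonal, this yields the entrywise sandwich
\[
0\leq \bigl[\mathbf{C}^{\mathcal{U}\wedge\bm{\theta}}_{B_Q}\bigr]_{i,j}\leq [\mathbf{U}_{\bm{\theta}}]_{i,j}\qquad\text{for all }(i,j),
\]
which is the only property of the two matrices I would use from here on.

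For the confusion-matrix bound \eqref{eq:cor-conf} I would appeal to the variational characterization of the spectral norm, $\mnorm{\mathbf{A}} = \max_{\norm{\mbf{u}}_2=\norm{\mbf{v}}_2=1}\T{\mbf{u}}\mathbf{A}\,\mbf{v}$. The key observation is that when $\mathbf{A}$ is entrywise nonnegative the maximum is unchanged upon restricting $\mbf{u},\mbf{v}$ to have nonnegative coordinates: replacing each coordinate by its absolute value preserves the unit $\ell_2$-norm and can only increase $\T{\mbf{u}}\mathbf{A}\,\mbf{v}=\sum_{i,j}u_i A_{i,j} v_j$ because every $A_{i,j}\geq 0$. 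Restricting to such nonnegative unit vectors, the entrywise inequality gives $\T{\mbf{u}}\mathbf{C}^{\mathcal{U}\wedge\bm{\theta}}_{B_Q}\mbf{v}\leq\T{\mbf{u}}\mathbf{U}_{\bm{\theta}}\,\mbf{v}\leq\mnorm{\mathbf{U}_{\bm{\theta}}}$, and taking the maximum over $\mbf{u},\mbf{v}$ yields \eqref{eq:cor-conf}. I expect this nonnegative-reduction step to be the only delicate point, as it is what lets the entrywise domination pass through the spectral norm; everything else is a term-by-term comparison.

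For the error-rate bound \eqref{eq:cor-err} I would start from Proposition~\ref{rmk:joint-err-connection-conf-matrix}, which already identifies $R_{\mathcal{U}\wedge\bm{\theta}}(B_Q)$ with $\norm{\T{(\mathbf{C}^{\mathcal{U}\wedge\bm{\theta}}_{B_Q})}\,\mbf{p}}_1$. Because $\mbf{p}=\{u_i/u\}_{i=1}^K$ has nonnegative entries and the confusion matrix is nonnegative, each coordinate $(\T{\mathbf{C}}\,\mbf{p})_j=\sum_i [\mathbf{C}]_{i,j}\,p_i$ is itself nonnegative, so the $\ell_1$-norm collapses to the plain sum $\sum_{i,j}[\mathbf{C}]_{i,j}\,p_i$ without absolute values. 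Applying the entrywise bound coordinate by coordinate and using $p_i\geq 0$ then gives $\sum_{i,j}[\mathbf{C}]_{i,j}\,p_i\leq\sum_{i,j}[\mathbf{U}_{\bm{\theta}}]_{i,j}\,p_i=\norm{\T{\mathbf{U}_{\bm{\theta}}}\,\mbf{p}}_1$, which is exactly \eqref{eq:cor-err}. This half is purely arithmetic once the sign bookkeeping that removes the absolute values is in place.
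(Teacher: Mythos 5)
Your proof is correct and follows essentially the same route as the paper: entrywise domination $\mathbf{0}_{K,K}\preceq \mathbf{C}^{\mathcal{U}\wedge\bm{\theta}}_{B_Q}\preceq \mathbf{U}_{\bm{\theta}}$ from Theorem~\ref{thm:tr-bound-bayes-multi}, then monotonicity of the spectral norm for the first claim, and Proposition~\ref{rmk:joint-err-connection-conf-matrix} together with coordinatewise comparison under the $\ell_1$-norm for the second. The only difference is that the paper invokes the fact that entrywise domination of nonnegative matrices implies $\|\mathbf{A}\|\leq\|\mathbf{B}\|$ without proof, whereas you derive it from the variational characterization of the spectral norm by restricting to nonnegative unit vectors, which is a correct and self-contained justification of the same step.
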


\begin{proof}
                                                                                                  

The confusion matrix $\mathbf{C}^{\mathcal{U}\wedge\bm{\theta}}_{B_Q}$ is always non-negative, and from Theorem \ref{thm:tr-bound-bayes-multi}, each of its entries is smaller than the corresponding entry of $\mathbf{U}_{\bm{\theta}}$. Hence, from the property of spectral norm for two positive matrices $\mathbf{A}$ and $\mathbf{B}$~:
\[
\mathbf{0}_{K,K}\preceq \mathbf{A} \preceq \mathbf{B} \Rightarrow \|\mathbf{A}\|\leq \|\mathbf{B}\|,
\]
where $\mathbf{A} \preceq \mathbf{B}$ denotes that each element of $\mathbf{A}$ is smaller than the corresponding element of $\mathbf{B}$, we deduce Eq. \eqref{eq:cor-conf}.

With the same computations, we observe the following inequality:
\[
\T{\left(\mathbf{C}^{\mathcal{U}\wedge\bm{\theta}}_{B_Q}\right)}\, \mathbf{p} \leq \T{\mathbf{U}_{\bm{\theta}}}\, \mathbf{p}.
\]
Elements of the left vector are non-negative. Hence the inequality holds for the $\ell_1$-norm, and taking into account Proposition \ref{rmk:joint-err-connection-conf-matrix} we infer:
\[
R_{\mathcal{U}\wedge\bm{\theta}}(B_Q) = \norm{\T{\left(\mathbf{C}^{\mathcal{U}\wedge\bm{\theta}}_{B_Q}\right)}\, \mathbf{p}}_1 \leq \norm{\T{\mathbf{U}_{\bm{\theta}}}\, \mathbf{p}}_1.
\]
\end{proof}

Note that the transductive bound of the Bayes error rate is obtained from Eq. \eqref{eq:cor-err} by taking $\bm\theta$ as the zero vector $\mathbf{0}_K$:
\begin{equation}
    \label{eq:TB}\tag{TB}
    R_{\mathcal{U}}(B_Q) \leq \norm{\T{\mathbf{U}_{\mbf{0}_K}}\, \mathbf{p}}_1.
\end{equation}

\subsection{Tightness Guarantees}
In this section, we assume that the Bayes classifier makes most of its error on unlabeled examples with a low prediction vote, i.e., class votes can be considered as indicators of confidence. In the following proposition, we show that the bound becomes tight under certain conditions. We remind that $\Gamma_{j}=\{\gamma_j^{(t)}\}$ is the set of unique votes for the unlabeled examples to the class $j$, and $b_{i,j}^{(t)}$ corresponds to the Bayes conditional risk on the examples with the vote $\gamma_j^{(t)}$ (see Lemma \ref{lem:connection-Gibbs-Bayes-multi} for more details).

\begin{prop}
\label{prop:tight-bayes-multi}
Let $\Gamma_j^\tau:=\{\gamma_j^{(t)}\in\Gamma_j|b^{(t)}_{i,j} > \tau\}$, where $\tau\in[0,1]$ is a given threshold.
If there exists a lower bound $C\in[0,1]$ such that for all $\gamma\in\Gamma_j^\tau$:
\begin{align}
\sum_{\mathbf{x}\in\mathrm{X}_{\sss\mathcal{U}}}P(Y=i|X=\mathbf{x})\I{B_Q(\mathbf{x})=j}\I{v_Q(\mathbf{x},j)<\gamma} &\geq C \sum_{\mathbf{x}\in\mathrm{X}_{\sss\mathcal{U}}}P(Y=i|X=\mathbf{x})\I{v_Q(\mathbf{x},j)<\gamma}, \label{prop-multi:initial-cond}
\end{align}
then, the following inequality holds:
\begin{equation*}
    \left[\mathbf{U}_{\mathbf{0}_K}\right]_{i,j} - R_\mathcal{U}(B_Q,i,j) \leq \frac{1-C}{C}R_{\mathcal{U}}(B_Q,i,j) + r_{i,j}\left(\frac{1}{\gamma^*_j}-1\right),
\end{equation*}
where \begin{itemize}
    \item $\gamma^*_j := \sup\{\gamma_j^{(t)}\in\Gamma_j^\tau\}$ is the highest vote which satisfies $b^{(t)}_{i,j} > \tau$, and
    \item $r_{i,j} := \sum_{\mathbf{x}\in\mathrm{X}_{\sss\mathcal{U}}}P(Y=i|X=\mathbf{x})v_Q(\mathbf{x},j)\I{B_Q(\mathbf{x})=j}\I{v_Q(\mathbf{x},j)>\gamma^*_j}/u_i$ corresponds to the average of $j$-votes in the class $i$ that greater than $\gamma^*_j$ and on which the Bayes classifier makes the conditional mistake.
\end{itemize}
\end{prop}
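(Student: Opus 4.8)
The plan is to exploit the fact that $\left[\mathbf{U}_{\mathbf{0}_K}\right]_{i,j}$ is an infimum over $\gamma\in[0,1]$ of the function $\gamma\mapsto U_{i,j}(\gamma)=I^{(\leq,<)}_{i,j}(0,\gamma)+\frac{1}{\gamma}\floor*{K_{i,j}-M^<_{i,j}(\gamma)}_+$, so it suffices to upper bound $U_{i,j}$ at the single well-chosen point $\gamma=\gamma^*_j$. I would first rewrite the target: since $\frac{1-C}{C}R_\mathcal{U}(B_Q,i,j)+R_\mathcal{U}(B_Q,i,j)=\frac{1}{C}R_\mathcal{U}(B_Q,i,j)$, the claimed inequality is equivalent to $\left[\mathbf{U}_{\mathbf{0}_K}\right]_{i,j}\le \frac{1}{C}R_\mathcal{U}(B_Q,i,j)+r_{i,j}\left(\frac{1}{\gamma^*_j}-1\right)$, and this is what I would actually establish via $\left[\mathbf{U}_{\mathbf{0}_K}\right]_{i,j}\le U_{i,j}(\gamma^*_j)$, which is legitimate because $\gamma^*_j\in\Gamma_j\subseteq[0,1]$ (and $\gamma^*_j>0$ is implicit in the statement through the factor $1/\gamma^*_j$).

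Next I would pass to the discrete description of Lemma~\ref{lem:connection-Gibbs-Bayes-multi}, writing everything through the vote levels $\gamma^{(t)}_j$, the conditional-risk masses $b^{(t)}_{i,j}$ and the total masses $B^{(t)}_{i,j}$, so that $R_\mathcal{U}(B_Q,i,j)=\sum_t b^{(t)}_{i,j}$, $I^{(\leq,<)}_{i,j}(0,\gamma^*_j)=\sum_{\gamma^{(t)}_j<\gamma^*_j}B^{(t)}_{i,j}$, $M^<_{i,j}(\gamma^*_j)=\sum_{\gamma^{(t)}_j<\gamma^*_j}\gamma^{(t)}_j B^{(t)}_{i,j}$ and $K_{i,j}=\sum_t \gamma^{(t)}_j b^{(t)}_{i,j}$. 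For the first summand of $U_{i,j}(\gamma^*_j)$ I would apply hypothesis \eqref{prop-multi:initial-cond} at $\gamma=\gamma^*_j$ (valid since $\gamma^*_j\in\Gamma^\tau_j$, being the max of that finite set) to obtain $I^{(\leq,<)}_{i,j}(0,\gamma^*_j)\le \tfrac{1}{C}\sum_{\gamma^{(t)}_j<\gamma^*_j}b^{(t)}_{i,j}$. For the floor summand I would use $b^{(t)}_{i,j}\le B^{(t)}_{i,j}$ to discard the mass strictly below $\gamma^*_j$, giving $K_{i,j}-M^<_{i,j}(\gamma^*_j)\le \sum_{\gamma^{(t)}_j\ge\gamma^*_j}\gamma^{(t)}_j b^{(t)}_{i,j}=\gamma^*_j b^{(t^*)}_{i,j}+r_{i,j}$, where $t^*$ is the index with $\gamma^{(t^*)}_j=\gamma^*_j$; since this quantity is non-negative the positive part is harmless, and dividing by $\gamma^*_j$ bounds the term by $b^{(t^*)}_{i,j}+r_{i,j}/\gamma^*_j$.

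Finally I would add the two bounds and compare with the target. Writing $R_\mathcal{U}(B_Q,i,j)=\sum_{\gamma^{(t)}_j<\gamma^*_j}b^{(t)}_{i,j}+b^{(t^*)}_{i,j}+\sum_{\gamma^{(t)}_j>\gamma^*_j}b^{(t)}_{i,j}$ and cancelling the common terms $\tfrac1C\sum_{\gamma^{(t)}_j<\gamma^*_j}b^{(t)}_{i,j}$ and $r_{i,j}/\gamma^*_j$, the residual inequality collapses to $b^{(t^*)}_{i,j}\le \tfrac1C\big(b^{(t^*)}_{i,j}+\sum_{\gamma^{(t)}_j>\gamma^*_j}b^{(t)}_{i,j}\big)-r_{i,j}$, which follows immediately from $\tfrac1C\ge 1$ together with $r_{i,j}=\sum_{\gamma^{(t)}_j>\gamma^*_j}\gamma^{(t)}_j b^{(t)}_{i,j}\le \sum_{\gamma^{(t)}_j>\gamma^*_j}b^{(t)}_{i,j}$ (every vote satisfies $\gamma^{(t)}_j\le 1$). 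The main delicacy is the boundary bookkeeping at $\gamma^*_j$: keeping the strict-versus-non-strict conventions consistent between the hypothesis (which controls the mass \emph{strictly} below $\gamma^*_j$), the definition of $r_{i,j}$ (mass \emph{strictly} above $\gamma^*_j$), and the isolated level $\gamma^*_j$ itself, and verifying that the positive part in the floor may be dropped because the dominating quantity $\gamma^*_j b^{(t^*)}_{i,j}+r_{i,j}$ is manifestly non-negative.
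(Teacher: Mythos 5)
Your proof is correct, and while it shares the paper's skeleton---both arguments instantiate the infimum defining $\left[\mathbf{U}_{\mathbf{0}_K}\right]_{i,j}$ at the single point $\gamma^*_j$ and apply hypothesis \eqref{prop-multi:initial-cond} exactly there (legitimate, as you note, because $\Gamma^\tau_j$ is finite and nonempty, so its supremum is attained)---the internal organization is genuinely different. The paper works on the risk side: it first proves a separate appendix result (Lemma~\ref{lem:lem-for-proposition}) that lower-bounds $R_\mathcal{U}(B_Q,i,j)$ by the conditional-error mass strictly below $\gamma^*_j$ plus $\frac{1}{\gamma^*_j}\floor*{\floor{K_{i,j}-M_{i,j}^<(\gamma^*_j)}_+ - r_{i,j}}_+ + r_{i,j}$, then subtracts this from $U_{i,j}(\gamma^*_j)$ via the elementary fact $b-\floor{b-a}_+\leq a$, and finishes with $I^{(\leq,<)}_{i,j}(0,\gamma^*_j)\leq R_\mathcal{U}(B_Q,i,j)/C$. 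You work on the bound side: after the equivalent rewriting $\left[\mathbf{U}_{\mathbf{0}_K}\right]_{i,j}\leq \frac{1}{C}R_\mathcal{U}(B_Q,i,j)+r_{i,j}(1/\gamma^*_j-1)$, you upper-bound the two summands of $U_{i,j}(\gamma^*_j)$ directly in the discrete representation (the hypothesis for the $I^{(\leq,<)}$ term; $b^{(t)}_{i,j}\leq B^{(t)}_{i,j}$ for the floor term, giving $\floor*{K_{i,j}-M^<_{i,j}(\gamma^*_j)}_+\leq \gamma^*_j b^{(t^*)}_{i,j}+r_{i,j}$), and a cancellation reduces the claim to a residual inequality that follows from $1/C\geq 1$ and $\gamma^{(t)}_j\leq 1$. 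The mathematical substance is the same---your floor estimate is the paper's auxiliary lemma with the nested positive parts unwound, since $\gamma^*_j b^{(t^*)}_{i,j}+r_{i,j}\geq\floor*{K_{i,j}-M^<_{i,j}(\gamma^*_j)}_+$ is equivalent to $b^{(t^*)}_{i,j}\geq \frac{1}{\gamma^*_j}\floor*{\floor{K_{i,j}-M_{i,j}^<(\gamma^*_j)}_+ - r_{i,j}}_+$---but your route is self-contained: it dispenses with the appendix lemma, the nested floors, and the subtraction trick, while keeping the strict/non-strict boundary conventions at the level $\gamma^*_j$ explicit. Both versions rest on the same implicit assumptions ($C>0$, $\Gamma^\tau_j\neq\emptyset$, $\gamma^*_j>0$), which the form of the stated bound forces in any case.
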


\begin{proof}

First, it can be proved that for all $\mathbf{x}\in\mathrm{X}_{\sss\mathcal{U}}$, for all $(i,j)\in \{1,\ldots,K\}^2$, the following inequality holds:
\begin{multline}
\label{eq:prop-multi:1}
R_\mathcal{U}(B_Q,i,j) \geq \frac{1}{u_i}\sum_{\mathbf{x}\in\mathrm{X}_{\sss\mathcal{U}}}P(Y=i|X=\mathbf{x})\I{B_Q(\mathbf{x})=j}\I{v_Q(\mathbf{x},j)<\gamma^*} \\+ \frac{1}{\gamma^*}\floor*{\floor{K_{i,j}-M_{i,j}^<(\gamma^*)}_+ - r_{i,j}}_+ + r_{i,j},
\end{multline}
where $\gamma^* := \sup\{\gamma\in\Gamma_j|\sum_{\mathbf{x}\in\mathrm{X}_{\sss\mathcal{U}}}P(Y=i|X=\mathbf{x})\I{B_Q(\mathbf{x})=j}\I{v_Q(\mathbf{x},j)=\gamma}/u_i> \tau\}$. We prove this result in Lemma \ref{lem:lem-for-proposition} in Appendix. Now, taking into account Eq. \eqref{eq:prop-multi:1}  and Eq. \eqref{prop-multi:initial-cond} we deduce the following:
\begin{align}
R_\mathcal{U}(B_Q,i,j) \geq & \frac{C}{u_i}\sum_{\mathbf{x}\in\mathrm{X}_{\sss\mathcal{U}}}P(Y=i|X=\mathbf{x})\I{v_Q(\mathbf{x},j)<\gamma^*} + \frac{1}{\gamma^*}\floor*{\floor{K_{i,j}-M_{i,j}^<(\gamma^*)}_+ - r_{i,j}}_+ + r_{i,j} \nonumber\\ 
=& C\,I^{(\leq,<)}_{i,j}(0, \gamma^*)  + \frac{1}{\gamma^*}\floor*{\floor{K_{i,j}-M_{i,j}^<(\gamma^*)}_+ - r_{i,j}}_+ + r_{i,j}. \label{eq:prop-multi:4} 
\end{align}
By definition of $\mathbf{U}_{\mathbf{0}_K}$ we have, for all $(i,j)\in \{1,\ldots,K\}^2$,
\begin{equation}
\label{eq:prop-multi:5}
\left[\mathbf{U}_{\mathbf{0}_K}\right]_{i,j} \leq I^{(\leq,<)}_{i,j}(0, \gamma^*) + \frac{1}{\gamma^*}\floor*{K_{i,j}-M_{i,j}^<(\gamma^*)}_+.
\end{equation}
Subtracting Eq. \eqref{eq:prop-multi:4} from Eq. \eqref{eq:prop-multi:5} we obtain:
\begin{multline*}
\left[\mathbf{U}_{\mathbf{0}_K}\right]_{i,j} - R_\mathcal{U}(B_Q,i,j) \leq (1-C)I^{(\leq,<)}_{i,j}(0, \gamma^*) \\+ \frac{1}{\gamma^*}\left(\floor*{K_{i,j}-M_{i,j}^<(\gamma^*)}_+ - \floor*{\floor{K_{i,j}-M_{i,j}^<(\gamma^*)}_+ - r_{i,j}}_+\right)-r_{i,j}.
\end{multline*}
We can notice that for all $a,b\in\mathbb{R}^+:\ b-\floor{b-a}_+\leq a$. Then, we have:
\begin{equation}
\label{eq:prop-multi:6}
\left[\mathbf{U}_{\mathbf{0}_K}\right]_{i,j} - R_\mathcal{U}(B_Q,i,j) \leq (1-C)I^{(\leq,<)}_{i,j}(0, \gamma^*) + r_{i,j}\left(\frac{1}{\gamma^*}-1\right).
\end{equation}
Also, from Eq. \eqref{eq:prop-multi:4} one can derive: 
\begin{align}
I^{(\leq,<)}_{i,j}(0, \gamma^*)&\leq \frac{1}{C}\left(R_\mathcal{U}(B_Q,i,j) - \frac{1}{\gamma^*}\floor*{\floor{K_{i,j}-M_{i,j}^<(\gamma^*)}_+ - r_{i,j}}_+ - r_{i,j} \right)
\leq \frac{R_\mathcal{U}(B_Q,i,j)}{C}\label{eq:prop-multi:7}.
\end{align}

Taking into account Eq. \eqref{eq:prop-multi:6} and Eq. \eqref{eq:prop-multi:7}, we infer:
\[
\left[\mathbf{U}_{\mathbf{0}_K}\right]_{i,j} - R_\mathcal{U}(B_Q,i,j) \leq \frac{1-C}{C} R_\mathcal{U}(B_Q,i,j) + r_{i,j}\left(\frac{1}{\gamma^*}-1\right).
\]
\end{proof}

This proposition states that if Eq. \eqref{prop-multi:initial-cond} holds, the difference between the transductive Bayes conditional risk and its upper bound does not exceed an expression that depends on a constant $C$ and a threshold $\tau$. When the majority vote classifier makes most of its mistake for the class $j$ on observations with a low value of $v_Q(\mathbf{x}, j)$, with a reasonable choice of $\tau$, $r_{i,j}$ and $\gamma^*_j$ are decreasing. This also implies that Eq. \eqref{prop-multi:initial-cond} accepts a high value $C$ (close to 1) and the bound will be tighter. The closer our framework to the deterministic one, the closer $r_{i,j}$ will be to 0 ( in the deterministic case, $\tau$ can be set to 0, so $r_{i,j}$ will be 0), so the bound becomes tight. Although our bound is tight only under the condition of making mistakes on low prediction votes, the assumption is reasonable from the theoretical point of view, since if for some observation the Bayes classifier gives a relatively high vote to the class $j$, we expect that the observation is most probably from this class and not from the class $i$. From the practical point of view, this assumption requires the learning model to be well calibrated \citep{Gebel:2009}.

\subsection{Multi-class Self-learning Algorithm}
\label{sec:msla}
In this section, we describe an application of results obtained in Section \ref{sec:tr-bound-theory} for learning on partially-labeled data. For this, we consider a self-learning algorithm \citep{Amini:15}, which is a semi-supervised approach that performs augmentation of the labeled set by pseudo-labeling unlabeled examples.

The algorithm starts from a supervised base classifier initially trained on available labeled examples. Then, it iteratively assigns pseudo-labels at each iteration to those unlabeled examples that have a confidence score above a certain threshold. The pseudo-labeled examples are then included in the training set, and the classifier is retrained. The process is repeated until no examples for pseudo-labeling are left.

The central question of applying the self-learning algorithm in practice is how to choose the threshold. Intuitively, the threshold can manually be set to a very high value, since only examples with a very high degree of confidence will be pseudo-labeled in this case. However, the confidence measure is biased by the small labeled set, so every iteration of the self-learning may still induce an error and shift the boundary in the wrong direction. In addition, the fact that a large number of iterations makes the algorithm computationally expensive drives us to choose the threshold carefully.

{To overcome this problem, we extend the strategy proposed by \citet{Amini:2008} to the multi-class setting. We consider the majority vote as the base classifier and the prediction vote as an indicator of confidence. Given a threshold vector $\bm{\theta}$,} we introduce the \emph{conditional Bayes error rate} $R_\mathcal{U|\boldsymbol{\theta}}(B_Q)$, defined in the following way:
\begin{equation}
\label{eq:cond-bayes-error}
R_{\mathcal{U}|\boldsymbol{\theta}}(B_Q) := \frac{R_{\mathcal{U}\wedge\boldsymbol{\theta}}(B_Q)}{\pi(v_Q(\mathbf{x},k)\geq\theta_k)},
\end{equation}
where $\pi(v_Q(\mathbf{x},k)\geq\theta_k):= \sum_{\mathbf{x}\in\mathrm{X}_{\sss\mathcal{U}}}\mathds{1}_{v_Q(\mathbf{x},k)\geq\theta_k}/u$ and $k := B_Q(\mathbf{x})$. The numerator reflects the proportion of mistakes on the unlabeled set when the threshold is equal to $\boldsymbol{\theta}$, whereas the denominator computes the proportion of unlabeled observations with the vote larger than the threshold for the predicted class. {Thus, we propose to find the threshold that yields the minimal value of $R_\mathcal{U|\boldsymbol{\theta}}(B_Q)$,  making a trade-off between the error we induce by pseudo-labeling and the number of pseudo-labeled examples. In Algorithm \ref{alg:MSLA} we summarize our algorithm, which is further denoted by \texttt{MSLA}\footnote{The code source of the algorithm can be found at \url{https://github.com/vfeofanov/trans-bounds-maj-vote}.}.
}

To evaluate the transductive error, we bound the numerator of Eq. \eqref{eq:cond-bayes-error} by Corollary \ref{cor:matrix-bound}. However, the bound can practically be computed only with assumptions, since the posterior probabilities $P(Y=c|X=\mathbf{x})$ for unlabeled examples are not known. In this work, we approximate the posterior $P(Y=c|X=\mathbf{x})$ by $v_Q(\mathbf{x},c)$ of the base classifier trained on labeled examples only (the initial step of \texttt{MSLA}). Although this approximation is optimistic, by formulating the bound as probabilistic we keep some chances for other classes so the error of the supervised classifier can be smoothed. However, it must be borne in mind that the hypothesis space should be diverse enough so that the entropy of $(v_Q(\mbf{x}, c))_{c=1}^K$ would not be always zero, and the errors are made mostly on low prediction votes. In our experiments, as the base classifier we use the random forest \citep{Breiman:2001} that aggregates predictions from trees learned on different bootstrap samples. In Appendix \ref{sec:exp-prob-estim}, we validate the proposed approximation by empirically comparing it with the case when the posterior probabilities are set to $1/K$, i.e., when we treat all classes as equally probable. 



\begin{algorithm}[ht!]
\caption{Multi-class self-learning algorithm\,(MSLA)}
\label{alg:MSLA}
\begin{algorithmic}
\State
\State \textbf{Input:} \\ Labeled observations $\mathrm{Z}_{\mathcal{L}}$ 
                      \\ Unlabeled observations $\mathrm{X}_{\sss\mathcal{U}}$
\State \textbf{Initialisation:} 
\\A set of pseudo-labeled instances, $\mathrm{Z}_{\mathcal{P}}\leftarrow \emptyset$
\\A classifier $B_Q$ trained on $\mathrm{Z}_{\mathcal{L}}$
\Repeat
    \State \textbf{1.} Compute the vote threshold $\bm{\theta^*}$ that minimizes the conditional Bayes error rate: 
        \begin{equation*}
        \bm{\theta}^* = \argmin_{\bm{\theta}\in(0,1]^K} R_{\mathcal{U}|\bm{\theta}}(B_Q) .\tag{$\star$}
        \end{equation*}
    \State \textbf{2.} $S \leftarrow\{(\mathbf{x},y')|\mathbf{x}\in\mathrm{X}_{\sss\mathcal{U}};[v_Q(\mathbf{x},y')\geq\theta^*_{y'}]\wedge [ y'= \argmax_{c\in}v_Q(\mathbf{x},c) ]\}$
    \State \textbf{3.} $\mathrm{Z}_{\mathcal{P}}\leftarrow \mathrm{Z}_{\mathcal{P}} \cup S$, $\mathrm{X}_{\sss\mathcal{U}} \leftarrow \mathrm{X}_{\sss\mathcal{U}}\setminus S$
    \State \textbf{4.} Learn a classifier $B_Q$ with the following loss function:
    \[
    \mathcal{L}(B_Q,Z_{\mathcal{L}},\mathrm{Z}_{\mathcal{P}}) = \frac{l+|\mathrm{Z}_{\mathcal{P}}|}{l}\mathcal{L}(B_Q,Z_{\mathcal{L}}) + \frac{l+|\mathrm{Z}_{\mathcal{P}}|}{|\mathrm{Z}_{\mathcal{P}}|}\mathcal{L}(B_Q,\mathrm{Z}_{\mathcal{P}})
    \]
\Until{$\mathrm{X}_{\sss\mathcal{U}}\text{ or }S \text{ are }\emptyset$}
\State \textbf{Output:} The final classifier $B_Q$
\end{algorithmic}
\end{algorithm}

Similarly to the work of \citet{Amini:2008}, in practice, to find an optimal $\bm{\theta}^*$ we perform a grid search over the hypercube $(0,1]^K$. The same algorithm is used for computing the optimal $\gamma^*$ that provides the value of an upper bound for the conditional risk (see Theorem \ref{thm:tr-bound-bayes-multi}). In contrast to the binary self-learning, the direct grid search in the multi-class setting costs $O\left(R^K\right)$, where $R$ is the sampling rate of the grid. As
\begin{align*}
R_{\mathcal{U}|\boldsymbol{\theta}}(B_Q) 
&= \sum_{j=1}^K\frac{R_{\mathcal{U}\wedge\boldsymbol{\theta}}^{(j)}(B_Q)}{\sum_{c=1}^K\frac{1}{u}\sum_{\mathbf{x}\in\mathrm{X}_{\sss\mathcal{U}}}\mathds{1}_{v_Q(\mathbf{x},c)\geq\theta_c}\mathds{1}_{B_Q(\mathbf{x})=c}} 
\leq \sum_{j=1}^K\frac{R_{\mathcal{U}\wedge\boldsymbol{\theta}}^{(j)}(B_Q)}{\frac{1}{u}\sum_{\mathbf{x}\in\mathrm{X}_{\sss\mathcal{U}}}\mathds{1}_{v_Q(\mathbf{x},j)\geq\theta_j}\mathds{1}_{B_Q(\mathbf{x})=j}}\nonumber\\
&\leq \sum_{j=1}^K\frac{R_{\mathcal{U}\wedge\boldsymbol{\theta}}^{(j)}(B_Q)}{\pi\{(v_Q(\mathbf{x},j)\geq\theta_j)\land (B_Q(\mathbf{x})=j)\}}\label{eq:paralell-upp-bound} \tag{$\ast$},
\end{align*}
where $R_{\mathcal{U}\wedge\boldsymbol{\theta}}^{(j)}(B_Q)=\sum_{i=1}^K u_iR_{\mathcal{U}\wedge\boldsymbol{\theta}}(B_Q,i,j)/u$, the sum might be minimized term by term, tuning independently each component of $\boldsymbol{\theta}$. This replaces the $K$-dimensional minimization task  by $K$ tasks of 1-dimensional minimization.



\section{Probabilistic C-Bound with Imperfect Labels}
\label{sec:c-bound}
The transductive bound \eqref{eq:TB} can be regarded as a first-order bound, since it is linearly dependent on the classifier' votes, so it does not take into account the correlation between hypotheses.
In addition, despite its application for minimization of the error induced by self-learning, the obtained pseudo-labels may be still erroneous, and we do not know how to evaluate the classification error in this noisy case. In this section, we overcome these two issues by deriving a new probabilistic C-bound in the presence of imperfect labels.

\subsection{C-Bound in the Probabilistic Setting}
\citet{Lacasse:2007} proposed to upper bound the Bayes error by taking into account the mean and the variance of the prediction margin, which, we recall Eq.~\eqref{def:margin}, is defined as $v_Q(\mbf{x},y) - \max_{\substack{{c\in\mathcal{Y}}\setminus\{y\}}} v_Q(\mbf{x},c)$. A similar result was obtained in a different context by \citet{Breiman:2001}. \citet{Laviolette:2017} extended this bound to the multi-class case. 

Below, we derive their C-bound in the probabilistic setting. Now, we consider the \emph{generalization error} as an error measure, which is defined in the probabilistic setting
as follows:
\begin{align*}
R(B_Q) &
:=\E_{P(\mbf{X})}\sum_{\substack{{c\in\{1,\dots,K\}}\\{c\neq B_Q(\mbf{x})}}}P(Y=c|\mbf{X}=\mbf{x})
=\E_{P(\mbf{X})}[1-P(Y=B_Q(\mbf{x})|\mbf{X}=\mbf{x})].
\end{align*}

\begin{thm}
\label{thm:prob-cbound}
Let $M$ be a random variable such that $[M|\mbf{X}=\mbf{x}]$ is a discrete random variable that is equal to the margin $M_Q(\mbf{x}, c)$ with probability $P(Y=c|\mbf{X}=\mbf{x})$, $c=\{1,\dots,K\}$.
Let $\mu^{M}_1$ and $\mu^{M}_2$ be the first and the second statistical moments of the random variable $M$, respectively.
Then, for all choice of $Q$ on a hypothesis space $\mathcal{H}$, and for all distributions $P(\mbf{X})$ over $\mathcal{X}$ and $P(Y|\mbf{X})$ over $\mathcal{Y}$, such that $\mu^M_1>0$, we have:
\begin{align}
    \label{eq:prob-cbound}\tag{CB}
    R(B_Q) \leq 1 - \frac{(\mu^M_1)^2}{\mu^M_2}.
\end{align}
\end{thm}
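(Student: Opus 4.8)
The plan is to reduce the generalization error to a lower-tail probability of the margin variable $M$ and then control that tail with a one-sided concentration inequality (Cantelli's / one-sided Chebyshev inequality). First I would make explicit that, by construction, $M$ is distributed exactly as $M_Q(\mathbf{X},Y)$ with $(\mathbf{X},Y)\sim P(\mathbf{X},Y)$: sampling $\mathbf{x}\sim P(\mathbf{X})$ and then $c$ with probability $P(Y=c\mid\mathbf{X}=\mathbf{x})$, and setting $M=M_Q(\mathbf{x},c)$, is the same as drawing the pair $(\mathbf{x},y)$ jointly and reading off $M_Q(\mathbf{x},y)$. This identification lets me translate statements about $M$ into statements about the joint law of examples and labels.

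The key structural step is to establish $R(B_Q)\leq \Pr(M\leq 0)$. Fix $\mathbf{x}$ and suppose the realized label $c$ is a misclassification, i.e.\ $c\neq B_Q(\mathbf{x})$. Since $B_Q(\mathbf{x})$ is the $\argmax$ of the votes, $\max_{c'\neq c} v_Q(\mathbf{x},c')\geq v_Q(\mathbf{x},B_Q(\mathbf{x}))\geq v_Q(\mathbf{x},c)$, so by the definition of the margin in Eq.~\eqref{def:margin} we get $M_Q(\mathbf{x},c)=v_Q(\mathbf{x},c)-\max_{c'\neq c} v_Q(\mathbf{x},c')\leq 0$. Hence the misclassification event is contained in $\{M\leq 0\}$, and integrating over $(\mathbf{X},Y)$ gives $R(B_Q)=\Pr(Y\neq B_Q(\mathbf{X}))\leq\Pr(M\leq 0)$.

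I would then invoke Cantelli's inequality. Because $\mu^M_1>0$, choosing the deviation equal to the mean yields $\Pr(M\leq 0)=\Pr\!\left(M-\mu^M_1\leq-\mu^M_1\right)\leq \frac{\mathrm{Var}(M)}{\mathrm{Var}(M)+(\mu^M_1)^2}$. Substituting $\mathrm{Var}(M)=\mu^M_2-(\mu^M_1)^2$ collapses the right-hand side to $1-\frac{(\mu^M_1)^2}{\mu^M_2}$, and chaining this with the previous inequality gives Eq.~\eqref{eq:prob-cbound}.

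The main obstacle is not the concentration step, which is a routine application of Cantelli, but the careful handling of the reduction $R(B_Q)\leq\Pr(M\leq 0)$, in particular the boundary behavior at $M=0$ arising from ties in the votes (where $M_Q(\mathbf{x},B_Q(\mathbf{x}))=0$). The containment argument sidesteps this: it only asserts that \emph{misclassified} labels have non-positive margin, so ties merely enlarge the event $\{M\leq 0\}$ and the inequality is preserved. I would therefore state this as a set containment rather than an equality. A secondary point to verify is that the hypothesis $\mu^M_1>0$ is precisely what makes the deviation $-\mu^M_1$ negative, so that Cantelli bounds the lower tail in the direction we need.
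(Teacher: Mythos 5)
Your proposal is correct and takes essentially the same route as the paper: reduce $R(B_Q)$ to the lower-tail probability $P(M\leq 0)$ and apply the Cantelli--Chebyshev inequality with deviation equal to $\mu^M_1>0$. The only (minor) difference is that you establish $R(B_Q)\leq P(M\leq 0)$ by set containment, whereas the paper asserts the equality $R(B_Q)=P(M\leq 0)$; your version is in fact slightly more careful, since equality can fail when votes are tied and $M_Q(\mathbf{x},B_Q(\mathbf{x}))=0$, while the inequality is all that the bound requires.
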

\begin{proof}
    At first, we show that $R(B_Q) = P(M\leq 0)$.
    For a fixed $\mbf{x}$, one get:
    \begin{align*}
        P(M\leq 0|\mbf{X}=\mbf{x}) = \sum_{c=1}^K P(Y=c|\mbf{X}=\mbf{x})\I{M_Q(\mbf{x},c)\leq 0} = \sum_{\substack{{c\in\{1,\dots,K\}}\\{c\neq B_Q(\mbf{x})}}}P(Y=c|\mbf{X}=\mbf{x}).  
    \end{align*}

    Applying the total probability law, we obtain:
    \begin{align}
    P(M\leq 0) &= \int_{\mathcal{X}} P(M\leq 0|\mbf{X}=\mbf{x}) P(\mbf{X}=\mbf{x})\diff\mbf{x}= \E_{P(\mbf{X})} P(M\leq 0|\mbf{X}=\mbf{x}) = R(B_Q). \label{eq:bayes-risk-via-prob-margins}
    \end{align}
    By applying the Cantelli-Chebyshev inequality (Lemma \ref{lem:cantelli-chebyshev} in Appendix), we deduce:
    
    \begin{align}
       P(M\leq 0) &\leq \frac{\mu^M_2 - (\mu^M_1)^2}{\mu^M_2 - (\mu^M_1)^2 + (\mu^M_1)^2 } = 1 - \frac{(\mu^M_1)^2}{\mu^M_2}. \label{eq:prob-M-less-0-bound}
    \end{align}
    Combining Eq. \eqref{eq:bayes-risk-via-prob-margins} and Eq. \eqref{eq:prob-M-less-0-bound} gives the bound.
\end{proof}

Thus, the probabilistic C-bound allows to bound the generalization error of the Bayes classifier when examples are provided with probabilistic labels. Note that when for every example, only one label is possible, the bound comes back to the usual deterministic case.

The main advantage of C-bound is the involvement of the second margin moment, which can be related to correlations between hypotheses' predictions \citep{Lacasse:2007}. 




\subsection{Mislabeling Error Model}
\label{sec:mislab-error-model}
The self-learning algorithm, which was introduced in Section \ref{sec:msla}, supplies the unlabeled examples with pseudo-labels that are potentially erroneous. In this section, we consider a mislabeling error model to explicitly take into account this issue.

We consider an imperfect output $\hat{Y}$, which has a different distribution from the true output $Y$. The label imperfection is summarized through the \emph{mislabeling matrix} $\mbf{P}=(p_{j,c})_{1\leq j,c\leq K}$, defined by:
\begin{align}
\label{eq:mislab-model}
    P(\hat Y=j|Y=c) &:= p_{j,c} \quad\forall(j,c)\in \{1,\dots,K\}^2,
\end{align}
where $\sum_{j=1}^K p_{j,c} = 1$. Additionally, we assume that $\hat Y$ does not influence the true class distribution: $P(\mbf{X}|Y, \hat Y) = P(\mbf{X}, Y)$. This implies that 
\begin{align}
    \label{eq:mislab-prob-transformation}
    {P(\hat Y=j|\mbf{X}=\mbf{x})=\sum_{c=1}^K p_{j,c}P(Y=c|\mbf{X}=\mbf{x}).}
\end{align}
{This class-related model is a common approach to deal with the label imperfection \citep{Chittineni:1980,Amini:2003,Natarajan:2013,Scott:2015}.}


At first, we derive a bound that connects the error of the true and the imperfect label in misclassifying a particular example $\mbf{x} \in \mathcal{X}$. We denote 
\begin{align*}
   r(\mbf x) &=  \sum_{\substack{{c\in\{1,\dots,K\}}\\{c\neq B_Q(\mbf{x})}}}P(Y=c|\mbf{X}=\mbf{x}), \qquad
    \hat r(\mbf x) =  \sum_{\substack{{c\in\{1,\dots,K\}}\\{c\neq B_Q(\mbf{x})}}}P(\hat Y=c|\mbf{X}=\mbf{x}).
\end{align*}
\begin{thm}
\label{thm:one-ex-risk-mislab-bound}
Let $\mbf{P}$ be  the mislabeling matrix, and assume that $p_{i,i}> p_{i,j},\ \forall{i,j}\in\{1,\dots,K\}^2$. Then, for all choice of $Q$ on a hypothesis space $\mathcal{H}$ we have, for $\mbf x \in \mathcal X$,
\begin{align}
    r(\mbf{x}) \leq \frac{\hat{r}(\mbf{x})}{\delta(\mbf{x})}-\frac{1-\alpha(\mbf{x})}{\delta(\mbf{x})}, \label{eq:one-x-mislabel-ineq}
\end{align}
with
$\delta(\mbf{x}):= p_{B_Q(\mbf{x}),B_Q(\mbf{x})}- \max_{j\in\mathcal{Y}\setminus\{B_Q(\mbf{x})\}}p_{B_Q(\mbf{x}),j}$ and $\alpha(\mbf{x}):=p_{B_Q(\mbf{x}),B_Q(\mbf{x})}$.
\end{thm}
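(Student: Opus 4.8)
The plan is to fix an arbitrary $\mathbf{x}$, abbreviate $k := B_Q(\mathbf{x})$ and $q_c := P(Y=c|\mathbf{X}=\mathbf{x})$, and reduce the claim to an elementary rearrangement. With this notation $\sum_{c=1}^K q_c = 1$ and $r(\mathbf{x}) = 1 - q_k = \sum_{c\neq k} q_c$. First I would rewrite the imperfect error $\hat{r}(\mathbf{x})$ entirely in terms of the $q_c$ and the mislabeling matrix. Using the transformation \eqref{eq:mislab-prob-transformation}, $P(\hat Y = k|\mathbf{X}=\mathbf{x}) = \sum_{c=1}^K p_{k,c}\, q_c$, so that $\hat{r}(\mathbf{x}) = 1 - \sum_{c=1}^K p_{k,c}\, q_c$. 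Since $\alpha(\mathbf{x}) = p_{k,k}$, isolating the $c=k$ term yields the identity
\[
\hat{r}(\mathbf{x}) - (1 - \alpha(\mathbf{x})) = p_{k,k}\,(1 - q_k) - \sum_{c \neq k} p_{k,c}\, q_c = p_{k,k}\, r(\mathbf{x}) - \sum_{c \neq k} p_{k,c}\, q_c .
\]

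The key step is then to bound the residual sum by the largest off-diagonal entry in row $k$. Writing $m := \max_{j \neq k} p_{k,j}$, so that $\delta(\mathbf{x}) = p_{k,k} - m$, each term obeys $p_{k,c}\, q_c \le m\, q_c$ because $q_c \ge 0$; summing over $c \neq k$ gives $\sum_{c\neq k} p_{k,c}\, q_c \le m \sum_{c \neq k} q_c = m\, r(\mathbf{x})$. Substituting this into the identity above produces
\[
\hat{r}(\mathbf{x}) - (1-\alpha(\mathbf{x})) \ge p_{k,k}\, r(\mathbf{x}) - m\, r(\mathbf{x}) = \delta(\mathbf{x})\, r(\mathbf{x}).
\]

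Finally, the hypothesis $p_{i,i} > p_{i,j}$ for all $i,j$ guarantees $\delta(\mathbf{x}) = p_{k,k} - m > 0$, so dividing both sides by $\delta(\mathbf{x})$ preserves the inequality and gives exactly \eqref{eq:one-x-mislabel-ineq}. I do not anticipate a genuine obstacle: the argument is a short computation. The only point requiring care is the role of the assumption $p_{i,i} > p_{i,j}$, which is precisely what makes $\delta(\mathbf{x})$ strictly positive and hence legitimizes the division; without strict positivity the inequality would degenerate or reverse. A secondary subtlety is that the term-by-term bound $p_{k,c} \le m$ multiplies correctly only because the posteriors $q_c$ are non-negative, which holds since they are conditional probabilities.
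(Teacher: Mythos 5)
Your proof is correct and follows essentially the same route as the paper: both isolate the diagonal term $p_{k,k}$ in the expansion of $\hat r(\mathbf{x})$ via Eq.~\eqref{eq:mislab-prob-transformation}, bound the off-diagonal sum by $\max_{j\neq k} p_{k,j}\sum_{c\neq k} q_c$, obtain $\hat r(\mathbf{x}) \geq \delta(\mathbf{x})\,r(\mathbf{x}) + 1 - \alpha(\mathbf{x})$, and divide by $\delta(\mathbf{x})>0$, which the assumption $p_{i,i}>p_{i,j}$ guarantees. The only difference is presentational (you rearrange to the identity for $\hat r(\mathbf{x}) - (1-\alpha(\mathbf{x}))$ first, while the paper states the lower bound on $\hat r(\mathbf{x})$ directly), so there is nothing to correct.
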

\begin{proof}
First, from the definition of $\hat{r}(\mbf{x})$ and applying Eq. \eqref{eq:mislab-prob-transformation} we obtain that
\begin{align}
    \hat{r}(\mbf{x}) &= 1 - P(\hat Y=B_Q(\mbf{x})|\mbf X = \mbf{x}) = 1 - \sum_{j=1}^K p_{B_Q(\mbf{x}),j}P(Y=j|\mbf X = \mbf{x}) \nonumber\\
    &=1 - p_{B_Q(\mbf{x}),B_Q(\mbf{x})}P(Y=B_Q(\mbf{x})|\mbf X = \mbf{x})- \sum_{\substack{{j=1}\\{j\neq B_Q(\mbf{x})}}}^K p_{B_Q(\mbf{x}),j}P(Y=j|\mbf X = \mbf{x}) \nonumber
    \end{align}
    
    One can notice that
    \begin{align*}
        \sum_{\substack{{j=1}\\{j\neq B_Q(\mbf{x})}}}^K p_{B_Q(\mbf{x}),j}P(Y=j|\mbf X = \mbf{x}) &\leq  \max_{j\in\mathcal{Y}\setminus\{B_Q(\mbf{x})\}}p_{B_Q(\mbf{x}),j}\sum_{\substack{{j=1}\\{j\neq B_Q(\mbf{x})}}}^K P(Y=j|\mbf X = \mbf{x}) \\
        &=  \max_{j\in\mathcal{Y}\setminus\{B_Q(\mbf{x})\}}p_{B_Q(\mbf{x}),j}(1-P(Y=B_Q(\mbf{x})|\mbf X = \mbf{x})).
    \end{align*}
    
    Finally, we infer the following inequality:
   \begin{align}
    \hat{r}(\mbf{x}) &\geq (p_{B_Q(\mbf{x}),B_Q(\mbf{x})}-\max_{j\in\mathcal{Y}\setminus\{B_Q(\mbf{x})\}}p_{B_Q(\mbf{x}),j})(1-P(Y=B_Q(\mbf{x})|\mbf X = \mbf{x}))+1-p_{B_Q(\mbf{x}),B_Q(\mbf{x})} \nonumber\\
    &= \delta(\mbf{x})r(\mbf{x})+1-\alpha(\mbf{x}). \label{eq:one-x-mislabel-last-ineq-proof}
\end{align}
Taking into account the assumption that $p_{i,i}> p_{i,j},\ \forall{i,j}\in\{1,\dots,K\}^2$, we deduce that $\delta(\mbf{X})> 0$, which concludes the proof.
\end{proof}
This theorem gives us insights on how the true error rate can be bounded given the error rate of the imperfect label and the mislabeling matrix. With the quantities $\delta(\mbf{x})$ and $\alpha(\mbf{x})$, we perform a correction of $\hat{r}(\mbf{x})$. Note that when there is no mislabeling, the left and right sides of Eq. \eqref{eq:one-x-mislabel-ineq} are equal, since $\alpha(\mbf{x})=1$ and $\delta(\mbf{x})=1$ in this case. 

Note that this theorem holds also for a more general case when correction probabilities depend on the example $\mbf{x}$. In this case, all probabilities $p_{i,j}$ are replaced by $p^\mbf{x}_{i,j}:= P(\hat{Y}=i|Y=j,\mbf{X}=\mbf{x})$. Since it is harder to estimate $p^\mbf{x}_{i,j}$ compared to $p_{i,j}$, we stick to consider the class-related model described in Eq. \eqref{eq:mislab-prob-transformation}.

In the theorem, the mislabeling matrix is assumed given, while in practice it has to be estimated. Since the number of matrix entries grows quadratically with the increase of $K$, a direct estimation of the true posterior probabilities from Eq. \eqref{eq:mislab-prob-transformation} may be more affected by the estimation error than the bound itself as the latter needs to know only $2K$ entries. We give more details about estimation of the mislabeling matrix in Section \ref{sec:concl}.

The bound can be compared with a bound derived in \citet[Eq. (3.14), p. 284]{Chittineni:1980} for the optimal Bayes classifier (maximum a-posteriori rule). It is shown that $r(\mbf{x})\leq 1-\frac{1-\hat{r}(\mbf{x})}{\beta}$, where $ \beta=\max_{i=1,\dots, K}\left(\sum_{j=1}^K p_{i,j}\right)$. One can notice that the regularizer $\beta$ is constant with respect to $\mbf{x}$, so the penalization of the error rate $\hat{r}(\mbf{x})$ does not depend on the label the classifier predicts. Another limitation is that the bound assumes that the Bayes classifier is optimal.

The assumption of Theorem \ref{thm:one-ex-risk-mislab-bound} requires that the diagonal entries of the mislabeling matrix are the largest elements in their corresponding columns, which means that the imperfect label is reasonably correlated with the true label.
However, in practice, the assumption may not hold, so the theorem is not applicable. 
To overcome this, it can be relaxed by considering $\lambda>0$ such that $\lambda+\delta(\mbf{x})>0$, and so we get a bound for all choices of $Q$ on a hypothesis space $\mathcal{H}$:
\begin{align}
    r(\mbf{x}) \leq \frac{\hat{r}(\mbf{x})}{\lambda+\delta(\mbf{x})}-\frac{1-\lambda-\alpha(\mbf{x})}{\lambda+\delta(\mbf{x})}. \label{eq:one-x-mislabel-ineq-with-lam}
\end{align}
When $\delta(\mbf{x})$ is close to 0, it also avoids the bound to become arbitrarily large. 
The use of this bound is illustrated in Section \ref{sec:relax_bound} of Appendix.

\subsection{C-Bounds with Imperfect Labels}

Based on Theorem \ref{thm:one-ex-risk-mislab-bound}, we bound the generalization error  $R(B_Q)$, which is the expectation of $r(\mbf{X})$. 
By taking expectation in Eq. \eqref{eq:one-x-mislabel-ineq}, we obtain that
\begin{align}
    R(B_Q) = \E_{\mbf{X}} r(\mbf{X}) \leq \E_{\mbf{X}} \frac{\hat{r}(\mbf{X})}{\delta(\mbf{X})}-\E_{\mbf{X}}\frac{1-\alpha(\mbf{X})}{\delta(\mbf{X})}. 
    \label{eq:expect-from-one-x-risk}
\end{align}
One can see that for every $\mbf{x}$, $\hat{r}(\mbf{x})$ is multiplied by a positive weight $1/\delta(\mbf{X})>0$, so the first term of the right-hand side is a weighted generalization error  of the imperfect label. To cope with this, we derive a weighted C-bound by proposing the next theorem. 
\begin{thm}
\label{thm:w-cbound}
Let $\hat{M}$ be a random variable such that $[\hat{M}|\mbf{X}=\mbf{x}]$ is a discrete random variable that is equal to the margin $\hat{M}_Q(\mbf{x}, i)$ with probability $P(\hat{Y}=i|\mbf{X}=\mbf{x})$, $i=\{1,\dots,K\}$. Assume that every diagonal entry of the mislabeling matrix $\mbf{P}$ is the largest element in the corresponding column, i.e., $p_{i,i}> p_{i,j},\ \forall{i,j}\in\{1,\dots,K\}^2$.
Then, for all choice of $Q$ on a hypothesis space $\mathcal{H}$, and for all distributions $P(\mbf{X})$ over $\mathcal{X}$ and $P(Y|\mbf{X})$ over $\mathcal{Y}$, we have: 
\begin{align}
\label{eq:w-cbound}\tag{CBIL}
    R(B_Q)\leq \psi_{\mbf{P}} - \frac{\left(\mu_1^{\hat M,{\mbf{P}}}\right)^2}{\mu_2^{\hat M,{\mbf{P}}}},
\end{align}
if $\mu_1^{\hat M_{\mbf{P}}}>0$, where 
\begin{itemize}
    \item $\psi_{\mbf{P}}:=\E_{\mbf{X}}\frac{\alpha(\mbf{X})}{\delta(\mbf{X})} $ with $\delta$ and $\alpha$ defined as in Theorem \ref{thm:one-ex-risk-mislab-bound},
    \item $\mu_1^{\hat M,{\mbf{P}}}:=\int_{\R^{d+1}} \frac{m}{\delta(\mbf{x})} P(\hat M=m,\mbf{X}=\mbf{x})\diff\mbf{x}\diff m$ is the weighted 1st margin moment,
    \item $\mu_2^{\hat M,{\mbf{P}}}:=\int_{\R^{d+1}} \frac{m^2}{\delta(\mbf{x})} P(\hat M=m,\mbf{X}=\mbf{x})\diff\mbf{x}\diff m$ is the weighted 2nd margin moment.
\end{itemize}
\end{thm}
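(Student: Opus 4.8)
The plan is to start from the expectation bound already derived in Eq.~\eqref{eq:expect-from-one-x-risk}, namely
$$R(B_Q) \leq \E_{\mbf{X}}\frac{\hat{r}(\mbf{X})}{\delta(\mbf{X})} - \E_{\mbf{X}}\frac{1-\alpha(\mbf{X})}{\delta(\mbf{X})},$$
and to reshape each term into the announced form. The second term splits linearly as $\E_{\mbf{X}}\frac{1}{\delta(\mbf{X})} - \E_{\mbf{X}}\frac{\alpha(\mbf{X})}{\delta(\mbf{X})} = Z - \psi_{\mbf{P}}$, where I abbreviate $Z := \E_{\mbf{X}}[1/\delta(\mbf{X})]$; note $Z>0$ because $\delta(\mbf{x})>0$ under the hypothesis $p_{i,i}>p_{i,j}$, as established in Theorem~\ref{thm:one-ex-risk-mislab-bound}. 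It therefore remains to show that $\E_{\mbf{X}}\frac{\hat{r}(\mbf{X})}{\delta(\mbf{X})} \leq Z - (\mu_1^{\hat M,\mbf{P}})^2/\mu_2^{\hat M,\mbf{P}}$, since adding back $\psi_{\mbf{P}}-Z$ then cancels the two copies of $Z$ and produces the claim.

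First I would re-express $\hat{r}(\mbf{x})$ through the imperfect margin variable $\hat M$, mirroring the identity $R(B_Q)=P(M\leq 0)$ proved in Theorem~\ref{thm:prob-cbound}. Since $M_Q(\mbf{x},c)\leq 0$ precisely when $c\neq B_Q(\mbf{x})$, the definition of $\hat M$ gives $\hat{r}(\mbf{x}) = \sum_{c\neq B_Q(\mbf{x})} P(\hat Y=c\mid\mbf{X}=\mbf{x}) = P(\hat M\leq 0\mid\mbf{X}=\mbf{x})$. Writing the conditional probability as an integral over $m$, this yields $\E_{\mbf{X}}\frac{\hat{r}(\mbf{X})}{\delta(\mbf{X})} = \int_{\mathcal{X}}\int_{m\leq 0}\frac{1}{\delta(\mbf{x})}P(\hat M=m,\mbf{X}=\mbf{x})\,\diff m\,\diff\mbf{x}$, i.e.\ a $\delta$-weighted version of $P(\hat M\leq 0)$, whose weighted moments are exactly $\mu_1^{\hat M,\mbf{P}}$ and $\mu_2^{\hat M,\mbf{P}}$.

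The hard part will be that these $\delta$-weighted quantities are not the moments of a genuine probability distribution, so Cantelli--Chebyshev (Lemma~\ref{lem:cantelli-chebyshev}) cannot be applied directly. The key device is to normalize: define the probability measure $\diff\hat{P}(m,\mbf{x}) := \frac{1}{Z\,\delta(\mbf{x})}P(\hat M=m,\mbf{X}=\mbf{x})\,\diff m\,\diff\mbf{x}$, under which $\hat M$ has first moment $\mu_1^{\hat M,\mbf{P}}/Z$ and second moment $\mu_2^{\hat M,\mbf{P}}/Z$. Since $\mu_1^{\hat M,\mbf{P}}>0$ by hypothesis, the normalized mean is positive, so Cantelli--Chebyshev applies exactly as in Theorem~\ref{thm:prob-cbound} and gives $\hat{P}(\hat M\leq 0) \leq 1 - \frac{(\mu_1^{\hat M,\mbf{P}}/Z)^2}{\mu_2^{\hat M,\mbf{P}}/Z} = 1 - \frac{(\mu_1^{\hat M,\mbf{P}})^2}{Z\,\mu_2^{\hat M,\mbf{P}}}$.

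Finally I would undo the normalization via $\E_{\mbf{X}}\frac{\hat{r}(\mbf{X})}{\delta(\mbf{X})} = Z\,\hat{P}(\hat M\leq 0)$, so that the $Z$ factor conveniently cancels in the variance term and leaves $\E_{\mbf{X}}\frac{\hat{r}(\mbf{X})}{\delta(\mbf{X})} \leq Z - \frac{(\mu_1^{\hat M,\mbf{P}})^2}{\mu_2^{\hat M,\mbf{P}}}$. Substituting this and $Z-\psi_{\mbf{P}}$ back into the starting inequality, the two $Z$ terms cancel and the bound $R(B_Q)\leq \psi_{\mbf{P}} - (\mu_1^{\hat M,\mbf{P}})^2/\mu_2^{\hat M,\mbf{P}}$ follows. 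The only points demanding care are the strict positivity of $\delta$ (hence of $Z$), which licenses both the weighting and the normalization, and the observation that the stated hypothesis $\mu_1^{\hat M,\mbf{P}}>0$ is exactly what makes the normalized mean positive so that Cantelli--Chebyshev is applicable.
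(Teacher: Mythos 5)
Your proof is correct and takes essentially the same route as the paper's: your normalization constant $Z$ is exactly the paper's $\omega_{\mbf{P}}:=\E_{\mbf{X}}[1/\delta(\mbf{X})]$, your normalized measure $\hat{P}$ is the paper's weighted density $f_\omega$ (equivalently, its random variable $\hat{M}_\omega$), and both arguments apply Cantelli--Chebyshev (Lemma~\ref{lem:cantelli-chebyshev}) to this normalized weighted margin distribution before recombining with Eq.~\eqref{eq:expect-from-one-x-risk}. The differences are purely notational and in the ordering of the final cancellation of $Z$ against $\psi_{\mbf{P}}$.
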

\begin{proof}
    At first, let us introduce a normalization factor $\omega_{\sss\mbf{P}}$ defined as follows:
    \begin{align*}
        \omega_{\sss\mbf{P}}:=\E_{\mbf{X}}\frac{1}{\delta(\mbf{X})} =\int_{\R^{d+1}} \frac{P(\hat M=m,\mbf{X}=\mbf{x})}{\delta(\mbf{x})}\diff\mbf{x}\diff m.   
    \end{align*}
    Remind that $\hat r(\mbf{x})=P(\hat{M}\leq 0|\mbf{X}=\mbf{x})$. Then, we can write:
        \begin{align}
            \E_{\mbf{X}}\frac{\hat{r}(\mbf{X})}{\delta(\mbf{X})}&=\int_{\R^d} \frac{1}{\delta(\mbf{x})}P(\hat M\leq 0|\mbf{X}=\mbf{x})P(\mbf{X}=\mbf{x})\diff \mbf{x} 
            =\int_{-\infty}^{0}\int_{\R^d} \frac{P(\hat M=m,\mbf{X}=\mbf{x})}{\delta(\mbf{x})} \diff\mbf{x}\diff m \nonumber\\
            &=\omega_{\sss\mbf{P}}\int_{-\infty}^{0}\frac{\int_{\R^d} P(\hat M=m,\mbf{X}=\mbf{x})/\delta(\mbf{x})\diff\mbf{x}}{\int_{\R^{d+1}} P(\hat M=m,\mbf{X}=\mbf{x})/\delta(\mbf{x})\diff\mbf{x}\diff m}\diff m
            =\omega_{\sss\mbf{P}} P(\hat M_\omega <0),
            \label{eq:w-prob-margin-neg}
        \end{align}
    where the last equality is given by a random variable $\hat{M}_{\omega}$ coming from the density $f_{\omega}$ defined as the expression inside the integral in Eq. \eqref{eq:w-prob-margin-neg}. 
    
    We further notice that the weighted first and second moments can be represented as:
    \begin{align*}
        \mu_1^{\hat M, \mbf{P}} &= \int_{\R^{d+1}} \frac{m}{\delta(\mbf{x})} P(\hat M=m,\mbf{X}=\mbf{x})\diff\mbf{x}\diff m= \omega_{\sss\mbf{P}} \mu^{\hat M_\omega}_1,\\
        \mu_2^{\hat M, \mbf{P}} &= \int_{\R^{d+1}} \frac{m^2}{\delta(\mbf{x})} P(\hat M=m,\mbf{X}=\mbf{x})\diff\mbf{x}\diff m= \omega_{\sss\mbf{P}} \mu^{\hat M_\omega}_2.
    \end{align*}
    From this, we also obtain that $var(M_\omega) = \left(\mu_2^{\hat M, \mbf{P}}/\omega_{\sss\mbf{P}}\right)-\left(\mu_1^{\hat M, \mbf{P}}/\omega_{\sss\mbf{P}}\right)^2$.
    Then, using the Cantelli-Chebyshev inequality (Lemma \ref{lem:cantelli-chebyshev}) with $\lambda=\mu^{\hat M_f}_1=\mu_1^{\hat M, \mbf{P}}/\omega_{\sss\mbf{P}}$ we deduce the following inequality:
    \begin{align}
    P(\hat M_\omega < 0) &\leq \frac{\left(\mu_2^{\hat M, \mbf{P}}/\omega_{\sss\mbf{P}}\right)-\left(\mu_1^{\hat M, \mbf{P}}/\omega_{\sss\mbf{P}}\right)^2}{\left(\mu_2^{\hat M, \mbf{P}}/\omega_{\sss\mbf{P}}\right)-\left(\mu_1^{\hat M, \mbf{P}}/\omega_{\sss\mbf{P}}\right)^2 + \left(\mu_1^{\hat M, \mbf{P}}/\omega_{\sss\mbf{P}}\right)^2} =1-\frac{\left(\mu_1^{\hat M, \mbf{P}}\right)^2}{\omega_{\sss\mbf{P}}\mu_2^{\hat M, \mbf{P}}}. 
    \label{eq:bound-for-w-prob-neg}
    \end{align}
    Combining Eq. \eqref{eq:bound-for-w-prob-neg} and Eq. \eqref{eq:expect-from-one-x-risk} we infer \eqref{eq:w-cbound}:
    \begin{align*}
        R(B_Q) &\leq \E_{\mbf{X}} \frac{\hat{r}(\mbf{x})}{\delta(\mbf{x})}-\E_{\mbf{X}}\frac{1-\alpha(\mbf{x})}{\delta(\mbf{x})} = \omega_{\sss\mbf{P}}P(\hat M_\omega < 0) - \omega_{\sss\mbf{P}} + \psi_{\mbf{P}}
        \leq \psi_{\mbf{P}} - \frac{\left(\mu_1^{\hat M, \mbf{P}}\right)^2}{\mu_2^{\hat M, \mbf{P}}}.
    \end{align*}
\end{proof}
Given data with imperfect labels, the direct evaluation of the generalization error rate may be biased, leading to an overly optimistic evaluation. Using the mislabeling matrix $\mathbf{P}$ we derive a more conservative C-bound, where the error of $\mbf{x}$ is penalized by the factor $1/\delta(\mbf{x})$. When there is no mislabeling, $\psi_{\mbf{P}}=1$, $\mu_1^{\hat M, \mbf{P}}$ and $\mu_2^{\hat M, \mbf{P}}$ are equivalent to $\mu_1^{\hat M}$ and $\mu_2^{\hat M}$, so we obtain the regular C-bound \eqref{eq:prob-cbound}.

In particular, this general result can be used to evaluate the error rate in the semi-supervised setting when mislabeling arises from pseudo-labeling of unlabeled examples via self-learning. Comparing with the transductive bound \eqref{eq:TB} obtained as a corollary of Theorem \ref{thm:tr-bound-bayes-multi}, \eqref{eq:w-cbound} directly upper bounds the error rate, so it will be tighter in most of cases. Particularly, it can be noticed that the value of \eqref{eq:TB} is  growing with the increase of the number of classes. Note that there exists other attempts to evaluate the C-bound in the semi-supervised setting. In the binary case, \cite{Lacasse:2007,Laviolette:2011} estimated the second margin moment using additionally unlabeled data by expressing it via disagreement of hypotheses. However, this holds for the binary case only.

\vasilii{In this theorem, we have combined the mislabeling bound \eqref{eq:one-x-mislabel-ineq} with the supervised multi-class C-bound \citep{Laviolette:2017}, however, another possibility could be to combine with the bound based on the second-order Markov's inequality \citep{Masegosa:2020}. As pointed out by \cite{Masegosa:2020}, the latter can be regarded as a relaxation of the C-bound, but it is easier to estimate from data in practice. Note that the tightest bound does not always imply the lowest error, so the use of C-bound in model selection tasks may be more advantageous as it involves both the individual strength of hypotheses and correlation between their errors, while the bound of \cite{Masegosa:2020} is based on the error correlation only.}

\subsection{PAC-Bayesian Theorem for C-Bound Estimation}
When the margin mean, the margin variance and the mislabeling matrix are empirically estimated from data, evaluation of \eqref{eq:w-cbound} may be optimistically biased. In this section, we analyze the behavior of the estimate with respect to the sample size. To achieve that, we use the PAC-Bayesian theory initiated by \citet{Mcallester:1999,McAllester:2003} to derive a Probably Approximately Correct bound defined below.

\begin{thm}
\label{thm:pac-bayesian-cbound}
Under the notations of Theorem \ref{thm:w-cbound}, for any set of classifiers $\mathcal{H}$, for any prior distribution $P$ on $\mathcal{H}$ and any $\epsilon \in (0,1]$, with a probability at least $1-\epsilon$ over the choice of the sample of size $n=l+u$, for every posterior distribution $Q$ over $\mathcal{H}$, if $\mu^{\hat{M}}_1>0$ and $\tilde{\delta}(\mbf{x})>0$, we have:
 \begin{align}
     R(B_Q) \leq \tilde\psi - \frac{ \tilde{\mu}_1^2}{\tilde{\mu}_2}, \label{eq:pac-bayes-bound}
 \end{align}
 where
\begin{align*}
    \tilde{\mu}_1 &= \frac{1}{u}\sum_{i=1}^u(1/\tilde{\delta}(\mbf{x}))\sum_{c=1}^K M_Q(\mbf{x}, c) P(Y\!=\!c|\mbf{X}\!=\!\mbf{x}) - B_1 \sqrt{\frac{2}{u}\left[\kld{Q}{P} + \ln\frac{2\sqrt{u}}{\epsilon}\right]}
    \\
    \tilde{\mu}_2 &= \frac{1}{u}(1/\tilde{\delta}(\mbf{x}))\sum_{i=1}^u\sum_{c=1}^K (M_Q(\mbf{x}_i, c))^2 P(Y\!=\!c|\mbf{X}\!=\!\mbf{x}_i) + B_2 \sqrt{\frac{2}{u}\left[2\kld{Q}{P} + \ln\frac{2\sqrt{u}}{\epsilon}\right]}
    \\
    \tilde{\psi} &=
    \frac{1}{u}\sum_{i=1}^u \frac{\tilde\alpha(\mbf{x}_i)}{\tilde\delta(\mbf{x}_i)} + B_3 \sqrt{\frac{2}{u} \ln\frac{2\sqrt{u}}{\epsilon}}\\
    \tilde{\delta}(\mbf{x}) &= \hat{\delta}(\mbf{x})-\sqrt{\frac{1}{2l_{c_\mbf{x}}}\ln\frac{2\sqrt{l_{c_\mbf{x}}}}{\epsilon}}-\sqrt{\frac{1}{2l_{j_\mbf{x}}}\ln\frac{2\sqrt{l_{j_\mbf{x}}}}{\epsilon}},\text{ with } c_\mbf{x}:=B_Q(\mbf{x}), j_\mbf{x}:=\argmin_{j\in\mathcal{Y}\setminus\{c_\mbf{x})\}}l_j,\\
    \tilde\alpha(\mbf{x}) &= \hat\alpha(\mbf{x}) + \sqrt{\frac{1}{2l_{c_\mbf{x}}}\ln\frac{2\sqrt{l_{c_\mbf{x}}}}{\epsilon}},
    \end{align*} 
    and where $\hat{\delta}(\mbf{x})$ and $\hat\alpha(\mbf{x})$ are empirical estimates respectively of $\delta(\mbf{x})$ and $\alpha(\mbf{x})$ based on the available labeled set,  $\kld{Q}{P}$ is the Kullback-Leibler divergence between $Q$ and $P$, and $l_j\!=\!\sum_{i = 1}^{l}\I{y_j=j}/l$  is the proportion of the labeled training examples from the true class $j$.
    \end{thm}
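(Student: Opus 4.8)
The plan is to start from the closed-form bound \eqref{eq:w-cbound} of Theorem \ref{thm:w-cbound}, namely $R(B_Q) \le \psi_{\mbf{P}} - (\mu_1^{\hat M, \mbf P})^2/\mu_2^{\hat M, \mbf P}$, and to replace each of the three population quantities $\psi_{\mbf P}$, $\mu_1^{\hat M, \mbf P}$, $\mu_2^{\hat M, \mbf P}$ by an empirical counterpart together with a one-sided deviation term chosen so that the substitution can only increase the right-hand side. The governing observation is that the map $(\psi, \mu_1, \mu_2) \mapsto \psi - \mu_1^2/\mu_2$ is nondecreasing in $\psi$, nonincreasing in $\mu_1$ on $\{\mu_1 > 0\}$, and nondecreasing in $\mu_2$; hence it suffices to produce a high-probability upper bound $\tilde\psi \ge \psi_{\mbf P}$, a lower bound $\tilde\mu_1 \le \mu_1^{\hat M, \mbf P}$, and an upper bound $\tilde\mu_2 \ge \mu_2^{\hat M, \mbf P}$, and then plug them in. This explains the sign of every correction term in the statement, and the hypotheses $\mu^{\hat M}_1 > 0$ and $\tilde\delta(\mbf x) > 0$ are precisely what keeps the expression well-defined (nonzero denominators) and the monotonicity argument valid, since one also needs $\tilde\mu_1 \ge 0$ for $\tilde\mu_1^2/\tilde\mu_2 \le \mu_1^2/\mu_2$.

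First I would control the mislabeling-dependent weights. The quantities $\delta(\mbf x)$ and $\alpha(\mbf x)$ depend on $\mbf x$ only through the column of $\mbf P$ indexed by $B_Q(\mbf x)$, so they take at most $K$ distinct values. Estimating each entry $p_{j,c}$ from the labeled examples of class $c$ and applying a Chernoff/Hoeffding tail bound per class — which produces deviations of the form $\sqrt{\frac{1}{2 l_j}\ln\frac{2\sqrt{l_j}}{\epsilon}}$ — I would obtain, simultaneously over the finitely many classes and hence over all $\mbf x$, the one-sided estimates $\tilde\delta(\mbf x) \le \delta(\mbf x)$ and $\tilde\alpha(\mbf x) \ge \alpha(\mbf x)$. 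Conditioning on this event, $1/\tilde\delta(\mbf x) \ge 1/\delta(\mbf x) > 0$ becomes a known bounded weight and $\alpha(\mbf x)/\delta(\mbf x) \le \tilde\alpha(\mbf x)/\tilde\delta(\mbf x)$ holds pointwise, so that $\psi_{\mbf P} = \E_{\mbf X}[\alpha/\delta] \le \E_{\mbf X}[\tilde\alpha/\tilde\delta]$; a further concentration bound on the empirical mean of $\tilde\alpha/\tilde\delta$ over the $u$ unlabeled points — a bounded average whose $Q$-dependence enters only through the finitely many possible predicted labels, whence the $B_3\sqrt{\frac{2}{u}\ln\frac{2\sqrt u}{\epsilon}}$ term carries no $\kld{Q}{P}$ — yields $\psi_{\mbf P} \le \tilde\psi$.

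Next I would bound the two weighted margin moments using the PAC-Bayesian theory of \citet{McAllester:2003}. Writing the conditional expected margin as an expectation over $h \sim Q$ of a quantity in $[-1,1]$, the weighted first moment $\mu_1^{\hat M, \mbf P}$ is an average over the unlabeled sample of such expectations scaled by the frozen factor $1/\tilde\delta(\mbf x)$; a McAllester-type bound then gives, for all $Q$ simultaneously with probability $1-\epsilon$, a deviation $B_1\sqrt{\frac{2}{u}[\kld{Q}{P} + \ln\frac{2\sqrt u}{\epsilon}]}$, and subtracting it produces the lower estimate $\tilde\mu_1$. For the second moment I would expand $M_Q(\mbf x, c)^2$ as an expectation over an independent pair $(h,h') \sim Q \otimes Q$; applying PAC-Bayes to the product posterior against the product prior introduces $\kld{Q\otimes Q}{P\otimes P} = 2\kld{Q}{P}$, exactly the factor appearing in $\tilde\mu_2$, and adding the deviation yields the upper estimate $\tilde\mu_2$. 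A union bound over the tail events, with the confidence budget split across them, makes all one-sided estimates hold together with probability at least $1-\epsilon$, and substituting into the monotone map concludes.

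The hardest part will be the interaction between the estimated weights $1/\tilde\delta(\mbf x)$ and the signed margins inside $\mu_1^{\hat M, \mbf P}$: because the conditional margin can be negative on individual examples, replacing $\delta$ by the smaller $\tilde\delta$ does not by itself move the weighted first moment in a fixed direction, so the $\delta$-error and the margin-estimation error cannot be decoupled naively. The clean way around this is to condition first on the uniform weight-control event, so that $1/\tilde\delta$ is treated as a fixed known coefficient when the PAC-Bayesian concentration is applied, rather than attempting to bound $\mu_1^{\hat M, \mbf P}$ against the random population weight $1/\delta$ directly; the residual $\delta$-estimation error is then carried entirely by $\tilde\psi$ and by the definition of the frozen weights used in $\tilde\mu_1,\tilde\mu_2$. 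Keeping this conditioning consistent across all three quantities, and checking that the monotonicity plug-in remains valid once the weights have been frozen and $\tilde\mu_1 \ge 0$ is ensured, is the main bookkeeping obstacle.
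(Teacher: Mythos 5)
Your proposal follows essentially the same route as the paper: it combines the \eqref{eq:w-cbound} bound with three one-sided high-probability estimates --- per-class Hoeffding/kl-type control of the mislabeling entries (Proposition \ref{prop:pac-bound-mislab-mat}), a change-of-measure/McAllester PAC-Bayes bound for the weighted first margin moment (Proposition \ref{prop:pac-bayes-bound-first-moment}), a paired change-of-measure argument giving the $2\,\kld{Q}{P}$ factor for the second moment (Proposition \ref{prop:pac-bayes-bound-second-moment}), and a KL-free concentration for $\psi_{\mbf{P}}$ (Proposition \ref{prop:pac-bound-psi}) --- and then plugs them into the monotone map $(\psi,\mu_1,\mu_2)\mapsto\psi-\mu_1^2/\mu_2$. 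Your explicit freezing of the weights $1/\tilde\delta(\mbf{x})$ on the mislabeling-control event before applying PAC-Bayes, and the explicit splitting of the confidence budget in the union bound, are bookkeeping points the paper leaves implicit (indeed, its propositions silently identify the $\delta$-weighted population moments with their $\tilde\delta$-weighted counterparts), so your treatment is, if anything, slightly more careful while remaining the same proof.
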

The proof is a combination of Propositions \ref{prop:pac-bayes-bound-first-moment}, \ref{prop:pac-bayes-bound-second-moment} and \ref{prop:pac-bound-psi} that are deferred to Appendix \ref{sec:appendix-cbound}.
    
Thus, by using Eq. \eqref{eq:pac-bayes-bound} we additionally penalize the C-bound by the sample size and the divergence between $Q$ and $P$. As $u$ grows, the penalization becomes less severe, so $\tilde{\mu}_1$ and $\tilde{\mu}_2$ are close to $\mu^{\hat{M}}_1$ and $\mu^{\hat{M}}_2$. Similarly, $\tilde{\delta}(\mbf x)$ and $\tilde{\alpha}(\mbf x)$ are closer to $\hat{\delta}(\mbf x)$ and $\hat{\alpha}(\mbf x)$ with the increase of the number examples used to estimate the mislabeling matrix, which we take $l$ for the sake of simplicity.
Note that, in contrast to the supervised case \citep[Theorem 3]{Laviolette:2017}, $B_1$ and $B_2$ can have a drastic influence on the bound's value, when $\tilde\delta(\mbf{x})$ is close to 0, which motivates in practice to use the $\lambda$-relaxation given by Eq. \eqref{eq:one-x-mislabel-ineq-with-lam}.

The obtained bound may be used to estimate the Bayes error from data, with the pseudo-labeled unlabeled examples serving as a hold-out set for estimating the margin moments, and the labeled examples serving as a hold-out set for estimating the mislabeling matrix. In the case of the random forest, the latter can be performed in the out-of-bag fashion as in \citep{Thiemann:2017,Lorenzen:2019}. However, the bound does not appear tighter in practice compared to the supervised case \citep{Laviolette:2017} due to the additional penalization on estimation of the mislabeling matrix. Making this bound tighter could be a good direction for future work. Nevertheless, when the focus is set on model selection, a common choice is to simply use an empirical estimate of the C-bound as an optimization criterion \citep{Bauvin:2020}.

\section{Experimental Results}
\label{sec:num-exper}

In this section, we describe numerical experiments that was performed to validate our proposed framework. At first, we test in practice the multi-class self-learning algorithm (denoted by \texttt{MSLA}) described in Section \ref{sec:msla} by comparing its ability to learn on partially labeled data with other classification algorithms. Then, we illustrate the proposed \eqref{eq:w-cbound} on real data sets and analyze its behavior.
All experiments were performed on a cluster with an \texttt{Intel(R) Xeon(R) CPU E5-2640 v3} at \texttt{2.60GHz}, \texttt{32} cores, \texttt{256GB} of RAM, the \texttt{Debian 4.9.110-3 x86\_64} OS.

\subsection{Experimental Setup}

Experiments are conducted on publicly available data sets \citep{Dua:2019,Chang:2011,Xiao:2017}. Since we are interested in the practical use of our approach in the semi-supervised context, we would like to see if it has good performance when $l\ll u$. Therefore, we do not use the train/test splits that are proposed by data sources. Instead, we propose our own splits that makes a situation closer to the semi-supervised context. Each experiment is conducted 20 times, by randomly splitting an original data set on a labeled and an unlabeled parts keeping fixed their respective size at each iteration. The reported performance results are averaged over the 20 trials. We evaluate the performance as the accuracy score over the unlabeled training set (\texttt{ACC-U}).

In all our experiments, we consider the Random Forest algorithm \citep{Breiman:2001} (denoted by \texttt{RF}) with 200 trees and the maximal depth of trees as the majority vote classifier with the uniform posterior distribution. For an observation $\mbf{x}$, we evaluate the vector of class votes $\{v(\mbf{x}, i)\}_{i=1}^K$ by averaging over the trees the vote given to each class by the tree. A tree computes a class vote as the fraction of training examples in a leaf belonging to a class.


Experiments are conducted on 11 real data sets. The associated applications are image classification with the \texttt{Fashion} data set, the \texttt{Pendigits} and the \texttt{MNIST} databases of handwritten digits; 
a signal processing application with the \texttt{SensIT} data set for vehicle type classification and the human activity recognition \texttt{HAR} database; 
speech recognition using the \texttt{Vowel}, the \texttt{Isolet} and the \texttt{Letter} data sets; 
document recognition using the \texttt{Page Blocks} database; and finally applications to bioinformatics with the \texttt{Protein} and \texttt{DNA} data sets. 
The main characteristics of these data sets are summarized in Table~\ref{tab:data set-description}.
\begin{table}[t]
  \centering
    \scalebox{0.82}{
      \begin{tabular}{ccccc}
        \toprule
        Data set & \# of labeled examples, & \# of unlabeled  examples, & Dimension, & \# of classes, \\
                 & $l$ & $u$ & $d$ & $K$ \\
        \midrule
        \texttt{Vowel} & 99 & 891 & 10 & 11 \\
        \texttt{Protein} & 129 & 951 & 77 & 8 \\
        \texttt{DNA} & 31 & 3155 & 180 & 3 \\
        \texttt{PageBlocks} & 1094 & 4379 & 10 & 5 \\
        \texttt{Isolet} & 389 & 7408 & 617 & 26 \\
        \texttt{HAR} & 102 & 10197 & 561 & 6 \\
        \texttt{Pendigits} & 109 & 10883 & 16 & 10 \\
        \texttt{Letter} & 400 & 19600 & 16 & 26 \\
        \texttt{Fashion} & 175 & 69825 & 784 & 10 \\
        \texttt{MNIST} & 175 & 69825 & 784 & 10 \\
        \texttt{SensIT} & 49 & 98479 & 100 & 3 \\
        \bottomrule
      \end{tabular}}
\caption{Characteristics of data sets used in our experiments ordered by the size of the 
training set $(n=l+u)$.}
\label{tab:data set-description}
\end{table}


The proposed \texttt{MSLA} that automatically finds the threshold by minimizing the conditional Bayes error rate, is compared with the following baselines: 
\begin{itemize}
    \item a fully supervised \texttt{RF} trained using only labeled examples. The approach is obtained at the initialization step of \texttt{MSLA} and once learned it is directly applied to predict the class labels of the whole unlabeled set; 
    \item the scikit-learn implementation \citep{scikit-learn} of the graph based, label spreading algorithm \citep{Zhou:2004} denoted by \texttt{LS};
    \item the one-versus-all extension of a transductive support vector machine \cite{Joachims:1999} using the Quasi-Newton scheme. The approach was proposed by \citet{Gieseke:2014} ans is further denoted as \texttt{QN-S3VM}\footnote{The source code for the binary \texttt{QN-S3VM} is available at \url{http://www.fabiangieseke.de/index.php/code/qns3vm}.};
    \item a semi-supervised extension of the linear discriminant analysis \texttt{Semi-LDA}, which is based on the contrastive pessimistic likelihood estimation proposed by \cite{Loog:2015};
    \item a semi-supervised extension of the random forest \texttt{DAS-RF} proposed by \cite{Leistner:2009} where the classifier is repeatedly re-trained on the labeled and all the unlabeled examples with pseudo-labels optimized via deterministic annealing;
    \item the multi-class extension of the classical self-learning approach (denoted by \texttt{FSLA}) described in \citet{Tur:2005} with a fixed prediction vote threshold;
    \item a self-learning approach (denoted by \texttt{CSLA}) where the threshold is defined via curriculum learning by taking it as the $(1-t\cdot\Delta)$-th percentile of the prediction vote distribution at the step $t=1,2,\dots$ \citep{Cascantebonilla:2020}.
\end{itemize}

As the size of the labeled training examples $|\mathrm{Z}_{\mathcal{L}}|$ is small, the hyperparameter tuning can not be performed properly. At the same time, the performance of baselines may be sensitive to some of their hyperparameters.  For this reason, we compute \texttt{LS}, \texttt{QN-S3VM}, \texttt{Semi-LDA}, \texttt{DAS-RF} on a grid of parameters' values, and then choose a hyperparameter for which the performance is the best in average on 20 trials. We tune the RBF kernel parameter $\sigma\in\{10, 1.5, 0.5, 10^{-1}, 10^{-2}, 10^{-3}\}$ for \texttt{LS}, the regularization parameters $(\lambda,\lambda')\in\{10^{-1}, 10^{-2}, 10^{-3}\}^2$ for \texttt{QN-S3VM}, the learning rate $\alpha\in\{10^{-4},10^{-3},10^{-2}\}$ for \texttt{Semi-LDA}, the initial temperature $T_0\in\{10^{-3}, 5\cdot10^{-3}, 10^{-2}\}$ for \texttt{DAS-RF}. Other hyperparameters for these algorithms are left to their default values. Particularly, in \texttt{DAS-RF} the strength parameter and the number of iterations are respectively set to 0.1 and 10.
 
While the aforementioned parameters are rather data-dependent, the choice of $\theta$ for \texttt{FSLA} and $\Delta$ for \texttt{CSLA} depend more on what prediction vote distribution the base classifier outputs. After manually testing different values, we have found that \texttt{FSLA}$_{\theta=0.7}$ and \texttt{CSLA}${_\Delta=1/3}$ are good choices for the random forest. For \texttt{FSLA}, we terminate the learning procedure as soon as the algorithm makes 10 iterations, which reduces the computation time and may also improve the performance, since, in this case, the algorithm is less affected by noise. \cite{Cascantebonilla:2020} used for \texttt{CSLA} a slightly other architecture for self-learning, where the set of selected pseudo-labeled examples included just for one iteration (like if in Algorithm 1 Step 3 would be replaced by $\mathrm{Z}_{\mathcal{P}}\leftarrow S$).  In our context, we have found that the performance of \texttt{CSLA} is identical for both two architectures.

\subsection{Illustration of MSLA}
\label{sec:msla-exp}

In our setup, a time deadline is set: we stop computation for an algorithm if one trial takes more than 4 hours. Table \ref{tab:multi-class-exp-res} summarizes results obtained by \texttt{RF}, \texttt{LP}, \texttt{QN-S3VM}, \texttt{Semi-LDA}, \texttt{DAS-RF}, \texttt{FSLA}, \texttt{CSLA} and \texttt{MSLA}. We used bold face to indicate the highest performance rates and the  symbol $\downarrow$ indicates that the performance is significantly worse than the best result, according to Mann-Whitney U test \citep{Mann:1947} used at the p-value threshold of 0.01.

\setlength{\tabcolsep}{0.45em}
\begin{table}[t]
  \centering
  {\scalebox{0.7}{
      \begin{tabular}{l|cccccccc}
        \toprule
        Data set & \texttt{RF} &  \texttt{LS} & \texttt{QN-S3VM} & \texttt{Semi-LDA} & \texttt{DAS-RF} & \texttt{FSLA$_{\,\bm{\theta} = 0.7}$} & \texttt{CSLA$_{\,\Delta = 1/3}$} & \texttt{MSLA} \\
        \midrule
         \texttt{Vowel} & $.586 \pm .028$ & $\textbf{.602} \pm .026$ & $.208^\downarrow \pm .029$ & .432$^\downarrow$ $\pm$ .029 & .587 $\pm$ .028 & .531$^\downarrow$ $\pm$ .034 & .576$^\downarrow$ $\pm$ .031 & .586 $\pm$ .026 \\
         \midrule
        \texttt{Protein} & $.764^\downarrow \pm .032$ & $.825 \pm .028$ & $.72^\downarrow \pm .034$ & \textbf{.842} $\pm$ .029 & .768$^\downarrow$ $\pm$ .036 & .687$^\downarrow$ $\pm$ .036 & .771$^\downarrow$ $\pm$ .035 & .781$^\downarrow$ $\pm$ .034 \\
        \midrule
        \texttt{DNA} & $.693^\downarrow \pm .074$ & $.584^\downarrow \pm .038$ & $\textbf{.815} \pm .025$ & .573$^\downarrow$ $\pm$ .037 & .693$^\downarrow$ $\pm$ .083 & .521$^\downarrow$ $\pm$ .095 & .671$^\downarrow$ $\pm$ .112 & .702$^\downarrow$ $\pm$ .082 \\
        \midrule
        \texttt{PageBlocks} & $.965 \pm .003$ & $.905^\downarrow \pm .004$ & $.931^\downarrow \pm .003$ & .935$^\downarrow$ $\pm$ .009 & .965 $\pm$ .003 & .964 $\pm$ .004 & .965 $\pm$ .003 & \textbf{.966} $\pm$ .002 \\
        \midrule
        \texttt{Isolet} & $.854^\downarrow \pm .016$ & $.727^\downarrow \pm .01$ & $.652^\downarrow \pm .016$ & .787$^\downarrow$ $\pm$ .019 & .859$^\downarrow$ $\pm$ .018 & .7$^\downarrow$ $\pm$ .04 & .843$^\downarrow$ $\pm$ .021 & \textbf{.875} $\pm$ .014 \\
        \midrule
        \texttt{HAR} & $.851 \pm .024$ & $.215^\downarrow \pm .05$ & $.78^\downarrow \pm .02$ & .743$^\downarrow$ $\pm$ .043 & .852 $\pm$ .024 & .81$^\downarrow$ $\pm$ .041  & .841 $\pm$ .029 & \textbf{.854} $\pm$ .026 \\
        \midrule
        \texttt{Pendigits} & $.863^\downarrow \pm .022$ & $\textbf{.916} \pm .013$ & $.675^\downarrow \pm .022$ & .824$^\downarrow$ $\pm$ .012 & .872$^\downarrow$ $\pm$ .023 & .839$^\downarrow$ $\pm$ .036 & .871$^\downarrow$ $\pm$ .029 & .884$^\downarrow$ $\pm$ .022 \\
        \midrule
        \texttt{Letter} & $.711 \pm .011$ & $.664^\downarrow \pm .01$ & $.064^\downarrow \pm .013$ & .589$^\downarrow$ $\pm$ .016 & .718 $\pm$ .012 & .651$^\downarrow$ $\pm$ .015 & \textbf{.72} $\pm$ .013  & .717 $\pm$ .013 \\
        \midrule
        \texttt{Fashion} & $.718 \pm .022$ & \texttt{NA} & \texttt{NA} & .537$^\downarrow$ $\pm$ .027 & .722 $\pm$ .023 & .64$^\downarrow$ $\pm$ .04 & .713 $\pm$ .026 & \textbf{.723} $\pm$ .023 \\
        \midrule
        \texttt{MNIST} &  $.798^\downarrow \pm .015$  &  \texttt{NA} & \texttt{NA} & .423$^\downarrow$ $\pm$ .029 & .822$^\downarrow$ $\pm$ .017 & .705$^\downarrow$ $\pm$ .055 & .829$^\downarrow$ $\pm$ .02  & \textbf{.857} $\pm$ .013 \\
        \midrule
        \texttt{SensIT} &  $\textbf{.723} \pm .022$ & \texttt{NA} & \texttt{NA} &  .647$^\downarrow$ $\pm$ .042 & \textbf{.723} $\pm$ .022 & .692$^\downarrow$ $\pm$ .023 & .713 $\pm$ .024 & .722 $\pm$ .021 \\
        \bottomrule
      \end{tabular}}}
\caption{Classification performance on different data sets described in Table \ref{tab:data set-description}. The performance is computed using the accuracy score on the unlabeled training examples (\texttt{ACC-U}). The sign $^\downarrow$ shows if the performance is statistically worse than the best result on the level 0.01 of significance. \texttt{NA} indicates the case when the time limit was exceeded.}
\label{tab:multi-class-exp-res}
\end{table}

From these results it comes out that

 \begin{itemize}
    \item in 5 of 11 cases, the \texttt{MSLA} performs better than its opponents. On data sets \texttt{Isolet} and \texttt{MNIST} it significantly outperforms all the others, and it significantly outperforms the baseline \texttt{RF} on \texttt{Isolet}, \texttt{Pendigits} and \texttt{MNIST}\,(6\% improvement);
    \item the \texttt{LS} and the \texttt{QN-S3VM} did not pass the scale over larger data sets (\texttt{Fashion}, \texttt{MNIST} and \texttt{SensIT}), while the \texttt{MSLA} did not exceeded 2 minutes per trial on these data sets (see Table \ref{tab:computationTime});
    \item the performance of \texttt{LS} and \texttt{Semi-LDA} performance varies greatly on different data sets, which may be caused by the topology of data. In contrast, \texttt{MSLA} has more stable results over all data sets as it is based on the predictive score, and the \texttt{RF} is used as the base classifier;
     \item since the \texttt{QN-S3VM} is a binary classifier by nature, its one-versus-all extension is not robust with respect to the number of classes. This can be observed on \texttt{Vowel}, \texttt{Isolet} and \texttt{Letter}, where the number of classes is high;
     \item from our observation, both \texttt{LS} and \texttt{QN-S3VM} are highly sensitive to the choice of the hyperparameters. However, it is not very clear whether these hyperparameters can be properly tuned given a insufficient number of labeled examples. The same concern is applied to all the other semi-supervised baselines, while \texttt{MSLA} does not require any particular tuning since it finds automatically the threshold $\bm{\theta}$;
    \item while the approach proposed by \cite{Loog:2015} always guarantees an improvement of the likelihood compared to the supervised case, we have observed that the classification accuracy is not always improved for \texttt{Semi-LDA} and may even degrade over the supervised linear discriminant analysis;
    
    \item compared to the fully supervised approach, \texttt{RF}, the use of pseudo-labeled unlabeled training data (in \texttt{DAS-RF}, \texttt{FSLA}, \texttt{CSLA} or \texttt{MSLA}) may generally give no benefit or even degrade performance in some cases (\texttt{Vowel}, \texttt{PageBlocks}, \texttt{SensIT}). This may be due to the fact that the learning hypotheses are not met regarding the data sets where this effect is observed;
    
    \item although for \texttt{DAS-RF} the performance is usually not degraded when $T_0$ is properly chosen, it has rather little improvement compared to \texttt{RF}. The performance of \texttt{FSLA} degrades most of the time, while degradation for \texttt{CSLA} is observed on 6 data sets. The latter suggests that the choice of the threshold for pseudo-labeling is crucial and challenging in the multi-class framework. Using the proposed criterion based on Eq. \eqref{eq:cond-bayes-error}, we can find the threshold efficiently;
    
    \item from the results it can be seen that self-learning is also sensitive to the choice of the initial classifier. On some data sets, the number of labeled examples might be too small leading to a bad initialization of the first classifier trained over the labeled set. This implies that the initial votes are biased, so even with a well picked threshold we do not expect a great increase in performance (see Appendix \ref{sec:posterior-estimation} for more details).
 \end{itemize}

\begin{figure}[ht!]
\centering
\includegraphics[width=0.5 \textwidth]{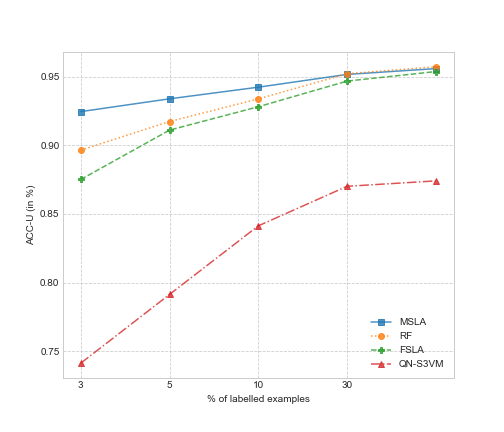}
\caption{Classification accuracy with respect to the proportion of unlabeled examples for the \texttt{MNIST} data set (a subsample of 3500 examples). On the graph, dots represent the average performance on the unlabeled examples over 20 random splits. For simplicity of illustration, the other considered algorithms are not displayed.} 
\label{fig:SmallMNIST}
\end{figure}

We also analyze the behavior of the various algorithms for growing initial amounts of labeled data in the training
set. Figure \ref{fig:SmallMNIST} illustrates this by showing the accuracy on a subsample of 3500 observations from \texttt{MNIST} of \texttt{RF}, \texttt{QN-S3VM}, \texttt{FSLA}$_{\bm{\theta}=0.7}$ and \texttt{MSLA} with respect to the percentage of the labeled training examples. In this graph, the performance of \texttt{LS} is not depicted, since it is significantly lower compared to the other methods under consideration. As expected, all performance curves increase monotonically with respect to the additional labeled data. When there are sufficient labeled training examples, \texttt{MSLA}, \texttt{FSLA} and \texttt{RF} actually converge to the same accuracy performance, suggesting that the labeled data carries out sufficient information and no additional information could be extracted from unlabeled examples.


Further, we present a comparison of the learning algorithms under consideration by analyzing their complexity.
The time complexity of the random forest \texttt{RF} is $O(T d \tilde{l}\log^2 \tilde{l})$ \citep{Louppe:2014}, where $T$ is the number of decision trees in the forest and $\tilde{l}\approx 0.632\cdot l$ is the number of training examples used for each tree. Since \texttt{RF} is employed in \texttt{DAS-RF} and self-learning, the time complexity of \texttt{DAS-RF}, \texttt{FSLA} and \texttt{CSLA}  is $O(C T d\tilde{n}\log^2 \tilde{n})$, where $C$ is the number of times \texttt{RF} has been learned, $\tilde{n}\approx 0.632\cdot n$. In our experimental setup, $C=11$ for \texttt{FSLA} and \texttt{DAS-RF}, and $C=1/\Delta +1 = 4$ for \texttt{CSLA}.

The time required for finding the optimal threshold at every iteration of the \texttt{MSLA} is $O(K^2 R^2 n)$, where $R$ is the sampling rate of the grid. From this we deduce that the complexity of \texttt{MSLA} is $O(C\max(T d n\log^2 n, K^2 R^2 n))$. As $n$ grows, the complexity is written as $O(d n\log^2 n)$, since $C, T, R$ are constant. This indicates a good scalability of all considered pseudo-labeling methods for large-scale data as they also have a memory consumption proportional to $nd$, so the computation can be performed on a regular PC even for the large-scale applications.

In the label spreading algorithm, an iterative procedure is performed, where at every step the affinity matrix is computed. Hence, the time complexity of the \texttt{LS} is $O(M n^2 d)$, where $M$ is the maximal number of iterations. From our observation, the convergence of \texttt{LS} is highly influenced by the value of $\sigma$ and the data topology. The time complexity of the \texttt{QN-S3VM} is $O(n^2 d)$ \citep{Gieseke:2014}. Both algorithms suffer from high run-time for large-scale applications. Since \texttt{LS} and \texttt{QN-S3VM} evaluate respectively the affinity matrix and the kernel matrix of size $n$ by $n$, these algorithms have also large space complexity proportional to $n^2$. From our observation, for the large-scale data (\texttt{Fashion}, \texttt{MNIST}, \texttt{SensIT}) the maximal resident set size\footnote{Maximal resident set size (maxRSS) is the peak portion of memory that was occupied in RAM during the run.} of \texttt{LS} and \texttt{QN-S3VM} may reach up to 200GB of RAM, which is practically infeasible with lack of resources.

Finally, the time complexity of \texttt{Semi-LDA} is $O(M\max(nd^2, d^3))$, where $M$ is the maximal number of iterations and $O(\max(nd^2, d^3))$ is the complexity of the linear discriminant analysis assuming $n>d$ \citep{Cai:2008}, and the space complexity is $O(nd)$. The approach pass the scale well with respect to the sample size, but may significantly slow down in the case of very large dimension. In Section \ref{sec:run-time}, we further analyze the time complexity empirically for all the methods under consideration.

\subsection{Illustration of (CBIL)}
\label{sec:cbil-exp}
In this section, we illustrate the value of \eqref{eq:w-cbound} evaluated on the unlabeled examples pseudo-labeled by \texttt{MSLA}. We study how the bound's value is penalized by the mislabeling model, so we empirically compare it with the oracle C-bound \eqref{eq:prob-cbound} evaluated as if the labels for the considered  unlabeled data would be known. 

To do so, we compute the value of the two bounds varying the number of  examples used for evaluation with respect to the prediction confidence: the pseudo-labeled examples are sorted by the value of the prediction vote in the descending order, and we keep only the first $\rho\%$ of the examples for $\rho \in \{20, 40, 60, 80, 100\}$.

\begin{figure}[h!]
    \includegraphics[width=\textwidth, trim =  0.8cm 0.8cm 1cm 1.3cm, clip=TRUE]{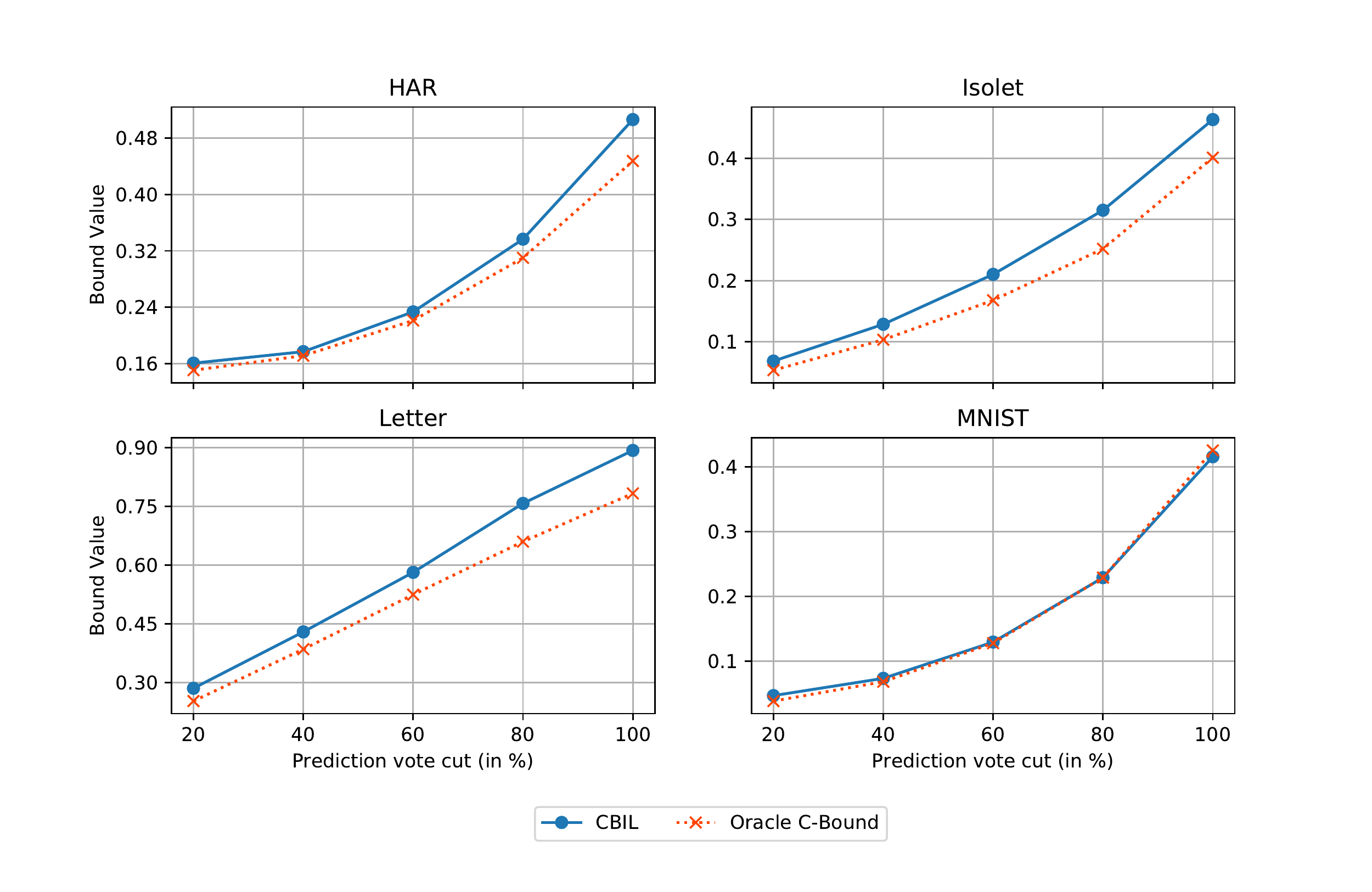}
    \caption{\eqref{eq:w-cbound} and Oracle C-Bound when varying the number of pseudo-labels on 4 data sets. We keep the most confident one (with respect to prediction vote) from $20\%$ to $100\%$.}
    \label{fig:2}
\end{figure}

We use the votes of the current classifier and expect that with increase of $\rho$ we have more mislabels, so the \eqref{eq:w-cbound} is more penalized. In \eqref{eq:w-cbound}, we use the true value of the mislabeling matrix (i.e., evaluated using the labels of unlabeled data) for clear illustration of the C-bound's penalization. In Section \ref{sec:concl}, we discuss the possible estimations of the mislabeling matrix.
 
The experimental results on 4 data sets \texttt{HAR}, \texttt{Isolet}, \texttt{Letter} and \texttt{MNIST} are illustrated in Figure \ref{fig:2}. 
As expected, the classifier makes mistakes mostly on low class votes, so the error increases when $\rho$ grows. One can see that on \texttt{Isolet}, \texttt{HAR} and \texttt{Letter}  \eqref{eq:w-cbound} is close to the oracle C-bound for small $\rho$, since most of pseudo-labels are true. When more noisy pseudo-labels are included, the difference between the two values becomes more evident, leading \eqref{eq:w-cbound} to be more pessimistic. This is probably connected with the choice of the mislabeling error model \eqref{eq:mislab-model} that is class-related and not instance-related. Although we lose some flexibility, the class-related mislabeling matrix would be easier to estimate in practice.
Finally, for \texttt{MNIST}, the two bounds are very close to each other, and the mislabeling is occasional, which is agreed with Table \ref{tab:multi-class-exp-res} as pseudo-labels are very helpful on this data set.

\section{Conclusion and Future Work}
\label{sec:concl}
In this paper, we proposed a new probabilistic framework for the multi-class semi-supervised learning. At first, we derived a bound for the transductive conditional risk of the majority vote classifier. This probabilistic bound is based on the distribution of the class vote over unlabeled examples for a predicted class. We deduced corresponding bounds on the confusion matrix norm and the error rate as a corollary and determined when the bounds are tight. 
Then, we proposed a multi-class self-learning algorithm where the threshold for selecting unlabeled data to pseudo-label is automatically found from minimization of the transductive bound on the majority vote error rate.
From the numerical results, it came out that the self-learning algorithm is sensitive to the supervised performance of the base classifier, but it can better pass the scale and significantly outperform the case when the threshold is manually fixed. 
%
However, the pseudo-labels produced by self-learning are imperfect, so we proposed a mislabeling error model to take explicitly into account these mislabeling errors.
We established the connection between the true and the imperfect output and consequently extended the  C-bound to imperfect labels, and derived a PAC-Bayesian Theorem for controlling the sample effect. The proposed bound allowed us to evaluate the performance of the learning model after pseudo-labeling the unlabeled data.
We illustrated the influence of the mislabeling error model on the bound's value on several real data sets.

We raise several open practical questions, which we detail below and leave as a subject for future work.\\
Firstly, the proposed self-learning policy has been experimentally validated when it is coupled with the random forest, but it would be interesting to test also with deep learning methods. This, however, is not straightforward. It is well known that the modern neural networks are not well calibrated, and examples are often misclassified with a high prediction vote \citep{Guo:2017}. This is a significant limitation in our case, since we make an assumption that the classifier makes its mistakes on examples with low prediction votes, which is used for the bound's approximation. Possible solutions include the use of neural network ensembles or temperature scaling.  
%
\\
Secondly, further analysis of the learning model learned on pseudo-labels is perplexing due to the so-called \textit{confirmation bias}: at every iteration, the self-learning includes into the training set unlabeled examples with highly confident predictions, which arise from classifier's overconfidence to its initial decisions that could be erroneous. This implies that the hypotheses will have small disagreement on the unlabeled set after pseudo-labeling, so the votes are no more adequate for measuring prediction confidence. A correct estimation of mislabeling probabilities or changing the way self-learning is learned are possible solutions.\\
%
Thirdly, \eqref{eq:w-cbound} requires in practice the estimation of the mislabeling matrix, which is a complex problem, but an active field of study \citep{Natarajan:2013}. Most of these studies tackle this problem from an algorithmic point of view: 
for example, in the semi-supervised setting, \cite{Krithara:2008} learn the mislabeling matrix together with the classifier
parameters through the classifier likelihood maximization for document classification;
in the supervised setting, a common approach is to detect anchor points whose labels are surely true \citep{Scott:2015}. 
A potential idea would be to transfer this idea to the semi-supervised case in order to detect the anchor points in the unlabeled set and use them together with the labeled set for correct estimation of the noise in pseudo-labels; this may require additional assumptions such as the existence of clusters \citep{Rigollet:2007,Maximov:2018} or manifold structure \citep{Belkin:2004}.
We also point out possible applications of \eqref{eq:w-cbound}.
At first, the bound can be used for model selection tasks as  semi-supervised feature selection \citep{Sheikhpour:2017}. 
Since minimization of the C-bound implies simultaneously maximization of the margin mean  and minimization of the margin variance, \eqref{eq:w-cbound} would guide a feature selection algorithm to choose an optimal feature subset based on the labeled and the pseudo-labeled sets. 
\\
Next, \eqref{eq:w-cbound} can be used as a criterion to learn the posterior $Q$
in the semi-supervised setting. This issue is actively studied in the supervised context, e.g., \cite{Roy:2016,Bauvin:2020} have been developed the boosting-based C-bound optimization algorithms.
\\
It should be noticed that for these two applications, the main objective is to rank models, so the best model has the minimal error on the unlabeled set. Hence, the bound analysis goes beyond the classical question of tightness: the tightest bound does not always imply the minimal error, and a bound relaxation can have a positive effect (see Appendix \ref{sec:relax_bound}).

%





\bibliographystyle{apalike}
\bibliography{bibjmlr.bib}

\newpage
\appendix
\section{Tools for Section \ref{sec:tr-study}}
\subsection{Tools for Theorem 3.2}
\label{AppendixProofLemma}
\begin{proof}[Proof of Lemma \ref{lem:connection-Gibbs-Bayes-multi}]
First, we obtain Eq. \eqref{eq:lemma:gibbs:multi}:
\begin{align*}
R_\mathcal{U}(G_Q,i,j) &= \frac{1}{u_i} \E_{h\sim Q}\sum_{\mathbf{x}\in X_\mathcal{U}} P(Y=i|X=\mathbf{x})\I{h(\mathbf{x}) = j} = \frac{1}{u_i} \sum_{\mathbf{x}\in X_\mathcal{U}} P(Y=i|X=\mathbf{x})v_Q(\mathbf{x},j) \\
&\geq \frac{1}{u_i} \sum_{\mathbf{x}\in X_\mathcal{U}} P(Y=i|X=\mathbf{x})v_Q(\mathbf{x},j)\I{B_Q(\mathbf{x})=j}\\
&= \frac{1}{u_i} \sum_{t=1}^{N_j}\sum_{\mathbf{x}\in X_\mathcal{U}} \left(P(Y=i|X=\mathbf{x})\I{B_Q(\mathbf{x})=j}\I{v_Q(\mathbf{x},j)=\gamma^{(t)}_j}\right)\gamma^{(t)}_j= \sum_{t=1}^{N_j} b_{i,j}^{(t)}\gamma^{(t)}_j.
\end{align*}
Then, we deduce Eq. \eqref{eq:lemma:bayes:multi}:
\begin{align*}
    R_\mathcal{U\wedge\bm{\theta}}(B_Q,i,j) &= \frac{1}{u_i} \sum_{\mathbf{x}\in X_\mathcal{U}} P(Y=i|X=\mathbf{x})\I{B_Q(\mathbf{x}) = j}\I{v_Q(\mathbf{x},j)\geq \theta_j} \\
    &= \frac{1}{u_i} \sum_{t=1}^{N_j}\sum_{\mathbf{x}\in X_\mathcal{U}} P(Y=i|X=\mathbf{x})\I{B_Q(\mathbf{x}) = j}\I{v_Q(\mathbf{x},j)=\gamma^{(t)}_j}\I{\gamma^{(t)}_j\geq \theta_j} \\
    &= \frac{1}{u_i} \sum_{t=k_j+1}^{N_j}\sum_{\mathbf{x}\in X_\mathcal{U}} P(Y=i|X=\mathbf{x})\I{B_Q(\mathbf{x}) = j}\I{v_Q(\mathbf{x},j)=\gamma^{(t)}_j} = \sum_{t=k_j+1}^{N_j} b_{i,j}^{(t)}.
\end{align*}
\end{proof}

\begin{lem}[Lemma 4 in \citet{Amini:2008}]
\label{lem:sol-lin-prog}
Let $(g_i)_{i \in \{ 1,\ldots,N\}}$ be such that $0<g_1<\dots<g_N\leq 1$. Consider also $p_i\geq 0$ for each $i\in\{1,\dots,N\}$, $B\geq 0$, $k\in\{1,\dots,N\}$. Then, the optimal solution of the linear program:
\[
\begin{cases}
\max_{\mbf{q}:=(q_1,\dots,q_N)} F(\mbf{q}) := \max_{q_1,\dots,q_N}\sum_{i=k+1}^N q_i\\
0\leq q_i\leq p_i\quad \forall i \in \{ 1,\ldots,N\}\\
\sum_{i=1}^N q_i g_i\leq B
\end{cases}
\]
will be $\mbf{q}^*$ defined as, for all $i\in \{ 1,\ldots,N\}$, $q^*_i=\min\left(p_i, \floor*{\frac{B-\sum_{j<i}q^*_jg_j}{g_i}}_+\right)\I{i>k}$; where, the sign $\floor{\cdot}_+$ denotes the positive part of a number, $\floor*{x}_+ = x\cdot \I{x>0}$.

\end{lem}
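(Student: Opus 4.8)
The plan is to recognize this linear program as a fractional-knapsack instance and to establish optimality of the stated greedy solution by an exchange argument, after which matching the greedy to the closed-form recursion is routine. Throughout I would use that $g_1>0$ forces every $g_i>0$, so divisions by $g_i$ are well defined, and that the budget constraint is an inequality $\le B$ so slackness is allowed.

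First I would dispose of the variables $q_1,\dots,q_k$. They are absent from the objective $F(\mathbf{q})=\sum_{i=k+1}^N q_i$, yet each consumes budget $g_i q_i\ge 0$ in the constraint $\sum_i q_i g_i\le B$. Hence any feasible point with $q_i>0$ for some $i\le k$ can be modified by setting that coordinate to $0$: the objective is unchanged and the budget constraint is only relaxed. This shows there is an optimal solution supported on $\{k+1,\dots,N\}$, which accounts for the factor $\I{i>k}$ and reduces the problem to maximizing $\sum_{i=k+1}^N q_i$ subject to $0\le q_i\le p_i$ and $\sum_{i=k+1}^N q_i g_i\le B$.

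Next I would prove that, on $\{k+1,\dots,N\}$, filling the coordinates in increasing order of index is optimal. The key observation is that each unit of $q_i$ adds $1$ to the objective at budget cost $g_i$, so its efficiency $1/g_i$ is strictly decreasing in $i$ since $g_{k+1}<\dots<g_N$. The exchange step is: given $a<b$ with $q_a<p_a$ and $q_b>0$, decrease $q_b$ by a small $\delta>0$ and increase $q_a$ by $\delta g_b/g_a$; for $\delta$ small this stays feasible, it keeps $\sum_i q_i g_i$ unchanged (the freed budget $\delta g_b$ equals the budget consumed), and it changes the objective by $\delta(g_b/g_a-1)=\delta(g_b-g_a)/g_a>0$. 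Applying such exchanges to any optimal solution forces every lower index to be saturated at $p_a$ before a strictly higher index carries mass, i.e.\ the optimum has the prefix structure produced by the greedy rule, and transforming an arbitrary feasible point this way never decreases the objective, so the greedy attains the optimum.

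Finally I would verify that this prefix-filling greedy is exactly the recursion $q_i^*=\min\bigl(p_i,\floor*{(B-\sum_{j<i}q_j^* g_j)/g_i}_+\bigr)\I{i>k}$: at each index $i>k$ the quantity $B-\sum_{j<i}q_j^* g_j$ is the residual budget, and $q_i$ is raised either to its cap $p_i$ or until the residual budget is exhausted (giving $q_i$ equal to the residual divided by $g_i$), whichever is smaller, while the positive part together with $g_i>0$ handles the case where the budget is already spent. I expect the main obstacle to be making the exchange argument rigorous at the boundary — in particular confirming feasibility of $\mathbf{q}^*$ and arguing carefully at the single ``critical'' index where the budget is exhausted fractionally, so that no reallocation to a higher (more expensive) index can recover the objective lost by respecting the cheaper indices first.
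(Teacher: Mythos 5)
Your overall strategy is sound and, at its core, it relies on the same mechanism as the paper's proof: the paper also begins by arguing the first $k$ coordinates must vanish, and the decisive case of its lexicographic-maximality argument is precisely your budget-preserving exchange (there, $q_K$ is increased by $\frac{g_M}{g_K}\lambda$ while $q_M$ is decreased by $\lambda$). The difference is in the packaging. The paper characterizes the optimum as the lexicographically greatest feasible point of $\{0\}^k\times\prod_{i>k}[0,p_i]$ and then computes that point explicitly, which buys uniqueness of the optimizer as a by-product. Your direct fractional-knapsack argument is more elementary, and it has a genuine advantage: it handles the slack-budget regime $B>\sum_{i>k}p_i g_i$ cleanly, whereas the paper's intermediate claim that $\sum_{i}q_i^*g_i=B$ must hold at the optimum is false in that regime (the greedy then saturates every cap and leaves budget unused).

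There is, however, one step that does not work as stated and needs repair. Exchanges are budget-neutral reallocations, so they can never increase the total budget consumed; hence ``transforming an arbitrary feasible point this way'' cannot reach the greedy. Applied to the zero vector, which is feasible and vacuously has your prefix structure (no coordinate carries mass), exchanges do nothing at all, yet the zero vector is generally far from optimal. So prefix structure alone does not identify the optimum, and your sentence ``so the greedy attains the optimum'' does not yet follow. The missing ingredient is the complementary \emph{filling} observation: at an optimum, if some coordinate $i>k$ satisfies $q_i<p_i$ while $\sum_j q_j g_j<B$, then increasing $q_i$ slightly is feasible and strictly improves the objective, a contradiction. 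Therefore at any optimum either every $q_i$ with $i>k$ equals $p_i$, or the budget constraint is tight; combined with the prefix structure delivered by your exchange argument, this pins the optimum down to exactly the stated recursion $q_i^*=\min\bigl(p_i,\max\bigl(0,(B-\sum_{j<i}q_j^*g_j)/g_i\bigr)\bigr)$ for $i>k$, in both the fully saturated and the budget-exhausted case. With that one-line addition, plus the routine remark that an optimum exists because the feasible set is compact and $F$ is continuous, your proof goes through.
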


\begin{proof}[Proof of Lemma A.1]
It can be seen that the first $k$ target variables should be zero for the optimal solution. Indeed, they do not influence explicitly the target function $F$. However, terms $g_iq_i$ for $i \in \{1,\ldots, k\}$ are positive, so their increase leads to smaller values of $q_i$ for $i \in \{k+1,\ldots, N\}$, which in their turn decrease the value of $F$. Because of this, we look for a solution in a space $\mathcal{O}=\{0\}^k\times  \prod_{i=k+1}^N [0,p_i]$. We aim to show that there is a unique optimal solution $\mbf{q}^*$ in $\mathcal{O}$.

\textbf{Existence.}
It is known that the linear program under consideration is a convex, feasible and bounded task. Hence, there is a feasible optimal solution $\mbf{q}^{opt}\in\prod_{i=1}^N[0, p_i]$. Then, we define  $\mbf{q}^{opt,\mathcal{O}}\in\mathcal{O}$:
\[
\begin{cases}
q_i^{opt,\mathcal{O}} = q_i^{opt} & \text{if } i>k\\
q_i^{opt,\mathcal{O}} = 0 & \text{otherwise}.
\end{cases}
\]
It can be seen that this solution is feasible: $F(\mbf{q}^{opt,\mathcal{O}}) = F(\mbf{q}^{opt})$. Then, there exists an optimal solution in $\mathcal{O}$. Further, the optimal solution is again designated as $\mbf{q}^*$.

\textbf{Unique representation.}
We would like to find a representation of $\mbf{q}^*$ that is, in fact, unique. Before doing it, one can notice that for $\mbf{q}^*$ the following equation is necessarily true:
\[
 \sum_{i=1}^N q_i^*g_i = B.
\]
Indeed, as $g_i$ are fixed, $\mbf{q}^*$ would not be optimal otherwise, and there would exist $\tilde{\mbf{q}}$ such that $\sum_{i=1}^N \tilde{q}_ig_i > \sum_{i=1}^N q_i^*g_i $, which implies $F(\tilde{\mbf{q}})>F(\mbf{q}^*)$. 

Let's consider the lexicographic order $\succeq$:
{\small{
\begin{multline*}
\forall(\mbf{q},\mbf{q}')\in\R^N\times\R^N, \mbf{q}\succeq \mbf{q}' \Leftrightarrow 
\left\{\mathcal{I}(\mbf{q}',\mbf{q}) = \emptyset\right\}\ \vee \left\{\mathcal{I}(\mbf{q}',\mbf{q}) \not= \emptyset \wedge \min\left(\mathcal{I}(\mbf{q},\mbf{q}')\right)<\min\left(\mathcal{I}(\mbf{q}',\mbf{q})\right)\right\},
\end{multline*}
}}
where $\mathcal{I}(\mbf{q}',\mbf{q}) = \{i|q'_i>q_i\}$.

We aim to show that the optimal solution is actually the greatest feasible solution in $\mathcal{O}$ \\for $\succeq$. Let $\mathcal{M}$ be the set$\{i>k|q^*_i<p_i\}$. Then, there are two cases:
\begin{itemize}
    \item $M=\emptyset$. It means that for all $i>k$, $q^*_i=p_i$ and $\mbf{q}^*$ is then the maximal element for $\succeq$ in $\mathcal{O}$.
    \item $M\not=\emptyset$. Let's consider $K=\min\{i>k|q^*_i<p_i\},\ M = \mathcal{I}(\mbf{q},\mbf{q}^*)$. By contradiction, suppose $\mbf{q}^*$ is not the greatest feasible solution for $\succeq$ and there is $\mbf{q}\in\R^N$ such that $\mbf{q}\succ \mbf{q}^*$.
        \begin{enumerate}
            \item $M\leq k$. Then, $q_M > q^*_M = 0$. It implies that $\mbf{q}\not\in\mathcal{O}$.
            \item $k<M<K$. Then, $q_M > q^*_M = p_M$. The same, $\mbf{q}\not\in\mathcal{O}$.
            \item $M\geq K$. Then, $F(\mbf{q})>F(\mbf{q}^*)$. But it means that $\sum_{i=1}^N q_ig_i > \sum_{i=1}^N q_i^*g_i  = B$.  
        \end{enumerate}
\end{itemize}

Hence, we conclude that if the solution is optimal then it is necessarily the greatest feasible solution for $\succeq$. Let's prove that if a solution is not the greatest feasible one then it can not be optimal. With this statement, uniqueness would be proven.

Consider $\mbf{q}\in\mathcal{O}$ such that $\mbf{q}^*\succ \mbf{q}$.
\begin{itemize}
    \item $\mathcal{I}(\mbf{q},\mbf{q}^*) = \emptyset$. Then, $F(\mbf{q}^*)>F(\mbf{q})$ and $\mbf{q}$ is not optimal.
    \item $\mathcal{I}(\mbf{q},\mbf{q}^*) \not= \emptyset$. Let $K=\min\left(\mathcal{I}(\mbf{q}^*,\mbf{q})\right)$ and $M = \min\left(\mathcal{I}(\mbf{q},\mbf{q}^*)\right)$. Then, $q_M>q^*_M\geq 0$ and $K<M$. 
    Denote $\lambda=\min\left(q_M, \frac{g_M}{g_K}(p_K-q_K)\right)$ and define $\mbf{q}'$ by:
    \[
    q'_i = q_i,\ i\not\in\{K,M\}, \quad q'_K = q_K + \frac{g_M}{g_K}\lambda \quad q'_M = q_M-\lambda
    \]
\end{itemize}
It can be observed that $\mbf{q}'$ satisfies the box constraints. Moreover, $F(\mbf{q}') = F(\mbf{q})+\lambda(g_M/g_K - 1)>F(\mbf{q})$ since $g_K<g_M$ and $\lambda>0$. Thus, $\mbf{q}$ is not optimal.
Summing up, it is proven that there is the only optimal solution in $\mathcal{O}$ and it is the greatest feasible one for $\succeq$.

Then, let's obtain an explicit representation of this solution. As it is the greatest one in lexicographical order, we assign $q_i$ for $i>k$ to maximal feasible values, which are $p_i$. It continues until the moment when $\sum_{j=1}^i q_ig_i$ is close to $B$. Denote by $I$ the index such that $\sum_{i=1}^{I-1} p_ig_i\leq B$, but $\sum_{i=1}^{I} p_ig_i\geq B$.
\begin{itemize}
    \item $\sum_{i=1}^{I-1} p_ig_i=B$. Then, $q_i = 0$ for $i\geq I$. It can be also written in the following way: 
    $$q_i = \floor*{\frac{B-\sum_{j<i}q_jg_j}{g_i}}_+, \qquad i\geq I$$.
    \item $\sum_{i=1}^{I-1} p_ig_i<B$. Then, $q_I$ is equal to residual:
    $$q_I = \frac{B-\sum_{j<I}q_jg_j}{g_I} = \floor*{\frac{B-\sum_{j<I}q_jg_j}{g_I}}_+.$$
    For the other $q_i$, $i>I$ we assign to 0.
\end{itemize}
\end{proof}

\subsection{Tools for Proposition \ref{prop:tight-bayes-multi}}
\begin{lem}
\label{lem:lem-for-proposition}
For all $\mathbf{x}\in\mathrm{X}_{\sss\mathcal{U}}$, for all $(i,j)\in \{1,\ldots,K\}^2,$ the following inequality holds:
\begin{multline}
\label{eq:prop-multi:1.1}
R_\mathcal{U}(B_Q,i,j) \geq \frac{1}{u_i}\sum_{\mathbf{x}\in\mathrm{X}_{\sss\mathcal{U}}}P(Y=i|X=\mathbf{x})\I{B_Q(\mathbf{x})=j}\I{v_Q(\mathbf{x},j)<\gamma^*} \\+ \frac{1}{\gamma^*}\floor*{\floor{K_{i,j}-M_{i,j}^<(\gamma^*)}_+ - r_{i,j}}_+ + r_{i,j},
\end{multline}
where $\gamma^* := \sup\{\gamma\in\Gamma_j|\sum_{\mathbf{x}\in\mathrm{X}_{\sss\mathcal{U}}}P(Y=i|X=\mathbf{x})\I{B_Q(\mathbf{x})=j}\I{v_Q(\mathbf{x},j)=\gamma}/u_i> \tau\}$.
\end{lem}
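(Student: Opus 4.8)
The plan is to lower-bound $R_\mathcal{U}(B_Q,i,j)$ by partitioning the examples on which $B_Q$ outputs $j$ according to where their vote $v_Q(\mathbf{x},j)$ lies relative to the threshold vote $\gamma^*$. First I would note that grouping by distinct vote values gives $R_\mathcal{U}(B_Q,i,j)=\sum_{t=1}^{N_j} b^{(t)}_{i,j}$, and that $\gamma^*$, being a supremum over the finite set $\Gamma_j$, is attained, say $\gamma^*=\gamma^{(p)}_j$ (with $\gamma^*>0$, since it is a vote to a predicted class). Splitting the three regimes $\gamma^{(t)}_j<\gamma^*$, $\gamma^{(t)}_j=\gamma^*$, $\gamma^{(t)}_j>\gamma^*$ yields
\[
R_\mathcal{U}(B_Q,i,j)=\underbrace{\sum_{t:\gamma^{(t)}_j<\gamma^*} b^{(t)}_{i,j}}_{=:A}+b^{(p)}_{i,j}+\underbrace{\sum_{t:\gamma^{(t)}_j>\gamma^*} b^{(t)}_{i,j}}_{=:D},
\]
where $A$ is exactly the first term of the claimed right-hand side. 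Since $\gamma^{(t)}_j\le 1$, the upper block satisfies $D\ge\sum_{t:\gamma^{(t)}_j>\gamma^*} b^{(t)}_{i,j}\gamma^{(t)}_j=r_{i,j}$, so it suffices to establish the single scalar inequality $\frac{1}{\gamma^*}\floor*{\floor{K_{i,j}-M_{i,j}^<(\gamma^*)}_+ - r_{i,j}}_+\le b^{(p)}_{i,j}$.

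The crux is controlling $K_{i,j}-M_{i,j}^<(\gamma^*)$. Writing $K_{i,j}=\sum_{t} b^{(t)}_{i,j}\gamma^{(t)}_j$ and $M_{i,j}^<(\gamma^*)=\sum_{t:\gamma^{(t)}_j<\gamma^*} B^{(t)}_{i,j}\gamma^{(t)}_j$, with $B^{(t)}_{i,j}$ as in the proof of Theorem \ref{thm:tr-bound-bayes-multi}, and applying the termwise bound $b^{(t)}_{i,j}\le B^{(t)}_{i,j}$ on the low-vote block, I would obtain that $\sum_{t:\gamma^{(t)}_j<\gamma^*} b^{(t)}_{i,j}\gamma^{(t)}_j - M_{i,j}^<(\gamma^*)\le 0$ and hence
\[
K_{i,j}-M_{i,j}^<(\gamma^*)\le\sum_{t:\gamma^{(t)}_j\ge\gamma^*} b^{(t)}_{i,j}\gamma^{(t)}_j = b^{(p)}_{i,j}\gamma^*+r_{i,j}.
\]

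The rest closes mechanically. Regardless of the sign of $K_{i,j}-M_{i,j}^<(\gamma^*)$, the displayed bound together with nonnegativity of $b^{(p)}_{i,j}$, $\gamma^*$ and $r_{i,j}$ gives $\floor{K_{i,j}-M_{i,j}^<(\gamma^*)}_+ - r_{i,j}\le b^{(p)}_{i,j}\gamma^*$; applying the monotone positive part and dividing by $\gamma^*$ yields the scalar inequality above, and combining it with $A$ and $D\ge r_{i,j}$ reconstitutes $A+b^{(p)}_{i,j}+D=R_\mathcal{U}(B_Q,i,j)$, which is the claim. I expect the only real obstacle to be bookkeeping: $M_{i,j}^<$ and the $B^{(t)}_{i,j}$ sum over \emph{all} unlabeled examples, whereas $K_{i,j}$, $r_{i,j}$ and the $b^{(t)}_{i,j}$ additionally carry the indicator $\I{B_Q(\mathbf{x})=j}$, so I must apply $b^{(t)}_{i,j}\le B^{(t)}_{i,j}$ on precisely the right block and track the cancellation of $r_{i,j}$ carefully.
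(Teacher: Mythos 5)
Your proof is correct and follows essentially the same route as the paper's: you split $R_\mathcal{U}(B_Q,i,j)$ into the blocks below, at, and above $\gamma^*$, lower-bound the high-vote block by $r_{i,j}$ via $\gamma_j^{(t)}\leq 1$, and your key inequality $K_{i,j}-M_{i,j}^<(\gamma^*)\leq b^{(p)}_{i,j}\gamma^*+r_{i,j}$ (from $b^{(t)}_{i,j}\leq B^{(t)}_{i,j}$ on the low-vote block) is exactly the paper's lower bound on $b^{(p)}_{i,j}$ rearranged. The positive-part bookkeeping at the end also matches the paper's identity $\floor*{K_{i,j}-M_{i,j}^<(\gamma^*)-r_{i,j}}_+=\floor*{\floor*{K_{i,j}-M_{i,j}^<(\gamma^*)}_+-r_{i,j}}_+$, so there is nothing to fix.
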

\begin{proof}
Denote $\gamma^*=\gamma_j^{(p)}$. 
According to Lemma \ref{lem:connection-Gibbs-Bayes-multi}, 
$K_{i,j} = \sum_{n=1}^{N_j} b^{(n)}_{i,j}\gamma^{(n)}_j$
, where $b_{i,j}^{(n)} := \frac{1}{u_i}\sum_{\mathbf{x}\in\mathrm{X}_{\sss\mathcal{U}}}P(Y=i|X=\mathbf{x})\I{B_Q(\mathbf{x})=j}\I{v_Q(\mathbf{x},j)=\gamma^{(n)}_j}$.
We can express $b_{i,j}^{(p)}$ in the following way:
\[
b_{i,j}^{(p)} = \frac{K_{i,j} - \sum_{n=1}^{p-1} b^{(n)}_{i,j}\gamma^{(n)}_j - \sum_{n=p+1}^{N_j} b^{(n)}_{i,j}\gamma^{(n)}_j}{\gamma_j^{(p)}} = \frac{K_{i,j} - \sum_{n=1}^{p-1} b^{(n)}_{i,j}\gamma^{(n)}_j - r_{i,j}}{\gamma_j^{(p)}}.
\]
Remind $B^{(n)}_{i,j} = \frac{1}{u_i}\sum_{\mathbf{x}\in\mathrm{X}_{\sss\mathcal{U}}}P(Y=i|X=\mathbf{x})\I{v_Q(\mathbf{x},j)=\gamma^{(n)}_j}$. From this we derive the following:
$$-\sum_{n=1}^{p-1} b^{(n)}_{i,j}\gamma^{(n)}_j \geq -\sum_{n=1}^{p-1} B^{(n)}_{i,j}\gamma^{(n)}_j = - M_{i,j}^<(\gamma_j^{(p)})= - M_{i,j}^<(\gamma^*).$$
Taking into account this as well as $b_{i,j}^{(p)}\geq 0$, we deduce a lower bound for $b_{i,j}^{(p)}$:
\begin{equation}
\label{eq:prop-multi:2}
b_{i,j}^{(p)}\geq\frac{1}{\gamma^*}\floor{K_{i,j}-M_{i,j}^<(\gamma^*) - r_{i,j}}_+ = \frac{1}{\gamma^*}\floor*{\floor{K_{i,j}-M_{i,j}^<(\gamma^*)}_+ - r_{i,j}}_+.
\end{equation}
Also, taking into account Lemma \ref{lem:connection-Gibbs-Bayes-multi}, one can notice that:
\begin{align}
\label{eq:prop-multi:3}
R_\mathcal{U}(B_Q,i,j) &= \sum_{n=1}^{N_j} b_{i,j}^{(n)}
=\sum_{n=1}^{p-1}b_{i,j}^{(n)} + b_{i,j}^{(p)} + \sum_{n=p+1}^{N_j}b_{i,j}^{(n)} \nonumber \\
&\geq \frac{1}{u_i}\sum_{\mathbf{x}\in\mathrm{X}_{\sss\mathcal{U}}}P(Y=i|X=\mathbf{x})\I{B_Q(\mathbf{x})=j}\I{v_Q(\mathbf{x},j)<\gamma^*} + b_{i,j}^{(p)} + r_{i,j},
\end{align}
since $\sum_{n=p+1}^{N_j}b_{i,j}^{(n)}\geq \sum_{n=p+1}^{N_j}b_{i,j}^{(n)}\gamma_j^{(n)}$.
Combining Eq. \eqref{eq:prop-multi:2} and Eq. \eqref{eq:prop-multi:3} we infer Eq. \eqref{eq:prop-multi:1.1}:
\begin{multline*}
R_\mathcal{U}(B_Q,i,j) \geq \frac{1}{u_i}\sum_{\mathbf{x}\in\mathrm{X}_{\sss\mathcal{U}}}P(Y=i|X=\mathbf{x})\I{B_Q(\mathbf{x})=j}\I{v_Q(\mathbf{x},j)<\gamma^*} \\+ \frac{1}{\gamma^*}\floor*{\floor{K_{i,j}-M_{i,j}^<(\gamma^*)}_+ - r_{i,j}}_+ + r_{i,j}.
\end{multline*}
\end{proof}

\section{Tools for Section \ref{sec:c-bound}}
\label{sec:appendix-cbound}
\subsection{Tools for Theorem \ref{thm:prob-cbound}}
\begin{lem}[Cantelli-Chebyshev inequality][Ex 2.3 in \cite{MassartBook}]
\label{lem:cantelli-chebyshev}
    Let $Z$ be a random variable with the mean $\mu$ and the variance $\sigma^2$. Then, for every $a>0$, we have:
    \[
    P(Z\leq \mu - a) \leq \frac{\sigma^2}{\sigma^2 + a^2}.
    \]
\end{lem}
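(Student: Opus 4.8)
The plan is to establish this one-sided tail bound via a shift-and-square argument combined with Markov's inequality and a one-dimensional optimization. First I would center the variable: writing $W := \mu - Z$, we have $\E W = 0$ and $\mathrm{Var}(W) = \sigma^2$, and since $\{Z \le \mu - a\} = \{W \ge a\}$ the claim is equivalent to showing $P(W \ge a) \le \sigma^2/(\sigma^2 + a^2)$.

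Next, for an arbitrary shift $t \ge 0$ I would exploit that squaring is monotone on the nonnegative reals. On the event $\{W \ge a\}$ we have $W + t \ge a + t$, and because $a > 0$ ensures $a + t > 0$, this yields $(W+t)^2 \ge (a+t)^2$. Hence $\{W \ge a\} \subseteq \{(W+t)^2 \ge (a+t)^2\}$, and applying Markov's inequality to the nonnegative random variable $(W+t)^2$ gives
\[
P(W \ge a) \le \frac{\E[(W+t)^2]}{(a+t)^2} = \frac{\sigma^2 + t^2}{(a+t)^2},
\]
where the last equality uses $\E[(W+t)^2] = \E[W^2] + 2t\,\E W + t^2 = \sigma^2 + t^2$.

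It then remains to optimize the right-hand side over $t \ge 0$. Differentiating $g(t) := (\sigma^2 + t^2)/(a+t)^2$, the numerator of $g'(t)$ is proportional to $at - \sigma^2$, so $g$ has a unique critical point at $t^* = \sigma^2/a > 0$, which is a minimizer since $g'$ changes sign from negative to positive there. Substituting $t^*$ back collapses the bound to $\sigma^2/(\sigma^2 + a^2)$, completing the argument. The only step that needs genuine care is the event inclusion, where I must track signs so that the squaring step is justified (the inequality $a>0$ is precisely what guarantees $a+t>0$ and hence monotonicity of squaring on $\{W\ge a\}$); everything afterward is a routine single-variable optimization, and I would present the minimizing $t^*$ directly rather than carrying the free parameter through.
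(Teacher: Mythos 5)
Your proof is correct and complete: the centering $W=\mu-Z$, the event inclusion $\{W\ge a\}\subseteq\{(W+t)^2\ge(a+t)^2\}$ justified by $a+t>0$, the Markov step giving $(\sigma^2+t^2)/(a+t)^2$, and the optimization at $t^*=\sigma^2/a$ are all sound, and this is exactly the canonical derivation of Cantelli's inequality. The paper itself gives no proof — it cites the lemma as Exercise 2.3 of the referenced monograph — so your argument simply supplies, correctly, the standard proof that the citation delegates.
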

\subsection{Tools for Theorem \ref{thm:pac-bayesian-cbound}}

\subsubsection{Bounds for the Mislabeling Matrix' Entries}
We remind that the imperfection is summarized through the mislabeling matrix $\mathbf{P} = (p_{i,j})_{1\leq i,j \leq K}$ with 
\begin{align*}
    p_{i, j} := P(\hat Y=i|Y=j)  \quad\text{ for all } (i,j)\in \{1,\dots,K\}^2
\end{align*}
such that $\sum_{i=1}^K p_{i,j} = 1$. Also, recall that $\delta(\mbf{x}) := p_{B_Q(\mbf{x}), B_Q(\mbf{x})} -  \max_{j\in\mathcal{Y}\setminus\{B_Q(\mbf{x})\}} p_{B_Q(\mbf{x}), j}$ and $\alpha(\mbf{x})=p_{B_Q(\mbf{x}), B_Q(\mbf{x})}$.

\begin{prop} 
\label{prop:pac-bound-mislab-mat}
Let $\mbf{P}$ be the mislabeling matrix, and assume that $p_{i,i}> p_{i,j},\ \forall{i,j}\in\{1,\dots,K\}^2$. For any $\epsilon \in (0,1]$, with probability $1-\epsilon$ over the choice of the $l$ sample, for all $(i,j)\in \{1,\ldots,K\}^2$, for all $\mbf{x}\in\mathcal{X}$,
\begin{align}
    &\hat{p}_{j,c} - r(l_c) \leq p_{j,c} \leq \hat{p}_{j,c} +  r(l_c), \label{eq:pac-mislab-entry}\\
    &\alpha(\mbf{x}) \leq \hat{\alpha}(\mbf{x}) +  r(l_{c_\mbf{x}}),  \label{eq:pac-alpha}\\
    &\frac{1}{\delta(\mbf{x})} \leq \frac{1}{\hat{\delta}(\mbf{x}) - r(l_{c_\mbf{x}}) - r(l_{j_\mbf{x}})},\ \text{ if } \hat{\delta}(\mbf{x}) \geq r(l_{c_\mbf{x}}) + r(l_{j_\mbf{x}}), \label{eq:pac-delta}
\end{align}
where 
\begin{itemize}
    \item $r(l_k) = \sqrt{\frac{1}{2l_k}\ln\frac{2\sqrt{l_k}}{\epsilon}}$,
    \item $l_k =  \sum_{i = 1}^{l}\I{y_i=k}/l$  is the proportion of the labeled training examples from the true class $k$,
    \item $c_\mbf{x}:=B_Q(\mbf{x})$, $j_\mbf{x}:=\argmin_{j\in\mathcal{Y}\setminus\{c_\mbf{x})\}}l_j$,
    \item $\hat{p}_{j,c}$, $\hat{\alpha}(\mbf{x})$ and $\hat{\delta}(\mbf{x})$ are empirical estimates respectively of $p_{j,c}$, $\alpha(\mbf{x})$ and $\delta(\mbf{x})$ based on the available $l$ sample.
\end{itemize}
\end{prop}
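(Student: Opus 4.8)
The plan is to reduce everything to an entry-wise concentration inequality for the empirical mislabeling probabilities, and then to propagate those deviations through the definitions of $\alpha(\mbf{x})$ and $\delta(\mbf{x})$. First I would fix a column index $c\in\{1,\dots,K\}$ and observe that $\hat p_{j,c}$ is the empirical frequency of the event $\hat Y=j$ computed over the subsample of labeled points whose true label is $c$; there are $l_c\cdot l$ such points, and conditionally on their true labels the indicators $\I{\hat Y_i=j}$ are i.i.d.\ Bernoulli with mean $p_{j,c}$. Applying a Maurer-type moment bound $\E\,e^{m\,\mathrm{kl}(\hat p\,\|\,p)}\le 2\sqrt{m}$ together with Pinsker's inequality $2(\hat p-p)^2\le \mathrm{kl}(\hat p\,\|\,p)$ yields, for a single entry, a two-sided deviation of order $r(l_c)=\sqrt{\tfrac{1}{2l_c}\ln\tfrac{2\sqrt{l_c}}{\epsilon}}$; the $2\sqrt{\cdot}$ inside the logarithm is exactly what the moment bound contributes. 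Taking a union bound over the finitely many matrix entries and allocating the confidence budget $\epsilon$ accordingly gives Eq.~\eqref{eq:pac-mislab-entry} simultaneously for all $(j,c)$.

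Once Eq.~\eqref{eq:pac-mislab-entry} is in force, the remaining two claims are deterministic consequences, so there is no further probability to spend; importantly, they also hold uniformly in $\mbf{x}$, because $c_{\mbf{x}}=B_Q(\mbf{x})$ and $j_{\mbf{x}}$ merely select among the finitely many controlled entries. For Eq.~\eqref{eq:pac-alpha} I would simply instantiate the upper side of Eq.~\eqref{eq:pac-mislab-entry} at $j=c=c_{\mbf{x}}$, giving $\alpha(\mbf{x})=p_{c_{\mbf{x}},c_{\mbf{x}}}\le \hat p_{c_{\mbf{x}},c_{\mbf{x}}}+r(l_{c_{\mbf{x}}})=\hat\alpha(\mbf{x})+r(l_{c_{\mbf{x}}})$.

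For Eq.~\eqref{eq:pac-delta} I would lower bound $\delta(\mbf{x})=p_{c_{\mbf{x}},c_{\mbf{x}}}-\max_{j\neq c_{\mbf{x}}}p_{c_{\mbf{x}},j}$. The diagonal term is handled by the lower side of Eq.~\eqref{eq:pac-mislab-entry}, $p_{c_{\mbf{x}},c_{\mbf{x}}}\ge \hat p_{c_{\mbf{x}},c_{\mbf{x}}}-r(l_{c_{\mbf{x}}})$. For the off-diagonal maximum I would use the upper side entry-wise, $p_{c_{\mbf{x}},j}\le \hat p_{c_{\mbf{x}},j}+r(l_j)$, and then replace the $j$-dependent radius by the uniform one $r(l_{j_{\mbf{x}}})$, which is legitimate because $r$ is decreasing in the class count and $j_{\mbf{x}}=\argmin_{j\neq c_{\mbf{x}}}l_j$ has the fewest labeled examples, hence the largest radius. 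Since this radius is now constant in $j$ it pulls out of the maximum, yielding $\max_{j\neq c_{\mbf{x}}}p_{c_{\mbf{x}},j}\le \max_{j\neq c_{\mbf{x}}}\hat p_{c_{\mbf{x}},j}+r(l_{j_{\mbf{x}}})$ and therefore $\delta(\mbf{x})\ge \hat\delta(\mbf{x})-r(l_{c_{\mbf{x}}})-r(l_{j_{\mbf{x}}})$. Under the stated positivity condition $\hat\delta(\mbf{x})\ge r(l_{c_{\mbf{x}}})+r(l_{j_{\mbf{x}}})$ both sides are positive, so inverting reverses the inequality and produces Eq.~\eqref{eq:pac-delta}; the diagonal-dominance assumption $p_{i,i}>p_{i,j}$ guarantees $\delta(\mbf{x})>0$, so that $1/\delta(\mbf{x})$ is well defined.

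The step I expect to be the main obstacle is the probabilistic bookkeeping of the first paragraph: making the single clean radius $r(l_k)$ with the precise constant $2\sqrt{l_k}/\epsilon$ survive the union over the $K^2$ entries (and over the two tails) while keeping the overall confidence at $1-\epsilon$. The subsample sizes $l_c\cdot l$ are themselves random, so I would either condition on the realized class counts or treat the $l_c$ as effective sample sizes, and I would have to verify that the confidence split across columns is absorbed into the logarithmic term rather than surfacing as an extra $K$-factor. Everything downstream is purely algebraic once that accounting is settled.
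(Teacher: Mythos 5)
Your proposal is correct and follows essentially the same route as the paper: the entry-wise bound is obtained exactly as you describe (Markov's inequality applied to $\exp\{2 l_c(\hat p_{j,c}-p_{j,c})^2\}$, then Pinsker's inequality and Maurer's lemma supplying the $2\sqrt{l_c}$ factor inside the logarithm), and the propagation to $\alpha(\mbf{x})$ and $\delta(\mbf{x})$ — lower-bounding the diagonal entry, upper-bounding the off-diagonal maximum, and replacing the radius at the maximizing index by the worst-case radius $r(l_{j_\mbf{x}})$ via monotonicity — matches the paper's algebra step for step. The one place where you are more careful than the paper is the union bound over the $K^2$ entries: the paper applies the per-column Markov argument at confidence $\epsilon$ and simply asserts the simultaneous statement for all $(i,j)$, so your worry about an extra $K$-factor surfacing inside the logarithm is well-founded, but it is left unresolved by the paper itself rather than being absorbed by some argument you might be missing.
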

\begin{proof}
Let $S_{j}$ denote the subset of the available examples for which the true class is $j$.
Consider the non-negative random variable $\exp\left\{2 l_j(\hat{p}_{i,j}-p_{i,j} )^2\right\}$. 
From the Markov inequality we obtain that the following holds with probability at least $1-\epsilon$ over $S_j\sim P(\mbf{X}|Y=j)^{l_j}$:
\begin{align}
\label{eq:th-b7-1}
\exp\left\{2 l_j(\hat{p}_{i,j}-p_{i,j} )^2\right\} \leq \frac{1}{\delta}\E_{S_j} \exp\left\{2 l_j(\hat{p}_{i,j}-p_{i,j} )^2 \right\}.
\end{align}
By successively applying Lemma \ref{lem:pinsker} and Lemma \ref{prop:Maurer}, we deduce that
\begin{align}
 \E_{S_j} \exp\left\{2 l_j(\hat{p}_{i,j}-p_{i,j} )^2 \right\} &\leq \E_{S_j} \exp\left\{ l_j\cdot kl(\hat{p}_{i,j}||p_{i,j} ) \right\} 
 \leq 2\sqrt{l_j}. \label{eq:th-b7-2}
\end{align}
 
Combining Eq. \eqref{eq:th-b7-1} and Eq. \eqref{eq:th-b7-2}, we infer $2 l_j(\hat{p}_{i,j}-p_{i,j} )^2\leq \ln\left(2\sqrt{l_j}/\delta\right)$. Eq. \eqref{eq:pac-mislab-entry} is directly obtained from the last inequality, and hence, we derive also Eq. \eqref{eq:pac-alpha}. To prove Eq. \eqref{eq:pac-delta}, let us define 
$$k_{\mbf{x}} := \argmax_{k\in\mathcal{Y}\setminus\{B_Q(\mbf{x})\}} p_{c_\mbf{x}, k}, \qquad \hat{k}_{\mbf{x}} := \argmax_{k\in\mathcal{Y}\setminus\{B_Q(\mbf{x})\}} \hat{p}_{c_\mbf{x}, k}.$$
Then, we write:
\begin{align*}
    \frac{1}{\delta(\mbf{x})} &= \frac{1}{p_{c_\mbf{x}, c_\mbf{x}}-p_{c_\mbf{x}, k_\mbf{x}}}\leq \frac{1}{p_{c_\mbf{x}, c_\mbf{x}}-p_{c_\mbf{x}, k_\mbf{x}} - r(l_{c_\mbf{x}}) - r(l_{k_\mbf{x}})}\\
    &\leq \frac{1}{p_{c_\mbf{x}, c_\mbf{x}}-p_{c_\mbf{x}, \hat{k}_\mbf{x}} - r(l_{c_\mbf{x}}) - r(l_{j_\mbf{x}})} = 
    \frac{1}{\hat{\delta}(\mbf{x}) - r(l_{c_\mbf{x}}) - r(l_{j_\mbf{x}})}.
\end{align*}
These transitions hold only when the denominator is positive, which is ensured if $\hat{\delta}(\mbf{x}) \geq r(l_{c_\mbf{x}}) + r(l_{j_\mbf{x}})$.
\end{proof}

\begin{lem}[Pinsker’s Inequality for Bernoulli random variables, Theorem 4.19 in  \cite{MassartBook}]
\label{lem:pinsker}
For all $p_1,p_2\in[0,1]^2$,
\begin{align*}
    &2(p_2\!-\!p_1)^2 \leq kl(p_2||p_1)\\
    &kl(p_2||p_1)\!:=\! p_2\ln\frac{p_2}{p_1}+(1\!-\!p_2)\ln\frac{1\!-\!p_2}{1\!-\!p_1} = \kld{P_2}{P_1},
\end{align*}
where $P_2$ and $P_1$ are Bernoulli distributions with parameters $p_2$ and $p_1$ respectively.
\end{lem}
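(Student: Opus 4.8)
The plan is to prove the inequality pointwise in $p_2$ for a fixed $p_1$, reducing it to a one-dimensional convexity argument. Fix $p_1 \in (0,1)$ and define, for $p_2 \in [0,1]$,
\[
g(p_2) := kl(p_2||p_1) - 2(p_2 - p_1)^2,
\]
with the usual convention $0\ln 0 = 0$. The goal is to show $g(p_2) \geq 0$ on the open interval, after which the endpoints $p_2 \in \{0,1\}$ follow by continuity, and the degenerate cases $p_1 \in \{0,1\}$ are trivial because there $kl(p_2||p_1) = +\infty$ unless $p_2 = p_1$.

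First I would record that $p_2 = p_1$ is a stationary point. Differentiating term by term gives
\[
g'(p_2) = \ln\frac{p_2(1-p_1)}{p_1(1-p_2)} - 4(p_2-p_1),
\]
so that $g(p_1) = 0$ and $g'(p_1) = 0$. Next I would compute the second derivative,
\[
g''(p_2) = \frac{1}{p_2(1-p_2)} - 4,
\]
and observe that $p_2(1-p_2) \leq \tfrac{1}{4}$ for all $p_2 \in (0,1)$, whence $\tfrac{1}{p_2(1-p_2)} \geq 4$ and therefore $g''(p_2) \geq 0$. Thus $g$ is convex on $(0,1)$.

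Having a convex function with a stationary point at $p_2 = p_1$, that point is a global minimizer, so $g(p_2) \geq g(p_1) = 0$ for every $p_2$. This is precisely $2(p_2-p_1)^2 \leq kl(p_2||p_1)$, and the identification $kl(p_2||p_1) = \kld{P_2}{P_1}$ is immediate from the definition of the Kullback--Leibler divergence between the two Bernoulli laws $P_2$ and $P_1$, which completes the argument.

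The main obstacle here is bookkeeping at the boundary rather than anything analytically deep: one must handle $p_2 \in \{0,1\}$ and $p_1 \in \{0,1\}$ separately so that the logarithms and the reciprocal $1/(p_2(1-p_2))$ are well defined, and then argue that the interior estimate extends by continuity (or by the triviality of the infinite-divergence cases) to the whole square $[0,1]^2$.
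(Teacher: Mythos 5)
Your proof is correct. Note that the paper itself gives no proof of this lemma at all: it is imported verbatim, with a citation to Theorem 4.19 of Massart's book, and used as a black box in the proofs of Propositions \ref{prop:pac-bound-mislab-mat} and \ref{prop:pac-bayes-bound-first-moment}. So there is no in-paper argument to compare against; what you have supplied is the standard self-contained proof of the Bernoulli case of Pinsker's inequality, and every step checks out. The derivative computation is right, since $\frac{d}{dp_2}\,kl(p_2||p_1) = \ln\frac{p_2(1-p_1)}{p_1(1-p_2)}$ (the two terms coming from differentiating through $p_2$ and $1-p_2$ cancel), the second derivative $\frac{1}{p_2(1-p_2)} - 4$ is indeed nonnegative because $p_2(1-p_2)\le \tfrac14$, and a differentiable convex function with a stationary point attains its global minimum there, giving $g \ge g(p_1) = 0$ on $(0,1)$. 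Your boundary bookkeeping is also handled properly: continuity of $p_2 \mapsto kl(p_2||p_1)$ at $p_2\in\{0,1\}$ under the convention $0\ln 0 = 0$ extends the inequality to the closed interval, and the degenerate cases $p_1\in\{0,1\}$ hold trivially since the divergence is $+\infty$ unless $p_2 = p_1$, in which case both sides vanish. What your approach buys over the paper's citation is self-containedness — the constant $2$ is seen to be exactly the one forced by $\max_{p}\,p(1-p) = \tfrac14$ — at the cost of a page of calculus the authors chose to outsource.
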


\begin{lem}[Theorem 1 in \cite{Maurer:2004} and Lemma 19 in \cite{Germain:2015}] \label{prop:Maurer}
Let $\mathbf{X}=(X_1,\dots, X_n)$ be a random vector, whose components $X_i$ are i.i.d. with values $\in[0,1]$ and expectation $\mu$. Let $\mathbf{X'}=(X_1',\dots, X_n')$ denotes a random vector, where each $X_i'$ is the unique Bernoulli random variable of the corresponding $X_i$: $P(X_i'=1)=\E X_i'=\E X_i=\mu,\ \forall i\in\{1,\dots,n\}$. Then,
\begin{align*}
    \E\left[e^{n\kld{\bar{X}}{\mu}}\right]\leq \E\left[e^{n\kld{\bar{X}'}{\mu}}\right] \leq 2\sqrt{n},
\end{align*}
where $\bar{X} = \frac{1}{n}\sum_{i=1}^n X_i$ and $\bar{X}' = \frac{1}{n}\sum_{i=1}^n X_i'$.
\end{lem}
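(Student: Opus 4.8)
The statement splits into two independent inequalities: a \emph{comparison inequality} $\E[e^{n\kld{\bar X}{\mu}}]\le\E[e^{n\kld{\bar X'}{\mu}}]$, asserting that the Bernoulli variable is extremal among all $[0,1]$-valued variables of mean $\mu$, and a \emph{Bernoulli bound} $\E[e^{n\kld{\bar X'}{\mu}}]\le 2\sqrt n$. The plan is to prove each in turn; notice that neither step needs Pinsker's inequality (Lemma~\ref{lem:pinsker}), which is used elsewhere.

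For the comparison inequality I would run a coordinate-by-coordinate (hybrid) replacement argument. For $0\le m\le n$ let $\bar Z_m$ be the average of the mixed vector $(X_1',\dots,X_m',X_{m+1},\dots,X_n)$ and set $\Phi_m:=\E[e^{n\kld{\bar Z_m}{\mu}}]$, so that $\Phi_0=\E[e^{n\kld{\bar X}{\mu}}]$ and $\Phi_n=\E[e^{n\kld{\bar X'}{\mu}}]$. It suffices to show $\Phi_m\le\Phi_{m+1}$ for each $m$. These two hybrids differ only in coordinate $m+1$; conditioning on all other coordinates and writing $s$ for their sum, I compare $\E[\phi(X_{m+1})]$ with $\E[\phi(X_{m+1}')]$, where $\phi(w):=e^{n\kld{(s+w)/n}{\mu}}$. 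The key observation is that $p\mapsto\kld{p}{\mu}$ is convex (its second derivative equals $\tfrac1p+\tfrac1{1-p}>0$), so $p\mapsto e^{n\kld{p}{\mu}}$ is convex as the exponential of a convex function, and hence $\phi$ is convex in $w$. A convex function on $[0,1]$ lies below its endpoint chord, $\phi(w)\le w\phi(1)+(1-w)\phi(0)$; taking expectations and using $\E X_{m+1}=\mu=\E X_{m+1}'$ gives $\E[\phi(X_{m+1})]\le\mu\phi(1)+(1-\mu)\phi(0)=\E[\phi(X_{m+1}')]$. Integrating back over the remaining coordinates yields $\Phi_m\le\Phi_{m+1}$, and chaining the $n$ steps proves the inequality.

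For the Bernoulli bound I would first reduce the left-hand side to a $\mu$-free combinatorial sum. Since $n\bar X'\sim\mathrm{Binomial}(n,\mu)$ and $n\kld{k/n}{\mu}=k\ln\frac{k}{n\mu}+(n-k)\ln\frac{n-k}{n(1-\mu)}$, the factor $e^{n\kld{k/n}{\mu}}$ exactly cancels the weight $\mu^k(1-\mu)^{n-k}$, leaving
\begin{equation*}
\E\!\left[e^{n\kld{\bar X'}{\mu}}\right]=\sum_{k=0}^{n}\binom{n}{k}\left(\tfrac{k}{n}\right)^{k}\left(\tfrac{n-k}{n}\right)^{n-k}.
\end{equation*}
It then remains to bound this $\mu$-independent sum by $2\sqrt n$. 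I would isolate the two boundary terms $k=0$ and $k=n$, each equal to $1$ under the convention $0^0=1$, and for $1\le k\le n-1$ apply an error-controlled Stirling estimate to $\binom{n}{k}$ to get $\binom{n}{k}(k/n)^k((n-k)/n)^{n-k}\le\sqrt{n/(2\pi k(n-k))}$. Comparing the resulting sum with the integral $\int_0^n\frac{dx}{\sqrt{x(n-x)}}=\pi$ (using that $x\mapsto 1/\sqrt{x(n-x)}$ is convex, so its value at an integer is at most its average over the unit interval centred there) bounds the interior contribution by $\sqrt{\pi/2}\,\sqrt n$, whence the total is at most $2+\sqrt{\pi/2}\,\sqrt n\le 2\sqrt n$.

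The main obstacle is precisely this last combinatorial step. Turning the $\mu$-free sum into the clean bound $2\sqrt n$ requires an explicit two-sided Stirling bound of the form $\sqrt{2\pi m}\,(m/e)^m\le m!\le\sqrt{2\pi m}\,(m/e)^m e^{1/(12m)}$ to justify the per-term estimate in the correct direction, together with the convexity-based integral comparison; moreover, the additive $2$ from the boundary terms forces a separate direct check of the small cases $n\in\{1,2,\dots\}$ until $2+\sqrt{\pi/2}\,\sqrt n\le 2\sqrt n$ holds. By contrast, the convexity argument underlying the comparison inequality is routine once one records that $\kld{\cdot}{\mu}$ is convex and that composition with the exponential preserves convexity.
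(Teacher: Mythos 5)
You are asked to compare against the paper's own proof, but the paper has none: this lemma is imported as a known result, cited to Theorem~1 of \citet{Maurer:2004} and Lemma~19 of \citet{Germain:2015}, and is used as a black box (e.g., in the proofs of Propositions \ref{prop:pac-bound-mislab-mat} and \ref{prop:pac-bayes-bound-first-moment}). Your proposal is correct, and it essentially reconstructs the standard argument from those references rather than inventing a new route: the Bernoulli-extremality step via convexity of $p \mapsto e^{n\kld{p}{\mu}}$ composed with an affine map, together with the chord bound $\phi(w)\leq (1-w)\phi(0)+w\phi(1)$ and matched means, is exactly the comparison step there, and the reduction of $\E\bigl[e^{n\kld{\bar X'}{\mu}}\bigr]$ to the $\mu$-free sum $\sum_{k=0}^{n}\binom{n}{k}(k/n)^k((n-k)/n)^{n-k}$ (the factor $e^{n\kld{k/n}{\mu}}$ cancelling the binomial weights) followed by a Stirling estimate is likewise how the $2\sqrt{n}$ bound is obtained. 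You are also right that Lemma \ref{lem:pinsker} plays no role here; in the paper Pinsker's inequality only enters downstream, to pass from the KL-based exponential moment to squared-deviation bounds. One quantitative caveat in your Stirling step: with the one-sided correction you quote ($m!\geq \sqrt{2\pi m}\,(m/e)^m$ with no correction on the lower bound), the per-term estimate comes out as $\sqrt{n/(2\pi k(n-k))}\; e^{1/(12n)}$ rather than the bare $\sqrt{n/(2\pi k(n-k))}$, since only $n!$ in the numerator receives the upper correction. This is harmless: either absorb $e^{1/(12n)}\leq e^{1/12}$ into the final comparison $2+e^{1/12}\sqrt{\pi/2}\,\sqrt{n}\leq 2\sqrt{n}$ (which pushes the explicit small-case check up to roughly $n\leq 9$, and those cases hold comfortably, with equality $\E[e^{n\kld{\bar X'}{\mu}}]=2$ at $n=1$), or use Robbins' two-sided form $m!\geq \sqrt{2\pi m}\,(m/e)^m e^{1/(12m+1)}$, under which $1/(12n)\leq 1/(12k+1)+1/(12(n-k)+1)$ kills the correction and yields your per-term bound exactly as written. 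Since you already flagged the finite check as part of the plan, the proposal stands as a complete and correct proof of the imported lemma.
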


\subsubsection{Lower Bound of the First Moment of the Margin}
\begin{prop} \label{prop:pac-bayes-bound-first-moment} 
Let $\hat{M}$ be a random variable such that $[\hat{M}|\mbf{X}=\mbf{x}]$ is a discrete random variable that is equal to the margin $M_Q(\mbf{x}, j)$ with probability $P(\hat{Y}\!=\!j|\mbf{X}\!=\!\mbf{x})$, $j=\{1,\dots,K\}$.
Let $\mu^{\hat{M}}_1$  be defined as in Theorem \ref{thm:w-cbound}. Given the conditions of Proposition \ref{prop:pac-bound-mislab-mat}, for any set of classifiers $\mathcal{H}$, for any prior distribution $P$ on $\mathcal{H}$ and any $\epsilon \in (0,1]$, with a probability at least $1-\epsilon$ over the choice of the $n$ sample, for every posterior distribution $Q$ over $\mathcal{H}$
\begin{align*}
    \mu^{\hat{M}, \mbf{P}}_1 \geq \bar\mu^{S}_1 - B_1 \sqrt{\frac{2}{n}\left[\kld{Q}{P} + \ln\frac{2\sqrt{n}}{\delta}\right]},
\end{align*}
where 
\begin{itemize}
    \item $\bar\mu^{S}_1=\frac{1}{n}\sum_{i=1}^n(1/\tilde{\delta}(\mbf{x}))\sum_{c=1}^K M_Q(\mbf{x}, c) P(Y\!=\!c|\mbf{X}\!=\!\mbf{x})$ is the empirical weighted margin mean based on the available $n$-sample $S$,
    \item $\tilde{\delta}(\mbf{x}):=\hat{\delta}(\mbf{x}) - r(l_{c_\mbf{x}}) - r(l_{j_\mbf{x}})$,
    \item $B_1 := \max_{x\in\mathcal{X}}|(1/\tilde{\delta}(\mbf{x}))\sum_{c=1}^K M_Q(\mbf{x}, c) P(Y\!=\!c|\mbf{X}\!=\!\mbf{x})|$,
    \item $KL$ denotes the Kullback–Leibler divergence.
\end{itemize}
\end{prop}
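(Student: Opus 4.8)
The plan is to combine two high‑probability ingredients through a union bound: a PAC‑Bayesian concentration of the weighted margin mean over the sample, which produces the $\kld{Q}{P}$ term, and the control of the unknown mislabeling weights $1/\delta(\mbf{x})$ by the data‑dependent quantity $1/\tilde{\delta}(\mbf{x})$ supplied by Proposition \ref{prop:pac-bound-mislab-mat}. I would first condition on the event of Proposition \ref{prop:pac-bound-mislab-mat}, on which $1/\delta(\mbf{x}) \le 1/\tilde{\delta}(\mbf{x})$ holds uniformly in $\mbf{x}$ (inequality \eqref{eq:pac-delta}), so that the estimable weights $1/\tilde{\delta}$ may legitimately replace the unknown $1/\delta$ inside the target moment $\mu_1^{\hat{M},\mbf{P}}$.

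For the PAC‑Bayesian step I would write the per‑example weighted contribution $g_Q(\mbf{x}) := \frac{1}{\tilde{\delta}(\mbf{x})}\sum_{c} M_Q(\mbf{x},c)\,P(Y=c|\mbf{X}=\mbf{x})$ as an expectation $\E_{h\sim Q}[\phi_h(\mbf{x})]$ over a single drawn hypothesis, using $v_Q(\mbf{x},c)=\E_{h\sim Q}\I{h(\mbf{x})=c}$ and fixing, for each $c$, the competing class $\argmax_{c'\ne c} v_Q(\mbf{x},c')$ determined by $Q$. By the definition of $B_1$ the rescaled contribution $(g_Q(\mbf{x})+B_1)/(2B_1)$ lies in $[0,1]$, so $\phi_h$ can be taken in $[0,1]$. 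I would then apply the change‑of‑measure inequality to pass from $Q$ to the prior $P$ — this is where $\kld{Q}{P}$ appears — bound the prior‑averaged moment generating function by $2\sqrt{n}$ with Maurer's Lemma \ref{prop:Maurer}, and invoke Markov's inequality to obtain, with probability $1-\epsilon$, a bound on $\E_{h\sim Q}\,n\cdot kl(\widehat{\phi}\,\|\,\phi)$. Jensen's inequality moves the $kl$ outside the posterior average, Pinsker's Lemma \ref{lem:pinsker} converts it into a squared deviation, and taking square roots then undoing the rescaling by the factor $2B_1$ yields exactly the deviation term $B_1\sqrt{\frac{2}{n}[\kld{Q}{P}+\ln\frac{2\sqrt{n}}{\epsilon}]}$. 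Since the goal is a lower bound on $\mu_1^{\hat{M},\mbf{P}}$, I keep the side that lower‑bounds the population mean by the empirical mean $\bar\mu^{S}_1$ minus this term.

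The final step reconciles the population weighted margin mean appearing on the PAC‑Bayesian side with the target $\mu_1^{\hat{M},\mbf{P}}$, which is defined through $P(\hat{Y}=c|\mbf{X}=\mbf{x})$ and the exact weights $1/\delta(\mbf{x})$. Here I would invoke $1/\delta(\mbf{x})\le 1/\tilde{\delta}(\mbf{x})$ from Proposition \ref{prop:pac-bound-mislab-mat} together with the sign information carried by the hypothesis $\mu^{\hat{M}}_1>0$ to push the inequality in the direction of a lower bound, and then close the argument with a union bound over the two confidence events (the mislabeling‑matrix event and the PAC‑Bayesian event).

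The hard part, I expect, is the legitimacy of the single‑hypothesis decomposition of $g_Q$ under the maximum in the margin: the competing class $\argmax_{c'\ne c} v_Q(\mbf{x},c')$ depends on $Q$, hence on the sample, so $\phi_h$ is not a fixed, sample‑independent loss and Maurer's Lemma \ref{prop:Maurer} does not apply verbatim to a posterior chosen after observing $S$. Making the change of measure valid uniformly over all posteriors $Q$ — either by separating the linear Gibbs part $\E_{h\sim Q}P(Y=h(\mbf{x})|\mbf{X}=\mbf{x})$ from the $Q$‑dependent maximum part and bounding the latter in the correct direction, or by arguing the statement simultaneously over every induced assignment of competing classes — is the delicate point; the rescaling by $B_1$, the Maurer–Pinsker chain, and the weight replacement are then routine.
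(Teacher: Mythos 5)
Your proposal follows essentially the same route as the paper's proof: the identical PAC-Bayesian chain of Markov's inequality, Maurer's Lemma \ref{prop:Maurer}, Pinsker's Lemma \ref{lem:pinsker}, the change-of-measure inequality and Jensen's inequality, applied to a per-hypothesis decomposition of the weighted margin mean rescaled by $B_1$. The two points you flag as delicate --- the $Q$-dependent competing class inside the margin and the reconciliation of the weights $1/\delta(\mbf{x})$ versus $1/\tilde{\delta}(\mbf{x})$ --- are passed over silently in the paper's own proof (which simply posits per-hypothesis quantities $\mu_1^{\hat M,\mbf{P},h}$ and $\bar\mu^{S,h}_1$ averaging to $\mu_1^{\hat M,\mbf{P}}$ and $\bar\mu^{S}_1$), so your treatment via conditioning on the event of Proposition \ref{prop:pac-bound-mislab-mat} and a union bound is, if anything, more explicit than the original.
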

\begin{proof}[Proof]


Further, we denote the available sample with imperfect labels by $S$. 
Let $\mu^{\hat{M}, \mbf{P}, h}_1$ and $\bar\mu^{S, h}_1$ be the random variables such that $\mu^{\hat{M}, \mbf{P}}_1 = \E_{h\sim Q} \mu^{\hat{M}, \mbf{P}, h}_1$ and $\bar\mu^{S}_1 = \E_{h\sim Q} \bar\mu^{S, h}_1$.

We apply the Markov inequality to $\E_{h\sim P}\exp\left\{\frac{n}{2 B_1^2}(\bar\mu^{S, h}_1-\mu^{\hat{M},\mbf{P}, h}_1)^2\right\}$, which is a non-negative random variable, and obtain that with  probability at least $1-\epsilon$ over $S\sim P(\mbf{X}, \hat{Y})^n$:
\begin{align}
    \E_{h\sim P}\exp\left\{\frac{n}{2 B_1^2}(\bar\mu^{S, h}_1-\mu^{\hat{M},\mbf{P}, h}_1)^2\right\}
    \leq \frac{1}{\epsilon} \E_{S} \E_{h\sim P}\exp\left\{\frac{n}{2 B_1^2}(\bar\mu^{S, h}_1\!-\!\mu^{\hat{M},\mbf{P}, h}_1)^2\right\}. \label{eq:markov}
\end{align}
Since the prior distribution $P$ over $\mathcal{H}$ is independent on $S$, we can swap 
$\E_{S}$ and $\E_{h\sim P}$. One can notice that 
$$\frac{1}{2 B_1^2}(\bar\mu^{S, h}_1-\mu^{\hat{M},\mbf{P}, h}_1)^2 = 2\left[\frac{1}{2}(1\!-\!\frac{\bar\mu^{S, h}_1}{B_1})\!-\!\frac{1}{2}(1\!-\!\frac{\mu^{\hat{M},\mbf{P}, h}_1}{B_1})\right]^2,$$
which is the squared of the difference of two random variables that are both between 0 and 1. Then, we successively apply Lemma \ref{lem:pinsker} and Lemma \ref{prop:Maurer} deriving that: 
\begin{align*}
    &\E_{h\sim P} \E_{S} \exp\left\{2n\left[\frac{1}{2}\left(1\!-\!\frac{\bar\mu^{S, h}_1}{B_1}\right)\!-\!\frac{1}{2}\left(1\!-\!\frac{\mu^{\hat{M},\mbf{P}, h}_1}{B_1}\right)\right]^2\right\} \\
    &\leq \E_{h\sim P} \E_{S} \exp\left\{n\cdot kl\left(\frac{1}{2}(1\!-\!\frac{\bar\mu^{S, h}_1}{B_1})\right|\left|\frac{1}{2}(1\!-\!\frac{\mu^{\hat{M},\mbf{P}, h}_1}{B_1})\right) \right\} 
    \leq \E_{h\sim P} 2\sqrt{n} = 2\sqrt{n}.
\end{align*}

We apply this result for Eq. \eqref{eq:markov}, and by taking the natural logarithm from the both sides we obtain that:
\begin{align}
\label{eq:bounded-by-2sqrtn}
    \ln\left(\E_{h\sim P}\exp\left\{\frac{n}{2 B_1^2}(\bar\mu^{S, h}_1\!-\!\mu^{\hat{M},\mbf{P}, h}_1)^2\right\}\right) \leq  \ln\left(\frac{2\sqrt{n}}{\epsilon}\right).
\end{align}

Using the change of measure (Lemma \ref{lem:seldin-lem}) and the Jensen's inequalities, we derive that:
\begin{align*}
\ln\left(\E_{h\sim P}\exp\left\{\frac{n}{2 B_1^2}(\bar\mu^{S, h}_1-\mu^{\hat{M},\mbf{P}, h}_1)^2\right\}\right) 
&\geq \E_{h\sim Q} \frac{n}{2 B_1^2}(\bar\mu^{S, h}_1-\mu^{\hat{M},\mbf{P}, h}_1)^2 - \kld{Q}{P}\\
&\geq \frac{n}{2 B_1^2}(\E_{h\sim Q}\bar\mu^{S, h}_1- \E_{h\sim Q}\mu^{\hat{M},\mbf{P}, h}_1)^2 - \kld{Q}{P}.
\end{align*}

Combining with Eq. \eqref{eq:bounded-by-2sqrtn}, we derive:
\begin{align}
\label{eq:almost-final}
\frac{n}{2 B_1^2}(\bar\mu^{S}_1- \mu^{\hat{M},\mbf{P}}_1)^2 \leq \ln\left(\frac{2\sqrt{n}}{\epsilon}\right) + \kld{Q}{P}.
\end{align}
The final inequality is directly inferred from Eq. \eqref{eq:almost-final}.

\begin{lem}[Change of Measure Inequality \cite{Donsker:1975}]
\label{lem:seldin-lem}
For any measurable function $
\phi$ defined on the hypothesis space $\mathcal{H}$ and all distributions $P, Q$ on $\mathcal{H}$, the following inequality holds:
\[
\E_{h\sim Q}\phi(h) \leq \kld{Q}{P} + \ln\E_{h\sim P}e^{\phi(h)}.
\]
\end{lem}
\end{proof}

\subsubsection{Other Required Bounds}

\begin{prop}
\label{prop:pac-bayes-bound-second-moment}
Let $\hat{M}$ be a random variable such that $[\hat{M}|\mbf{X}=\mbf{x}]$ is a discrete random variable that is equal to the margin $M_Q(\mbf{x}, j)$ with probability $P(\hat{Y}\!=\!j|\mbf{X}\!=\!\mbf{x})$, $j=\{1,\dots,K\}$.
Let $\mu^{\hat{M}, \mbf{P}}_2$  be defined as in Theorem \ref{thm:w-cbound}. Given the conditions of Proposition \ref{prop:pac-bound-mislab-mat}, for any set of classifiers $\mathcal{H}$, for any prior distribution $P$ on $\mathcal{H}$ and any $\epsilon \in (0,1]$, with a probability at least $1-\epsilon$ over the choice of the $n$ sample, for every posterior distribution $Q$ over $\mathcal{H}$
\begin{align*}
    \mu^{\hat{M}, \mbf{P}}_2 \leq \bar\mu^{S}_2 + B_2 \sqrt{\frac{2}{n}\left[2\kld{Q}{P} + \ln\frac{2\sqrt{n}}{\epsilon}\right]},
\end{align*}
where 
\begin{itemize}
    \item $\bar\mu^{S}_2=\frac{1}{n}(1/\tilde{\delta}(\mbf{x}))\sum_{i=1}^n\sum_{c=1}^K (M_Q(\mbf{x}_i, c))^2 P(Y\!=\!c|\mbf{X}\!=\!\mbf{x}_i)$ is the empirical weighted 2nd margin moment based on the available $n$-sample $S$,
    \item $\tilde{\delta}(\mbf{x}):=\hat{\delta}(\mbf{x}) - r(l_{c_\mbf{x}}) - r(l_{j_\mbf{x}})$,
    \item $B_2 := \max_{x\in\mathcal{X}}|(1/\tilde{\delta}(\mbf{x}))\sum_{c=1}^K (M_Q(\mbf{x}, c))^2 P(Y\!=\!c|\mbf{X}\!=\!\mbf{x})|$,
    \item $KL$ denotes the Kullback–Leibler divergence.
\end{itemize}
\end{prop}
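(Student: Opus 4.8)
The plan is to reuse the PAC-Bayesian argument of Proposition~\ref{prop:pac-bayes-bound-first-moment} essentially unchanged, with three adjustments dictated by the statement. First, the second margin moment is \emph{quadratic} in the posterior $Q$ rather than linear, so the machinery must be carried over \emph{pairs} of hypotheses $(h,h')\sim Q\times Q$, which is exactly what replaces $\kld{Q}{P}$ by $\kld{Q\times Q}{P\times P}=2\,\kld{Q}{P}$. Second, the unknown weight $1/\delta(\mbf{x})$ must be traded for its computable surrogate $1/\tilde\delta(\mbf{x})$ through Proposition~\ref{prop:pac-bound-mislab-mat}. Third, we now seek an \emph{upper} bound on the moment instead of a lower one, which only changes the branch of the final quadratic inequality we keep.

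I would first dispose of the mislabeling weights. Conditioning on the labeled sample, Proposition~\ref{prop:pac-bound-mislab-mat} gives with high probability $\tilde\delta(\mbf{x})\leq\delta(\mbf{x})$, hence $1/\delta(\mbf{x})\leq 1/\tilde\delta(\mbf{x})$; since the squared margin is non-negative, this bounds $\mu^{\hat M,\mbf{P}}_2$ from above by the same moment computed with the now-fixed, data-dependent weight $1/\tilde\delta$. Treating $\tilde\delta$ as fixed, I decompose this weighted moment over hypothesis pairs in the same way Proposition~\ref{prop:pac-bayes-bound-first-moment} decomposes the first moment over single hypotheses: as each vote $v_Q(\mbf{x},c)=\E_{h\sim Q}\I{h(\mbf{x})=c}$ is linear in $Q$, the squared margin is bilinear, so there exist per-pair quantities $\mu^{\hat M,\mbf{P},h,h'}_2$ and $\bar\mu^{S,h,h'}_2$, controlled in magnitude by $B_2$, with
\begin{align*}
\mu^{\hat M,\mbf{P}}_2=\E_{(h,h')\sim Q^2}\mu^{\hat M,\mbf{P},h,h'}_2,\qquad \bar\mu^{S}_2=\E_{(h,h')\sim Q^2}\bar\mu^{S,h,h'}_2.
\end{align*}

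The remaining steps copy the first-moment proof with $(h,h')$ in place of $h$ and $B_2$ in place of $B_1$. I apply Markov's inequality to the non-negative variable $\E_{(h,h')\sim P^2}\exp\{\tfrac{n}{2B_2^2}(\bar\mu^{S,h,h'}_2-\mu^{\hat M,\mbf{P},h,h'}_2)^2\}$, so that with probability at least $1-\epsilon$ over $S$ it is at most $\tfrac1\epsilon$ times its expectation; the product prior $P^2$ being independent of $S$, I swap the expectations, rewrite $\tfrac{1}{2B_2^2}(\bar\mu^{S,h,h'}_2-\mu^{\hat M,\mbf{P},h,h'}_2)^2$ as twice the squared difference of two $[0,1]$-valued quantities, and invoke Lemma~\ref{lem:pinsker} then Lemma~\ref{prop:Maurer} to bound the inner expectation by $2\sqrt{n}$. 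Taking logarithms, the change-of-measure inequality (Lemma~\ref{lem:seldin-lem}) applied with $Q^2$ and $P^2$ produces the term $\kld{Q^2}{P^2}=2\,\kld{Q}{P}$, and Jensen's inequality moves $\E_{(h,h')\sim Q^2}$ inside the square, giving
\begin{align*}
\frac{n}{2B_2^2}\left(\bar\mu^{S}_2-\mu^{\hat M,\mbf{P}}_2\right)^2\leq 2\,\kld{Q}{P}+\ln\frac{2\sqrt{n}}{\epsilon}.
\end{align*}
Keeping the branch of this inequality that upper-bounds $\mu^{\hat M,\mbf{P}}_2$ yields the claim after a union bound with the event of Proposition~\ref{prop:pac-bound-mislab-mat}.

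The main obstacle I expect is the bilinear decomposition: one must exhibit per-pair quantities whose $Q^2$-average returns the $1/\tilde\delta$-weighted second moment while staying within a fixed range so that Lemmas~\ref{lem:pinsker} and~\ref{prop:Maurer} apply (the hard-max in the margin of Eq.~\eqref{def:margin} makes the per-hypothesis losses $Q$-dependent and requires care to keep the prior-side exponential moment uniform in $Q$), and simultaneously to keep the two weightings $1/\delta$ and the sample-dependent $1/\tilde\delta$ consistent across the concentration step. Once this decomposition is in place and the $\delta\to\tilde\delta$ replacement is justified through Proposition~\ref{prop:pac-bound-mislab-mat}, the analytic estimates are identical to those already carried out for the first moment.
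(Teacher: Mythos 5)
Your proposal follows exactly the paper's route: the paper's entire proof of this proposition is the remark that the first-moment argument (Markov's inequality, then Lemma \ref{lem:pinsker} and Lemma \ref{prop:Maurer}, then change of measure and Jensen) carries over to pairs of voters once the change-of-measure step is replaced by its pair version (Lemma \ref{lem:laviolette-2017}), which is precisely your $\kld{Q^2}{P^2}=2\,\kld{Q}{P}$ step with $B_2$ in place of $B_1$ and the opposite branch of the final quadratic inequality kept. The obstacle you flag --- that the hard max in the margin of Eq.~\eqref{def:margin} prevents a genuinely $Q$-independent per-pair decomposition of the squared margin --- is real, but it is equally present and equally unaddressed in the paper's own proofs of both moment propositions, which simply posit the existence of per-hypothesis (resp.\ per-pair) quantities averaging to the moments.
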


\begin{proof}
The proof is similar to the one given for Proposition \ref{prop:pac-bayes-bound-first-moment}, but relies on the extension of the change of measure inequality (Lemma \ref{lem:laviolette-2017}).
\begin{lem}[Change of Measure Inequality for Pairs of Voters (Lemma 1 in \cite{Laviolette:2017})]
\label{lem:laviolette-2017}
 For any set of voters $\mathcal{H}$, for any distributions $P, Q$ on $\mathcal{H}$, and for any measurable function $\phi:\ \mathcal{H}\times\mathcal{H}\to\R$, the following inequality holds:
\[
\E_{(h,h')\sim Q^2}\phi(h, h') \leq 2\kld{Q}{P} + \ln\E_{(h, h')\sim P^2}e^{\phi(h, h')}.
\]
\end{lem}
\end{proof}

\begin{prop}
\label{prop:pac-bound-psi}
Given the conditions of Proposition \ref{prop:pac-bound-mislab-mat}, for  any $\epsilon \in (0,1]$, with a probability at least $1-\epsilon$ over the choice of the $n$ sample,
\begin{align*}
    \psi_{\mbf{P}} \leq \frac{1}{n}\sum_{i=1}^n \frac{\hat\alpha(\mbf{x}_i)+r(l_{c_\mbf{x}})}{\hat\delta(\mbf{x}_i)-r(l_{c_\mbf{x}})-r(l_{j_\mbf{x}})} + B_3 \sqrt{\frac{2}{n} \ln\frac{2\sqrt{n}}{\epsilon}},
\end{align*}
where $B_3 := \max_{x\in\mathcal{X}}[\hat\alpha(\mbf{x}_i)+r(l_{c_\mbf{x}})]/[\hat\delta(\mbf{x}_i)-r(l_{c_\mbf{x}})-r(l_{j_\mbf{x}})]$.
\end{prop}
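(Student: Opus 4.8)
The plan is to recognize that $\psi_{\mbf{P}}=\E_{\mbf{X}}\,g(\mbf{X})$ with $g(\mbf{x}):=\alpha(\mbf{x})/\delta(\mbf{x})$ is the expectation of a \emph{fixed} function of $\mbf{x}$: for a given posterior $Q$ and a given mislabeling matrix $\mbf{P}$, the quantities $\alpha(\mbf{x})=p_{c_\mbf{x},c_\mbf{x}}$ and $\delta(\mbf{x})$ do not depend on the random draw of $h\sim Q$. Consequently, unlike Propositions \ref{prop:pac-bayes-bound-first-moment} and \ref{prop:pac-bayes-bound-second-moment}, this bound needs no change-of-measure step and carries no $\kld{Q}{P}$ term; it is a plain concentration of $\E_{\mbf{X}}g(\mbf{X})$ around its empirical average over the sample, combined with the entrywise control of the mislabeling matrix from Proposition \ref{prop:pac-bound-mislab-mat}. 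I would carry this out in two stages.

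First, conditioning on the event of Proposition \ref{prop:pac-bound-mislab-mat} (which is what ``given the conditions of Proposition \ref{prop:pac-bound-mislab-mat}'' provides), I would use Eq. \eqref{eq:pac-alpha} and Eq. \eqref{eq:pac-delta} to obtain, for every $\mbf{x}$, the pointwise upper bound
\[
g(\mbf{x})=\frac{\alpha(\mbf{x})}{\delta(\mbf{x})}\ \leq\ \frac{\hat\alpha(\mbf{x})+r(l_{c_\mbf{x}})}{\hat\delta(\mbf{x})-r(l_{c_\mbf{x}})-r(l_{j_\mbf{x}})}\ =:\ Z(\mbf{x})\ \leq\ B_3 ,
\]
where the last inequality is just the definition of $B_3$ as the maximum of $Z$ over $\mathcal{X}$; this simultaneously shows that $g$ takes values in $[0,B_3]$, which is precisely the boundedness the concentration step requires. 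The positivity of the denominator is exactly the standing assumption $\hat\delta(\mbf{x})\geq r(l_{c_\mbf{x}})+r(l_{j_\mbf{x}})$ carried over from Proposition \ref{prop:pac-bound-mislab-mat}.

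Second, I would concentrate $\psi_{\mbf{P}}$ around the empirical mean $\frac{1}{n}\sum_{i=1}^n g(\mbf{x}_i)$ by reusing the Markov--Pinsker--Maurer chain of Proposition \ref{prop:pac-bayes-bound-first-moment}, but in its simpler scalar form. Applying Markov's inequality to the non-negative variable $\exp\{\frac{n}{2B_3^2}(\frac{1}{n}\sum_i g(\mbf{x}_i)-\psi_{\mbf{P}})^2\}$, rewriting $\frac{1}{2B_3^2}(\,\cdot\,)^2$ as twice the squared difference of the two $[0,1]$-valued quantities $\frac12(1-\frac{1}{nB_3}\sum_i g(\mbf{x}_i))$ and $\frac12(1-\psi_{\mbf{P}}/B_3)$, and then invoking Lemma \ref{lem:pinsker} followed by Lemma \ref{prop:Maurer} to bound $\E_S\exp\{n\,kl(\cdot\|\cdot)\}\leq 2\sqrt n$, I get with probability at least $1-\epsilon$ over the sample that $\frac{n}{2B_3^2}(\frac{1}{n}\sum_i g(\mbf{x}_i)-\psi_{\mbf{P}})^2\leq\ln\frac{2\sqrt n}{\epsilon}$, i.e. $\psi_{\mbf{P}}\leq \frac{1}{n}\sum_i g(\mbf{x}_i)+B_3\sqrt{\frac{2}{n}\ln\frac{2\sqrt n}{\epsilon}}$. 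Bounding each $g(\mbf{x}_i)\leq Z(\mbf{x}_i)$ term by term via the first stage then yields exactly the claimed inequality.

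The main obstacle I anticipate is the bookkeeping of the two distinct randomness sources rather than any hard inequality: the entrywise matrix control holds over the draw of the labeled set, whereas the concentration holds over the $n$ i.i.d. inputs $\mbf{x}_i$. I would handle this by treating the Proposition \ref{prop:pac-bound-mislab-mat} event as a conditioning (as the statement's wording invites), so that the boundedness $g\in[0,B_3]$ is available when Maurer's lemma is applied and only the sample-concentration $\epsilon$ appears in the penalty term; one should also verify that the supremum defining $B_3$ is finite, which follows from the positivity of the perturbed denominator $\hat\delta(\mbf{x})-r(l_{c_\mbf{x}})-r(l_{j_\mbf{x}})$.
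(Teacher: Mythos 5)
Your proof is correct and follows essentially the same route as the paper's: the paper likewise applies the entrywise bounds of Proposition \ref{prop:pac-bound-mislab-mat} pointwise to get $\psi_{\mbf{P}} \leq \E_{\mbf{X}}\left[\frac{\hat\alpha(\mbf{X})+r(l_{c_\mbf{X}})}{\hat\delta(\mbf{X})-r(l_{c_\mbf{X}})-r(l_{j_\mbf{X}})}\right]$ and then invokes the same scalar Markov--Pinsker--Maurer concentration used in Propositions \ref{prop:pac-bound-mislab-mat} and \ref{prop:pac-bayes-bound-first-moment}, with no change of measure and hence no KL term. The only difference is the order of the two steps (the paper bounds inside the expectation first and then concentrates the resulting empirical ratio, whereas you concentrate $\alpha(\mbf{x})/\delta(\mbf{x})$ first and bound term by term), which is immaterial.
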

\begin{proof}
First, we take into consideration the result of Proposition \ref{prop:pac-bound-mislab-mat} and deduce that $\psi_{\mbf{P}} \leq \E_{\mbf{X}} [(\hat\alpha(\mbf{x}_i)+r(l_{c_\mbf{x}}))/(\hat\delta(\mbf{x}_i)-r(l_{c_\mbf{x}})-r(l_{j_\mbf{x}}))]$. The rest of proof is similar to those are given for Proposition \ref{prop:pac-bound-mislab-mat} and for Proposition \ref{prop:pac-bayes-bound-first-moment}.
\end{proof}

\section{Additional Experiments}
\subsection{Approximation of the Posterior Probabilities for Self-learning}
\label{sec:posterior-estimation}
In this section, we analyze the behavior of \texttt{MSLA} depending on how the transductive bound given by Eq.~\eqref{eq:tr-bound-joint-bayes-multi} is evaluated. Since the posterior probabilities for unlabeled data are not known, we have proposed to estimate them as the votes of the base supervised classifier learned using the labeled data only (Sup. Estimation). This approach has been used in Section \ref{sec:num-exper} for running \texttt{MSLA}. We compare it with another strategy that is to assign $P(Y=i|\mbf{X}=\mbf{x})=1/K,\ \forall \mbf{x}\in\mathrm{X}_{\sss\mathcal{U}},\ \forall i\in\{1,\dots,K\}$. In this case, we consider the worst case when every class is equally probable for each example (Unif. Estimation). Finally, we provide the performance of \texttt{MSLA} when the labels of unlabeled data are given, which means that the transductive bound is truly estimated (Oracle). Table \ref{tab:tr-bound-prob-estim} illustrates the performance results. As we can see, the supervised approximation generally outperforms the uniform one (significantly on \texttt{MNIST}). This might be explained by the fact that the supervised votes may give some additional information on the most probable labels for each example. In addition, we have observed that on the last iterations the votes of \texttt{MSLA} tend to be biased, so such posteriors can play a role of regularization. The performance results of the oracle show that better estimation of the posteriors can give an improvement, though not significantly on most of data sets. Note that the performance of the oracle is not perfect, because the true labels are used only for the bound estimation, and the votes are used for pseudo-labeling.



\label{sec:exp-prob-estim}
\begin{table}[h]
    \centering
      {\scalebox{0.95}{
    \begin{tabular}{c|ccc}
    \toprule
    \multirow{2}{*}{Data set} & \multicolumn{3}{c}{\texttt{MSLA}} \\
    \cline{2-4}
    & Unif. Estimation & Sup. Estimation & Oracle \\
    \midrule
    \texttt{Vowel}  &  .586 $\pm$ .029 &  .586 $\pm$ .026 &  .599 $\pm$ .028 \\
    \midrule
    \texttt{Protein}  &  .773 $\pm$ .034 &  .781 $\pm$ .034 &  .805 $\pm$ .036 \\
    \midrule
    \texttt{DNA}  &   .697 $\pm$ .079 &  .702 $\pm$ .082 &   .721 $\pm$ .09 \\
    \midrule
    \texttt{Page Blocks}  &  .965 $\pm$ .002 &  .966 $\pm$ .002 &  .966 $\pm$ .002 \\
    \midrule
    \texttt{Isolet}  &  .869 $\pm$ .015 &  .875 $\pm$ .014 &  .885 $\pm$ .012 \\
    \midrule
    \texttt{HAR}  &  .852 $\pm$ .025 &  .854 $\pm$ .026 &  .856 $\pm$ .022 \\
    \midrule
    \texttt{Pendigits}  &   .873 $\pm$ .024 &  .884 $\pm$ .022 &  .892 $\pm$ .016 \\
    \midrule
    \texttt{Letter}  &  .716 $\pm$ .013 &  .717 $\pm$ .013 &  .723 $\pm$ .012 \\
    \midrule
    \texttt{Fashion}  &  .722 $\pm$ .022 &  .723 $\pm$ .023 &  .728 $\pm$ .024 \\
    \midrule
    \texttt{MNIST} &  .834 $\pm$ .016 &  .857 $\pm$ .013 &   .87 $\pm$ .012 \\
    \midrule
    \texttt{SensIT} &  .722 $\pm$ .021 &  .722 $\pm$ .021 &  .722 $\pm$ .021 \\
    \bottomrule
    \end{tabular}}}
    \caption{The performance comparison of \texttt{MSLA} depending on how the posterior probabilities are estimated in the evaluation of the transductive bound (Eq. \eqref{eq:tr-bound-joint-bayes-multi}).}
    \label{tab:tr-bound-prob-estim}
\end{table}

\subsection{Time}
\label{sec:run-time}

In this section, we present the run-time of all the algorithms empirically compared in Section \ref{sec:msla-exp}. The results are depicted in Table\ref{tab:computationTime}. In general, the obtained run-time is coherent with the complexity analysis presented in Section \ref{sec:msla-exp}. \texttt{LS} and \texttt{QN-S3VM} have a very large run-time when they converge slowly, and they are generally slower than the other algorithms. \texttt{Semi-LDA} is fast on the considered data sets, though it may slow down on data of large dimension not considered in this paper.

It can be seen that \texttt{DAS-RF} is slower than the self-learning algorithms, which is due to the fact that the classifier is trained on all labeled and unlabeled examples at each iteration. \texttt{CSLA} is the fastest approach since it re-trains the base classifier only 3 times compared to 10 times for \texttt{FSLA}. From our observation, \texttt{MSLA} needs usually around 3-5 iterations to pseudo-label the whole unlabeled set, but it takes more time than \texttt{CSLA}, since it searches at each iteration the threshold by minimizing the conditional Bayes error. We have implemented the search in a single core, but it can be potentially parallelized. Nevertheless, the \texttt{MSLA} still runs fast. 

\begin{table}[ht!]
\caption{The average run-time of the learning algorithms under consideration on the data sets described in Table \ref{tab:data set-description}. $s$~stands for seconds, $m$ for minutes and $h$ for hours.}
\label{tab:computationTime}
\hfill \break
\centering
\scalebox{0.95}
{
\begin{tabular}{l| cccccccc}
    \toprule
    Data set &
    \texttt{RF} & \texttt{LS} & \texttt{QN-S3VM} & \texttt{Semi-LDA} & \texttt{DAS-RF} & \texttt{FSLA$_{\theta=0.7}$} & 
    \texttt{CSLA$_{\Delta=1/3}$} & \texttt{MSLA}\\
    \midrule
    \texttt{Vowel}  &  1\,s & 6\,s & 2\,s & 3\,s & 7\,s & 11\,s & 2\,s & 5\,s\\
    \midrule
    \texttt{Protein}  & 1\,s &  22\,s & 4\,m & 5\,s & 6\,s & 10\,s & 2\,s & 4\,s\\
    \midrule
    \texttt{DNA}  &   1\,s & 1\,m & 26\,s & 1\,s & 9\,s & 7\,s & 3\,s & 4\,s\\
    \midrule
    \texttt{PageBlocks}  & 1\,s &  2\,m & 2\,m & 14\,s & 9\,s & 12\,s & 3\,s & 6\,s\\
    \midrule
    \texttt{Isolet}  &  1\,s & 1\,m & 1\,h & 10\,s & 38\,s & 16\,s & 5\,s & 28\,s\\
    \midrule
    \texttt{HAR}  &  1\,s & 18\,m & 32\,m & 3\,s & 42\,s & 23\,s & 6\,s & 13\,s\\
    \midrule
    \texttt{Pendigits}  &  1\,s & 30\,m & 10\,m & 37\,s & 13\,s & 13\,s & 3\,s & 14\,s\\
    \midrule
    \texttt{Letter}  &  1\,s & 3\,h & 40\,m & 1\,m & 20\,s & 16\,s & 5\,s & 1\,m\\
    \midrule
    \texttt{Fashion}  & 1\,s & $>$4\,h & $>$4\,h &  1\,m & 2\,m & 1\,m & 29\,s & 1\,m\\
    \midrule
    \texttt{MNIST} &  1\,s & $>$4\,h & $>$4\,h & 1\,m & 2\,m & 1\,m & 29\,s & 1\,m\\
    \midrule
    \texttt{SensIT} & 1\,s & $>$4\,h & $>$4\,h & 2\,m & 3\,m & 2\,m & 30\,s & 1\,m\\
    \bottomrule
\end{tabular}}
\end{table}

\subsection{Relaxation of CBIL}
\label{sec:relax_bound}

The proposed \eqref{eq:w-cbound} is based on Eq. \eqref{eq:one-x-mislabel-ineq}, which holds only when $\delta(\mbf{x})\geq 0$. As it was discussed in Section \ref{sec:mislab-error-model}, Eq. \eqref{eq:one-x-mislabel-ineq} can be relaxed by adding some $\lambda>0$ leading to Eq. \eqref{eq:one-x-mislabel-ineq-with-lam}. In practice, it not only can make the bound computable, but also make it smoother, since arbitrarily small values of $\delta(\mbf{x})$ implies arbitrarily large values of $\hat{r}(\mbf{x})/\delta(\mbf{x})$. The latter should be avoided if \eqref{eq:w-cbound} is used as some optimization or selection criterion.

In this section, we study the impact of $\lambda$ on the bound's value on different data sets. In Figure \ref{fig:cbil-lams}, we display the results of all 20 experimental trials for \texttt{HAR}, \texttt{Isolet}, \texttt{Letter}, \texttt{MNIST} and \texttt{Fashion} when $\lambda\in[0.1, 0.2, \dots, 1]$. One can observe that when the bound is not penalized much (i.e., $\delta(\mbf{x})$ is far from 0), then the increase of $\lambda$ makes the bound looser, so $\lambda=0.1$ is the tightest choice. Exactly the opposite situation is observed when $\delta(\mbf{x})$ is small (trials 4 and 14 for \texttt{Letter}, most of trials for \texttt{Fashion}): higher values of $\lambda$ diminish the influence of hyperbolic weights $1/\delta(\mbf{x})$, so $\lambda=1$ leads to the tightest bound.

\begin{figure}[ht!]
    \centering
    \includegraphics[width=\textwidth]{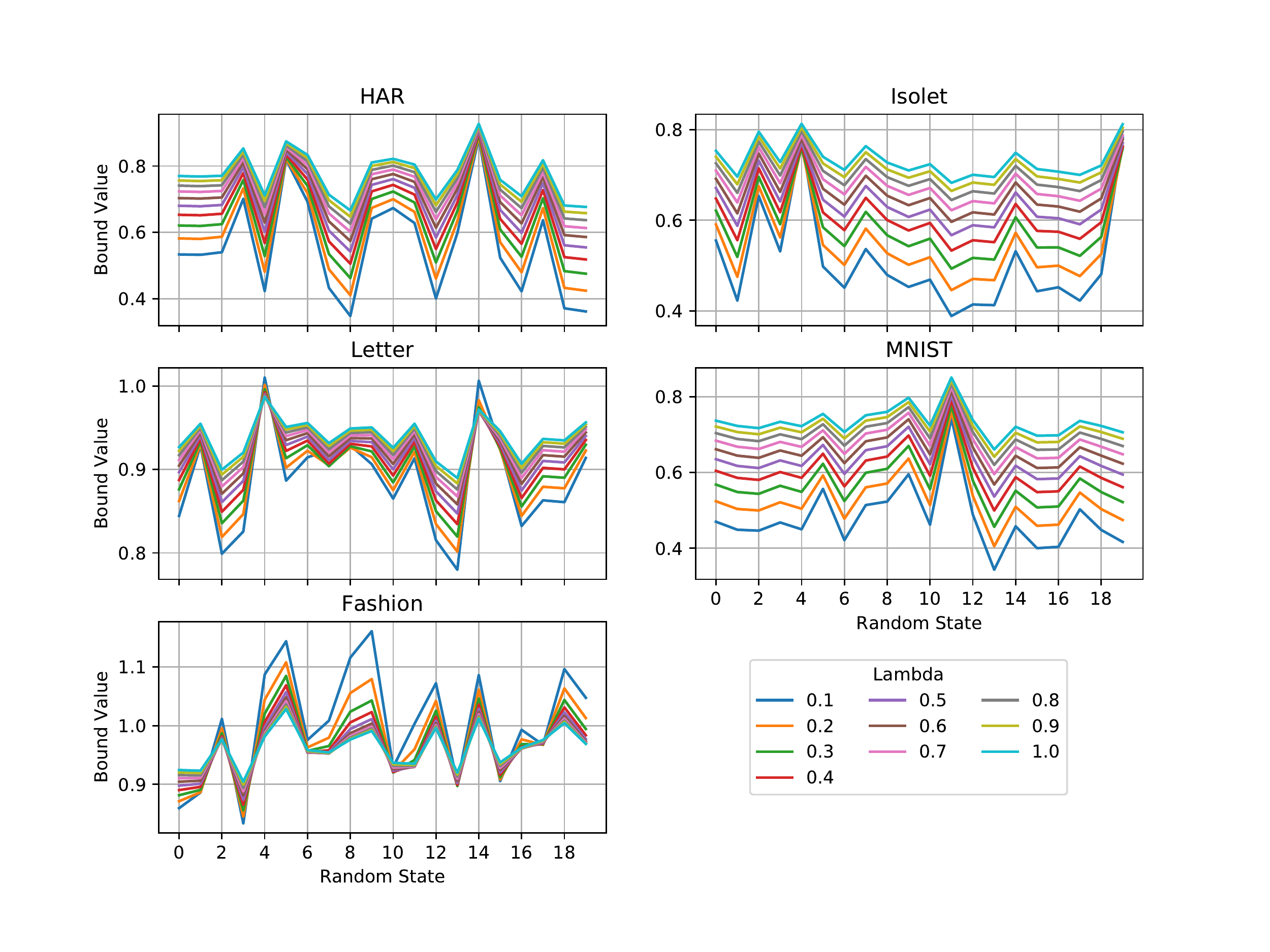}
        \caption{The value of \eqref{eq:w-cbound} with different $\lambda$ over 20 different labeled/unlabeled splits of 5 data sets.}
    \label{fig:cbil-lams}
\end{figure}

We also note that small $\delta(\mbf{x})$ not only makes the bound looser, but also leads to poor correlation with the true error. It can particularly be seen in Figure \ref{fig:fashion-cbound}, where we repeated the experiment done in Section \ref{sec:cbil-exp} for the \texttt{Fashion} data set when $\lambda=0$ and $\lambda=0.1$. It is clearly seen that with $\lambda=0.1$ the curve's shape becomes much more similar to the oracle C-bound. Eventually, in average, $\lambda$ can make the bound looser but better correlated with the true error, where the latter is more important for practical applications.



\begin{figure}[ht!]
    \centering
    \includegraphics[width=\textwidth]{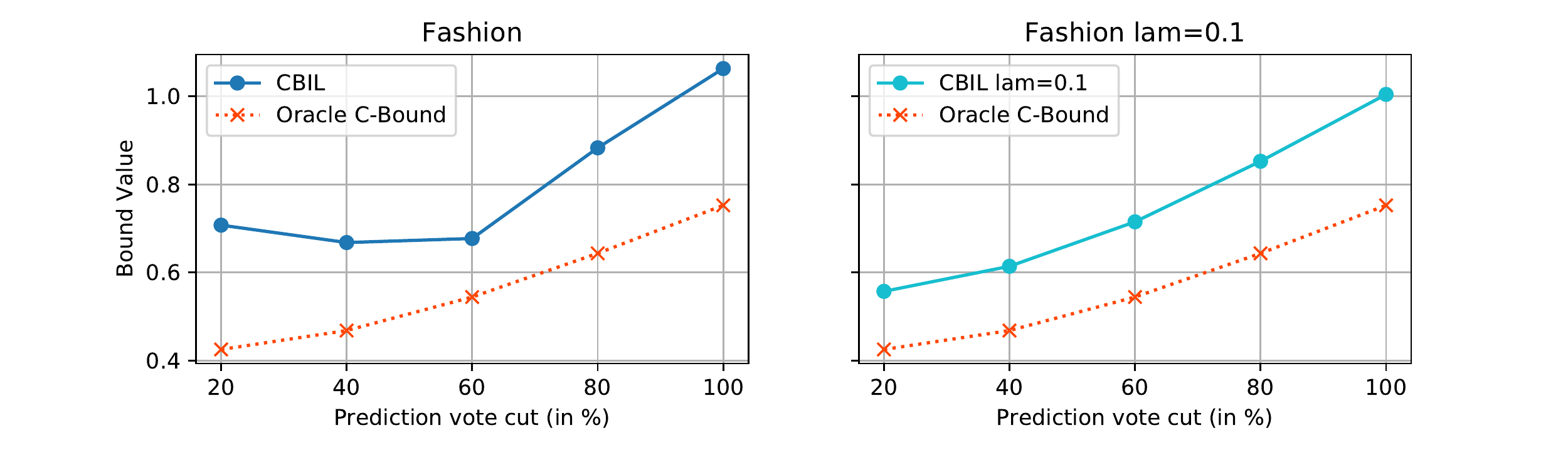}
        \caption{\eqref{eq:w-cbound} and Oracle C-Bound when varying the number of unlabeled examples used for evaluation on Fashion data set. We keep the most confident one (with respect to prediction vote) from $20\%$ to $100\%$.}
    \label{fig:fashion-cbound}
\end{figure}

\end{document}